\newtheorem{theorem}{Theorem}[section]
\newtheorem{assumption}[theorem]{Assumption}
\newtheorem{remark}[theorem]{Remark}
\newtheorem*{remark*}{Remark}
\newtheorem{definition}[theorem]{Definition}
\newtheorem{proposition}[theorem]{Proposition}
\newtheorem{lemma}[theorem]{Lemma}
\numberwithin{equation}{section}
\newcommand{\rmref}[1]{{\rm\ref{#1}}}
\newcommand{\wbar}{\widebar}
\newcommand{\rhoT}{\wbar \rho_T}
\newcommand{\GbarT}{\overline G_T}
\newcommand{\barX}{\wbar X}
\newcommand{\HH}{\mathbb{H}}
\newcommand{\rkhsH}{H_{\GbarT}}
\newcommand{\RadiusOmega}{R_\Omega}
\newcommand{\innerp}[2]{ #1\cdot #2 }
\newcommand{\abs}[1]{\big| #1 \big|}
\newcommand{\norm}[1]{\left\| #1 \right\|}
\newcommand{\real}{\mathbb{R}}
\newcommand{\mE}{\mathcal{E}}
\newcommand{\mH}{\mathcal{H}}
\newcommand{\mN}{\mathcal{N}}
\newcommand{\R}{\real}
\newcommand{\intkernel}{\phi}
\newcommand{\hypspace}{\mathcal{H}}
\newcommand{\E}{\mathbb{E}}
\newcommand{\grad}[1]{\nabla #1}
\newcommand{\argmin}[1]{\underset{#1}{\operatorname{arg}\operatorname{min}}\;}
\newcommand{\argmax}[1]{\underset{#1}{\operatorname{arg}\operatorname{max}}\;}
\newcommand{\supp}[1]{\text{supp}(#1)}
\newcommand{\brak}[1]{\left(#1\right)}      
\newcommand{\crl}[1]{\left\{#1\right\}}     
\newcommand{\rkhs}[1]{\left<\hspace{-1mm} \left< #1 \right>\hspace{-1mm} \right>}    
\newcommand{\normmm}[1]{{\left\vert\kern-0.25ex\left\vert\kern-0.25ex\left\vert #1 
    \right\vert\kern-0.25ex\right\vert\kern-0.25ex\right\vert}}
\newcommand{\longinprod}[1]{\left< \kern-0.25ex\left< \kern-0.25ex\left<#1 \right> \kern-0.25ex\right> \kern-0.25ex\right>}    
\newcommand{\WOT} {{1,\infty} }  
\newcommand{\WO}{{1,\infty} } 
\newcommand{\LOT}{{\infty} }
\newcommand{\LO}{{\infty} } 
\newcommand{\WBOT}{{2,\infty} } 
\DeclareMathAlphabet{\mathpzc}{OT1}{pzc}{m}{it}
\title{ Learning interaction kernels in mean-field equations\\ of 1st-order systems of interacting particles }
\author{Quanjun Lang, Fei Lu}
\date{}
\begin{document} 
\maketitle

\abstract{

We introduce a nonparametric algorithm to learn interaction kernels of mean-field equations for 1st-order systems of interacting particles. The data consist of discrete space-time observations of the solution. By least squares with regularization, the algorithm learns the kernel on data-adaptive hypothesis spaces efficiently. A key ingredient is a probabilistic error functional derived from the likelihood of the mean-field equation's diffusion process. 
The estimator converges, in a reproducing kernel Hilbert space and in an L2 space under an identifiability condition, at a rate optimal in the sense that it equals the numerical integrator's order. We demonstrate our algorithm on three typical examples: the opinion dynamics with a piecewise linear kernel, the granular media model with a quadratic kernel, and the aggregation-diffusion with a repulsive-attractive kernel. 
 }

\tableofcontents

\section{Introduction}
We study the inverse problem of estimating the radial interaction kernel $\phi$ of the mean-field equation
  \begin{equation}\label{eq:MFE}
   \begin{aligned}
  \partial_t u &= \nu  \Delta u + \grad\cdot [u  (K_\phi * u)], x\in \R^d, t>0,\\
 u(x,t) & \geq 0, \quad \int_{\R^d} u(x,t)dx = 1,\quad \forall x,t,
\end{aligned}
\end{equation}
from observations of a solution at a sparse mesh in space-time. For simplicity, we assume that the domain $\Omega := \bigcup_{t\in [0,T]} \supp{u(\cdot,t)}\subset \R^d$ is bounded with smooth boundary. Then  $   \partial_t u|_{\partial \Omega} =0, \grad_x u|_{\partial \Omega}=0 $. 
Here $\nu>0$ is a given viscosity constant and $K_\phi:\R^d\to\R^d$ is the gradient of the \emph{radial interaction potential} $\Phi$, whose derivative $\phi$ is called the \emph{interaction kernel}, 
\[ 
K_\phi(x) = \grad ( \Phi(|x|)) =\phi(\abs{x}) \frac{x}{\abs{x}}, \text{ with } \phi(r) =\Phi'(r).
\]
We denote  
$ K_\phi * u(x,t) = \int_{\Omega} K_\phi(x-y) u(y,t)dy$.
Since only the derivative of the potential $\Phi$ affects the equation, we assume, without lost of generality, that the potential satisfies $\Phi(0)=0$.

Equation \eqref{eq:MFE} is the mean-field limits of the 1st-order stochastic interacting particle system 
\begin{equation}\label{eq:sod}
  \frac{d}{dt}{X_t^{i}} =\frac{1}{N} \sum_{i'= 1}^N \intkernel(| X_t^{j} - X_t^{i} |) \frac{X_t^{j} - X_t^{i} }{|X_t^{j} - X_t^{i}|} + \sqrt{2\nu }dB_t^i, \quad \text{for $i = 1, \ldots, N$}
\end{equation}
when $N\to \infty$, 
where $X_t^i$ represents the i-th particle's position (or agent's opinion), and $B_t^i$ is a standard Brownian motion. Such systems arise in many disciplines: particles or molecules in microscopic models in statistical physics and quantum mechanics \cite{golse2016_DynamicsLarge} and in granular media \cite{malrieu2003_ConvergenceEquilibriumc}, cells \cite{keller1970_InitiationSlime,fournier2017_StochasticParticle,carrillo2019_AggregationdiffusionEquations} 
and neural networks  \cite{baladron2012_MeanfieldDescription} in biology, opinions of agents in social science \cite{motsch2014_HeterophiliousDynamicsa}, and in Monte Carlo sampling \cite{del2013mean}, to name just a few, and we refer to \cite{motsch2014_HeterophiliousDynamicsa,jabin2017_MeanFielda} for the considerable literature.

Motivated by these applications, there has been increasing interest in the inverse problem of estimating the interaction kernel (or the interaction potential) of the mean-field equation. However, except for ideal situations in physics, little information on the interaction kernel is available, which may vary largely from smooth functions in granular media \cite{cattiaux2007_ProbabilisticApproacha} to piece-wise constant function in opinion dynamics \cite{motsch2014_HeterophiliousDynamicsa} or singular kernel in the Keller-Segel model \cite{carrillo2019_AggregationdiffusionEquations}. Thus, it is crucial to develop new methods beyond parametric estimation (see e.g., \cite{fedele2013_InverseProblem}). Towards this direction, recent efforts \cite{bongini2017_InferringInteraction,LZTM19,LMT19,LMT20,zhong2020data} estimate the kernel  by nonparametric regression for systems with finitely many particles from multiple trajectories.  For large systems, data of trajectories of all particles are often unavailable, instead, it is practical to consider data consisting of a macroscopic concentration density of the particles, i.e., the solution of the mean-field equation.  
 
 We introduce a nonparametric scalable learning algorithm (see Algorithm \ref{alg:main}) to estimate the interaction kernel $\phi$ from data with a performance guarantee. The algorithm learns $\phi$ on a data-adaptive hypothesis space by least squares with regularization. A key ingredient is a probabilistic error functional derived from the likelihood of the diffusion process whose Fokker-Planck equation is the mean-field equation (see Theorem \ref{thm:costFn}). The error functional is quadratic, thus we can compute its minimizer by least squares. Furthermore, it does not require spatial derivatives, thus it is suitable for discrete data (see $\mE_{M,L}$ in \eqref{eq:costFn_data}). 

Our estimator converges as the space-time mesh size decreases, in a proper function space, at a rate optimal in the sense that it is almost the same as the order of the numerical integrator evaluating our error functional from data.  More precisely, with space dimension $d=1$, we consider data consisting of a solution observed on space-time mesh: $\{ u(x_m,t_l)\}_{m,l=1}^{M,L}$, where $x_m-x_{m-1}= \Delta x$ and $t_l-t_{l-1} =  \Delta t$. Denote $\hypspace$ a hypothesis space with dimension $n$ and denote $\widehat \phi_n$ the projection of $\phi$ on it. Our estimator $\widehat \phi_{n,M,L} $ in \eqref{eq:mle_nML}, based on Riemann sum approximation to integrals in the Error functional, converges as $(\Delta x,\Delta t)$ decreases (see Theorem \ref{thm:error_discreteTime}),
\[
 \| \widehat \phi_{n,M,L} - \widehat\phi_n\|_{\HH} \leq C(\Delta x  + \Delta t ),  
\]
where the function space $\HH$ is either a reproducing kernel Hilbert space or a weighted $L^2$ space, assuming suitable identifiability conditions and regularity on the solution. The order is the same as the order of the Riemann sum approximation, and it can be improved by using a high-order integrator. We further consider the optimal rate of convergence when $\Delta t=0$ and $\Delta x\to 0$, assuming that we can enlarge the hypothesis space $\hypspace$ to control the approximation error by $\norm{\widehat \phi_n- \phi}_\HH\lessapprox n^{-s}$ with $s\geq 1$. With an optimal $n$ for the trade-off between inference error and the approximation error,
\begin{equation}\label{err_decomposition}
\| \widehat \phi_{n,M,\infty} -\phi\|_{\HH} \leq \underbrace{ \| \widehat \phi_{n,M,\infty} - \widehat\phi_n\|_{\HH } }_{\text{inference error}} + \underbrace{\| \widehat \phi_{n} -\phi\|_{\HH} }_{\text{approximation error}}  \lessapprox (\Delta x)^{\alpha s/(s+1)},
\end{equation}
where $\alpha$ denotes the order of the numerical integrator (see Theorem \ref{thm:rate}). That is, we achieve a rate of convergence $(\Delta x)^{\alpha s/(s+1)}$, optimal in the sense that it is the same as the numerical integrator when the kernel is smooth (i.e., $s\to \infty$).

We demonstrate the efficiency of the algorithm on three typical examples: the opinion dynamics with a piecewise linear potential (Section \ref{sec:OD}), the granular media model with a cubic potential (Section \ref{sec:cubic}), and the aggregation-diffusion with a repulsive-attractive potential (Section \ref{sec:RA}). In each of these examples, our algorithm leads to accurate estimators that can reproduce highly accurate solutions and free energy. For the cubic potential, which is smooth, our estimator achieves the optimal rate of convergence. For non-smooth piecewise linear potential and the singular repulsive-attractive potential, our estimator converges at sub-optimal rates.  


The remainder of the paper is organized as follows. We present the learning algorithm in Section \ref{sec:InferAlg}, where we introduce the error functional and the estimator, discuss the choice of basis functions for the hypothesis space, provide practical guidance on regularization and selection of optimal dimension.  Section \ref{sec:Conv} studies the rate of convergence of the estimator when the space mesh size decreases, with the technical proofs postponed in Appendix \ref{appendixA}. Numerical examples in Sections \ref{sec:num} demonstrate the accuracy of our algorithm on three examples with different types of kernels.  We discuss the limitations this study and directions for future research in Section \ref{sec:conclusion}.

\paragraph{Notation} 
We will use the notations in Table \ref{tab:notation}. 
We denote by $\|\cdot\|_{\infty}$ and  $\|\cdot\|_{k,\infty}$ the $L^\infty$ norm and the $W^{k,\infty}$ norm, respectively, on the corresponding domains. For example,    $\|u\|_{\infty}$ and  $\|u\|_{1,\infty}$  denote the $L^\infty(\Omega\times [0,T])$ and $W^{1,\infty}(\Omega\times [0,T])$ norms, 
\begin{align*}
\norm{u}_\infty  = \sup_{x\in \Omega, t\in [0,T]} |u(x,t)|,\quad   \norm{u}_\WOT =  \norm{\nabla_{x,t}u}_\LOT + \norm{u}_\LOT. 
\end{align*}
Similarly,  $\|\phi\|_{\infty}$ and    $\|\phi\|_{k,\infty}$ denote  the $L^\infty(\supp{\rhoT})$ and $W^{k,\infty}(\supp{\rhoT} )$ norms, respectively. 
\begin{table}[!t] 
{\small
	\begin{center} 
		\caption{ \, Notations} \label{tab:notation}
		\begin{tabular}{ l  l }
		\toprule 
			Notation   &  Description \\  \hline
	                $\phi $ and $\Phi$    & true interaction kernel and potential, $\phi= \Phi'$\\
	                $\psi$ and $\Psi'$     & a generic interaction kernel and potential, $\psi= \Psi'$\\
	                $K_\psi (x)= \psi(|x|)\frac{x}{|x|} $    & interaction kernel with kernel $\psi$       \\
	                $x\cdot y $   & the inner product between vector $x,y$ \\ 
	                $\|\cdot\|_{\infty}$ and  $\|\cdot\|_{k,\infty}$  & the $L^\infty$ norm and $W^{k,\infty}$ norms, $k\geq 1$\\
	                 $L^2(\rhoT)$                                      & L2 space with $\rhoT$ in \eqref{eq:rhoavg} \\
                        $\mH= \mathrm{span}\crl{\phi_i}_{i=1}^n$ & hypothesis space with basis functions $\phi_i$ \\
                        $\mE_{M,L}(\psi)$   & error functional in \eqref{eq:costFn_data}, from data $\{ u(x_m,t_l)\}_{m,l=0}^{M,L}$ \\
                        $\widehat \phi_{n,M,L}$ & estimator: minimizer of $\mE_{M,L}$ on $\mH$ \\ 
			\bottomrule	
		\end{tabular}  
	\end{center}
}
\end{table}

\section{Inference of the interaction kernel}\label{sec:InferAlg}
We introduce an efficient scalable algorithm estimating the interaction kernel by least squares in a nonparametric fashion. The key is a probabilistic error functional, which is the expectation of the negative likelihood ratio of the diffusion process described by the mean-field equation. Our estimator, the minimizer of the error functional, is then an extension of the maximal likelihood estimator (MLE). Remarkably, we can compute the estimator and the error functional without using any spatial derivative of the solution, allowing us to recover the interaction kernel from sparse data. We also discuss the function space of learning, the choice of basis functions and selection of dimension for the hypothesis space, and regularization.  

\subsection{The error functional and estimator}


Suppose first that the data is a continuous space-time solution $u$ on $[0,T]$, we derive an error functional from the likelihood of the diffusion process $(\barX_t, t\in [0,T])$ described by the mean-field equation. More precisely, Eq.\eqref{eq:MFE} is the Fokker-Planck equation (or the Kolmogorov forward equation) of the following nonlinear stochastic differential equation 
\begin{equation} \label{eq:nSDE}
\left\{
\begin{aligned}
d\barX_t =& -K_\phi*u(\barX_t,t)dt + \sqrt{2\nu }dB_t,\\
\mathcal{L}(\barX_t) = &\, u(\cdot,t),
\end{aligned}
\right.
\end{equation}
for $t\geq 0$. Here $\mathcal{L}(\barX_t)$ denotes the probability density of $\barX_t$ if $u$ is a regular solution, or the probability measure of $\barX_t$ if  $u$ is a distribution solution, depending on the initial condition and the admissible set of the interaction kernel. In either case, we can write the convolution as 
\[
K_\phi*u(\barX_t,t) = \E[K_\phi(\barX_t-\barX_t')\mid \barX_t] = \E[\phi(\abs{\barX_t-\barX_t'}) \frac{\barX_t-\barX_t'}{\abs{\barX_t-\barX_t'}} \mid \barX_t],
\]
where $\barX_t'$ is an independent copy of $\barX_t$. 

We start from the ambient function space for the interaction kernel: $L^2(\rhoT)$, where $\rhoT$ is the average-in-time distribution of $|\barX_t'-\barX_t|$ (denoted by $\rho_t$) on $[0,T]$:  
 \begin{equation} \label{eq:rhoavg} 
  \rhoT(dr) : = \frac{1}{T} \int_0^T \rho_t(dr) dt, \quad \rho_t(dr) : =  \E[\delta(|\barX_t'-\barX_t|\in dr)].  
 \end{equation}
Note that $\rhoT$ depends on the initial distribution $u(\cdot,0)$ and the true interaction kernel $\phi$. We point out that the measure $\rhoT$ is different from the empirical measure of pairwise distances in particle systems \cite{LZTM19,LMT20,LLMTZ19}, because $\barX_t$ and $\barX_t'$ are independent copies and are no longer interacting particles. However, in view of inference, the high probability region of $\rhoT$ is where $\abs{\barX_t-\barX_t'}$ explores the interaction kernel the most, as such, the natural function space of inference is  $L^2(\rhoT)$. Also, the space $L^2(\rhoT)$ ensures that our error functional below is well-defined. 

\begin{theorem}[Error functional]\label{thm:costFn}
Let $u$ be a solution to \eqref{eq:MFE} on [0,T] with interaction kernel $\phi$.  Let $\psi \in L^2(\rhoT) $ with $\rhoT$ in \eqref{eq:rhoavg}, $\Psi(r)=\int_0^r \psi(s)ds$ and $K_\psi(x) = \grad \Psi(|x|)$. The error functional 
\begin{align}\label{eq:costFn}
  \mE(\psi)   : = & \frac{1}{T}\int_0^T \int_{\R^d} \left[  \abs{K_\psi*u}^2 u 
+  2\partial_t u (\Psi *u)    + 2\nu \grad u \cdot (K_{\psi} *u) \right ] dx\ dt  
\end{align}
is the expectation of the average-in time negative log-likelihood of the process $\barX_t$ in \eqref{eq:nSDE}. Furthermore, if $\psi\in W^{1,\infty}$, we can replace the integrand $\grad u \cdot (K_{\psi} *u)$ by $-u (\Delta \Psi *u)$. 
\end{theorem}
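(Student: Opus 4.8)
The plan is to recognise $\mE(\psi)$ as the expected negative log‑likelihood of the diffusion $\barX$ in \eqref{eq:nSDE} when the convolution drift is kept frozen at the observed solution $u$ and the true kernel $\phi$ is replaced by the candidate $\psi$, and then to use the mean‑field equation \eqref{eq:MFE} to eliminate the one term that still depends on $\phi$. Concretely: fix $u$, and for an admissible $\psi$ let $\mathbb{P}_\psi$ be the law on path space $C([0,T];\R^d)$ of the (ordinary, time‑inhomogeneous) diffusion $dY_t=-K_\psi*u(Y_t,t)\,dt+\sqrt{2\nu}\,dB_t$ with $Y_0\sim u(\cdot,0)$; then $\mathbb{P}_\phi$ is the law of $\barX_{[0,T]}$. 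Taking as reference the driftless law $\mathbb{P}_0$ of $\sqrt{2\nu}\,B_t$ with the same initial law $u(\cdot,0)$ (which is data, hence the same for every $\psi$), Girsanov's theorem gives
\begin{equation*}
-\log\frac{d\mathbb{P}_\psi}{d\mathbb{P}_0}\big(\barX_{[0,T]}\big)=\frac{1}{2\nu}\int_0^T K_\psi*u(\barX_t,t)\cdot d\barX_t+\frac{1}{4\nu}\int_0^T\abs{K_\psi*u(\barX_t,t)}^2\,dt ,
\end{equation*}
the negative log‑likelihood of the observed path as a function of $\psi$, whose minimiser over a hypothesis space is by definition the MLE. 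The energy bound $\int_0^T\E\,\abs{K_\psi*u(\barX_t,t)}^2\,dt\le T\,\norm{\psi}_{L^2(\rhoT)}^2$ (Jensen on the conditional expectation together with the definition \eqref{eq:rhoavg} of $\rhoT$) — which is exactly the integrability that makes \eqref{eq:costFn} finite — shows the stochastic integral is a genuine $L^2$ martingale.

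Next, take the expectation over $\barX_{[0,T]}\sim\mathbb{P}_\phi$. Writing $d\barX_t=-K_\phi*u(\barX_t,t)\,dt+\sqrt{2\nu}\,dB_t$, the martingale part of the Itô integral has zero mean, and using $\mathcal{L}(\barX_t)=u(\cdot,t)$ to turn the spatial expectations into integrals against $u$,
\begin{equation*}
\E\Big[-\log\tfrac{d\mathbb{P}_\psi}{d\mathbb{P}_0}\Big]=\frac{1}{4\nu}\int_0^T\!\!\int_{\R^d}\Big(\abs{K_\psi*u}^2u-2\,(K_\psi*u)\cdot(K_\phi*u)\,u\Big)\,dx\,dt .
\end{equation*}
The cross term still carries the unknown $\phi$; it is removed by testing \eqref{eq:MFE} against $\Psi*u$ — here, as in \eqref{eq:costFn}, $\Psi*u$ abbreviates $\Psi(\abs{\cdot})*u$, so that $\nabla(\Psi*u)=K_\psi*u$ — and integrating by parts in $x$, all boundary terms vanishing because $\nabla_x u|_{\partial\Omega}=0$ and $\partial_t u|_{\partial\Omega}=0$. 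From $\partial_t u=\nu\Delta u+\nabla\cdot[u(K_\phi*u)]$ one gets $\int\partial_t u\,(\Psi*u)=-\nu\int\nabla u\cdot(K_\psi*u)-\int u(K_\phi*u)\cdot(K_\psi*u)$, i.e. $-2\int(K_\psi*u)\cdot(K_\phi*u)\,u=\int\big(2\,\partial_t u\,(\Psi*u)+2\nu\,\nabla u\cdot(K_\psi*u)\big)$. Substituting this identity into the display above produces exactly the integrand of \eqref{eq:costFn}, so $\E[-\log\frac{d\mathbb{P}_\psi}{d\mathbb{P}_0}]=\frac{T}{4\nu}\,\mE(\psi)$; thus $\mE(\psi)$ is the expected average‑in‑time negative log‑likelihood up to the harmless positive factor $4\nu$ (and with no additive constant, since everything has been rewritten — if one prefers to compare with $\mathbb{P}_\phi$ instead of $\mathbb{P}_0$, one picks up the $\psi$‑independent constant $\frac1{4\nu T}\int_0^T\E\abs{K_\phi*u}^2$, which is equally irrelevant for the minimiser).

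For the last sentence, if moreover $\psi\in W^{1,\infty}$ then $K_\psi*u=\nabla(\Psi*u)$ is Lipschitz with $\nabla\cdot(K_\psi*u)=(\Delta\Psi)*u\in L^\infty$, so one further integration by parts (again using $\nabla_x u|_{\partial\Omega}=0$) gives $\int\nabla u\cdot(K_\psi*u)=-\int u\,(\Delta\Psi*u)$, which is the stated replacement of $\nabla u\cdot(K_\psi*u)$ by $-u(\Delta\Psi*u)$ in the integrand.

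The two substantive points are: (i) the algebraic identity obtained by testing the PDE against $\Psi*u$, which is what converts the $\phi$‑dependent cross term into data‑only quantities and hence makes $\mE$ — and later its discretisation from the sparse data — computable without knowing $\phi$; and (ii) the rigorous application of Girsanov's theorem, i.e. checking that $d\mathbb{P}_\psi/d\mathbb{P}_0$ is a bona fide probability density (the exponential a true martingale, e.g. via Novikov's condition), which requires $K_\psi*u$ to be suitably controlled — this is where the regularity assumptions on $u$ and the choice of $L^2(\rhoT)$ as the ambient kernel space are used. When $u$ is only a distributional solution, the manipulations of the second paragraph are read in the weak sense against the test function $\Psi*u$, with no new idea needed; the remaining steps are routine.
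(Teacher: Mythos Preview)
Your proposal is correct and follows essentially the same two-step route as the paper: Girsanov to obtain the expected negative log-likelihood with a cross term involving the unknown $\phi$, then testing the PDE \eqref{eq:MFE} against $\Psi*u$ and integrating by parts to convert that cross term into the $\partial_t u$ and $\nabla u$ terms of \eqref{eq:costFn}. The only cosmetic differences are that you take the driftless law $\mathbb{P}_0$ as reference (the paper takes $\mathbb{P}_\phi$, which as you note changes things by a $\psi$-independent constant) and that you track the Girsanov factor $1/(4\nu)$ explicitly, obtaining $\mE(\psi)$ up to the harmless positive multiplier $4\nu$ that the paper absorbs into its normalization of the likelihood ratio.
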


\begin{proof} We denote by $\mathcal{P}_\phi$ the law of the process $(\barX_t)$ on the path space with initial condition $\barX_0\sim u(\cdot,0)$, with the convention that $\mathcal{P}_0$ denotes the Winner measure. Then, the negative log-likelihood ratio of a trajectory $\barX_{[0,T]}$ from $\mathcal{P}_\psi$ relative to $\mathcal{P}_\phi$ is  
(see e.g., \cite[Section 1.1.4]{Kut04} or \cite[Section 3.5]{KS98}) 
\begin{align}\label{lkhd_nSDE}
 \mathcal{E}_{\barX_{[0,T]}}(\psi)= - \log \frac{d \mathcal{P}_\psi }{d\mathcal{P}_\phi}(\barX_{[0,T]}) =  \frac{1}{T}\int_{0}^T \left(| K_\psi *u (\barX_t)|^2 dt - 2 \langle K_\psi*u (\barX_t), d\barX_t \rangle \right),  
\end{align}
where $\frac{d \mathcal{P}_\psi }{d\mathcal{P}_\phi}$ is the Radon-Nikodym derivative. 

Taking expectation and noting that $d\barX_t =  K_\phi*u(\barX_t)dt + \sqrt{2\nu }dB_t$,   
we obtain
\begin{align}\label{lkhd_nSDE_expn}
\E \mathcal{E}_{\barX_{[0,T]}}(\psi)=& \frac{1}{T}\int_{0}^T \E \left[ |K_\psi *u  (\barX_t)|^2 - 2 \innerp{ K_\psi*u (\barX_t)} {K_\phi*u (\barX_t) } \right] dt.  
\notag \\ = & \frac{1}{T}\int_0^T \int_{\R^d}  \left[ \abs{K_\psi*u}^2 u -2  u \innerp{( K_\phi*u)}{ ( K_\psi*u)} \right] \ dx\ dt
\end{align}
in which we used that the fact that for any $\psi,\phi \in L^2(\rhoT)$  (recalling that $u(\cdot,t)$ is the law of $\barX_t$), 
\begin{align*}
\E[  \innerp{K_\psi*u(\barX_t)}{K_\phi*u(\barX_t)} ]  =&  \int_{\R^d}   u \innerp{( K_\phi*u)}{ ( K_\psi*u)}  \ dx. 
\end{align*}

Noticing that $K_\psi *u  = \nabla \Psi*u = \nabla (\Psi *u) $ for any  $\psi =\Psi'$ and using \eqref{eq:MFE}, we have
\begin{align}
 & \int_0^T \int_{\R^d} u \innerp{(K_\phi*u)}{  (K_\psi*u)} dx\ dt\nonumber=  \int_0^T \int_{\R^d} u \innerp{(K_\phi*u)}{ ( \nabla \Psi *u) } dx\ dt \notag\\
 = & - \int_0^T \int_{\R^d} (\nabla\cdot [ u (K_\phi*u)] ) \Psi *u dx\ dt \quad (\text{ by integration by parts})  \notag \\
  = & -\int_0^T \int_{\R^d} (\partial_t u -\nu  \Delta u) (\Psi *u) dx\ dt \quad (\text{ by Equation  \eqref{eq:MFE}}) \notag \\
  =& -\int_0^T \int_{\R^d} \partial_t u (\Psi *u)  dx\ dt  
  - \nu \int_0^T \int_{\R^d} \nabla u \cdot (K_{\psi} *u) dx\ dt, \, (\text{integration by parts}). \label{eq:IBP}
  \end{align}

Combining \eqref{lkhd_nSDE_expn}, \eqref{eq:IBP} and \eqref{eq:costFn}, we obtain 
\begin{equation}\label{eq:costFn_phi}
\E \mathcal{E}_{\barX_{[0,T]}}(\psi) =  \frac{1}{T}\int_0^T \int_{\R^d}  \left[ \abs{K_\psi*u}^2 u -2  u \innerp{( K_\phi*u)}{ ( K_\psi*u)} \right] \ dx\ dt 
= \mE(\psi). 
\end{equation}

At last, from integration by parts, we can replace the integrand $\grad u \cdot (K_{\psi} *u)$ by $-u (\Delta \Psi *u)$ if $\psi \in W^{1, \infty}$.
\end{proof}

  To simplify the notation, we introduce the following bilinear form: for any $\phi, \psi \in \mathcal{H}$,  
  \begin{align}
  \rkhs{\phi, \psi}_{\GbarT}
: = & \frac{1}{T}\int_0^T \int_{\R^d} \innerp{(K_{\phi}*u)}{ (K_{\psi}*u)}   u(x,t)dx\ dt \nonumber  \\
= & \frac{1}{T} \int_0^T \int_{\R^d} \int_{\R^d} K_{\phi}(y)  \cdot
K_{\psi}(z) \int_{\R^d} u(x-y, t) u(x-z, t)   u(x,t) dx dydz \ dt \nonumber \\  
= &\int_{\R^+} \int_{\R^+} \phi(r)\psi(s)  \ \overline G_T(r,s)  \ dr\ ds,  \label{eq:bilinearF}
\end{align}
where the kernel $\GbarT(r,s)$, obtained by a change of variables to polar coordinates, is  
\begin{align}\label{eq:GbarT}
\GbarT(r,s) =\frac{1}{T} \int_0^T \int_{\mathbb{S}^{d-1}} \int_{\mathbb{S}^{d-1}} \int_{\R^d} \innerp{\xi}{\eta} \,  (rs)^{d-1}  u(x-r\xi,t)u(x-s\eta, t)u(x,t) dxd\xi d\eta \ dt.
\end{align}
Here  $\mathbb{S}^{d-1}$ denotes the unit sphere in $\R^d$. We prove in \cite{LangLuWang20} that $\GbarT$ is symmetric positive semi-positive definite. Thus, $ \rkhs{\cdot,\cdot}_{\GbarT}$ is the inner product of the reproducing kernel Hilbert space (RKHS), denoted by $H_{\GbarT}$ with $\GbarT$ as reproducing kernel.

Then, we can write the error functional, in view of \eqref{eq:costFn_phi}, as 
\begin{align}\label{eq:cosrFn_bilinear}
  \mE(\psi) = \rkhs{\psi, \psi}_{\GbarT}  -2\rkhs{\psi, \phi}_{\GbarT}= \rkhs{\psi-\phi,\psi-\phi}_{\GbarT} -\rkhs{\phi,\phi}_{\GbarT}. 
\end{align}
That is, the error functional is the square of the RKHS norm of $\psi-\phi$, minus a constant that is the square of the RKHS norm of the true kernel. Thus, the convergence of the error functional leads to convergence of the estimator.  Furthermore, it is quadratic, so we can compute its minimizer on a finite-dimensional hypothesis space by least squares. 

\begin{theorem}[Estimator]\label{thm:MLE}
For any space $\mH = \mathrm{span}\crl{\phi_i}_{i=1}^n \subset L^2(\rhoT)$ such that the normal matrix $A$ in \eqref{Amat} is invertible, the unique minimizer of the error functional $\mE$ on $\mH$ is given by 
\begin{equation} \label{eq:mle_Hn}
\widehat\phi_n = \sum_{i = 1}^n \widehat{c_i} \phi_i,  \text{ with } \widehat{c} = A^{-1} b.
\end{equation}
where the normal matrix $A$ and vector $b$ are given by
\begin{align}
  A_{ij} &= \rkhs{\phi_i, \phi_j}_{\GbarT} =  \frac{1}{T}  \int_0^T \int_{\R^d} 
  (K_{\phi_i}*u)\cdot (K_{\phi_j}*u)   u(x,t)dx\ dt,  \label{Amat} \\
  b_i &= \rkhs{\phi, \phi_i}_{\GbarT} 
  = -\frac{1}{T}  \int_0^T \int_{\R^d} \left[ \partial_t u \, \Phi_i *u  - \nu \grad u \cdot ( K_{\phi_i} *u)\right] dx\ dt, \label{b_best} 
\end{align}
where $\Phi_i(r) = \int_0^r \phi_i(s)ds$. Again, we can replace $-\grad u \cdot ( K_{\phi_i} *u)$  by $u ((\grad\cdot K_{\phi_i}) *u)$ if $\phi_i\in W^{1,\infty}$. 
\end{theorem}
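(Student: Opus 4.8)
The plan is to turn the minimization of $\mE$ over the finite-dimensional space $\mH$ into an unconstrained minimization of a strictly convex quadratic on $\R^n$, and then read the minimizer off the normal equations. Parametrize $\psi\in\mH$ by its coordinates, $\psi=\sum_{i=1}^n c_i\phi_i$ with $c=(c_1,\dots,c_n)^\top\in\R^n$. By bilinearity of $\rkhs{\cdot,\cdot}_{\GbarT}$, the representation \eqref{eq:cosrFn_bilinear} of the error functional becomes
\begin{align*}
\mE(\psi) &= \sum_{i,j=1}^n c_i c_j\,\rkhs{\phi_i,\phi_j}_{\GbarT} - 2\sum_{i=1}^n c_i\,\rkhs{\phi_i,\phi}_{\GbarT} \\
&= c^\top A c - 2 c^\top b,
\end{align*}
with $A_{ij}=\rkhs{\phi_i,\phi_j}_{\GbarT}$ and $b_i=\rkhs{\phi_i,\phi}_{\GbarT}$. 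Since $\GbarT$ is symmetric positive semi-definite (recalled above), $A$ is the Gram matrix of $\{\phi_i\}_{i=1}^n$ for $\rkhs{\cdot,\cdot}_{\GbarT}$ and is therefore symmetric positive semi-definite; the hypothesis that $A$ is invertible upgrades this to positive definiteness. Completing the square, $c^\top A c - 2 c^\top b = (c-A^{-1}b)^\top A\,(c-A^{-1}b) - b^\top A^{-1} b$, which, as $A\succ 0$, is minimized if and only if $c=A^{-1}b$, and this minimizer is unique. This yields \eqref{eq:mle_Hn}.

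It then remains to identify $A$ and $b$ with the stated integral formulas. Formula \eqref{Amat} for $A_{ij}$ is exactly the first line of the definition \eqref{eq:bilinearF} evaluated at $(\phi_i,\phi_j)$. For $b$, I would use the symmetry $\rkhs{\phi_i,\phi}_{\GbarT}=\rkhs{\phi,\phi_i}_{\GbarT}$ and then invoke the integration-by-parts identity \eqref{eq:IBP} from the proof of Theorem \ref{thm:costFn} with $\psi=\phi_i$, $\Psi=\Phi_i$ (equivalently, $b$ is the linear-in-$\psi$ part of the error functional \eqref{eq:costFn}); this is where Equation \eqref{eq:MFE} and the vanishing boundary data $u|_{\partial\Omega}=0$, $\grad_x u|_{\partial\Omega}=0$ are used, to discard the boundary terms in the two integrations by parts. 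This gives \eqref{b_best}. Finally, the replacement of $-\grad u\cdot(K_{\phi_i}*u)$ by $u\,((\grad\cdot K_{\phi_i})*u)$ when $\phi_i\in W^{1,\infty}$ is one further integration by parts moving the gradient off $u$, again using the vanishing boundary data, together with $\grad\cdot(K_{\phi_i}*u)=(\grad\cdot K_{\phi_i})*u=(\Delta\Phi_i)*u$.

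There is no deep obstacle: the statement is simply the least-squares normal equations for the quadratic error functional, and almost all of the real content already lives in Theorem \ref{thm:costFn} and in the RKHS structure set up around \eqref{eq:bilinearF}. The one genuinely load-bearing point is that $A\succeq 0$ — so that invertibility of $A$ forces \emph{strict} convexity, hence a \emph{unique global} minimizer rather than merely a critical point — which relies on positive semi-definiteness of $\GbarT$. The remainder is routine bookkeeping: one checks that $\psi\in\mH\subset L^2(\rhoT)$ together with the assumed regularity and boundary behaviour of $u$ make every integrand in \eqref{eq:costFn}, \eqref{Amat} and \eqref{b_best} integrable and every boundary term vanish, so that the manipulations above are legitimate.
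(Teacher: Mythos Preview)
Your proposal is correct and follows essentially the same route as the paper: parametrize $\psi\in\mH$ by its coefficient vector, use the bilinear form \eqref{eq:cosrFn_bilinear} to write $\mE(\psi)=c^\top A c-2b^\top c$, derive the formula for $b_i$ via the integration-by-parts identity \eqref{eq:IBP}, and read off the minimizer. Your explicit justification of uniqueness via positive semi-definiteness of $A$ plus the invertibility hypothesis is slightly more careful than the paper, which simply asserts that the quadratic has the explicit minimizer \eqref{eq:mle_Hn}.
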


\begin{remark} \label{rmk:Phi(0)}
We set $\Phi_i(0)=0$ for the anti-derivative of $\phi_i$ for simplicity. In general, the constant $\Phi_i(0)$ does not affect the integral $ \int_0^T \int_{\R^d} \partial_t u \, \Phi_i *u dx dt $ in \eqref{b_best}, because  for any constant $c$, we have $c*u =c$ and 
$
\int_0^T \int_{\R^d}  \partial_t u c dxdt =c  \int_{\R^d} u(x, 0) - u(x, T) dt = c-c = 0
$. 
\end{remark}

\begin{proof}
Recall that with the bilinear form \eqref{eq:bilinearF}, the error functional can be written as \eqref{eq:cosrFn_bilinear}. Denote for each $\psi\in \mH$ by
$\psi = \sum_{i = 1}^n c_i \phi_i$.
We can now write the error functional on $ \mH$ as, 
\begin{align}\label{errfcnl_Ab}
\mE(\psi) = \mE(c)&= c^\top A c - 2b^\top c,
\end{align}
where $A$ is given by \eqref{Amat} and $b$ are given by
\begin{align*}
  b_i &= \rkhs{\phi, \phi_i}_{\GbarT} = \frac{1}{T} \int_0^T \int_{\R^d} (K_\phi *u)\cdot (K_{\phi_i} *u) u(x,t)dx\ dt \notag\\
  &= -\frac{1}{T} \int_0^T \int_{\R^d} (\partial_t u - \nu  \Delta u) (\Phi_i *u) dx\ dt\notag \\
  &= -\frac{1}{T}  \int_0^T \int_{\R^d} \partial_t u (\Phi_i *u)  dx\ dt
  -\nu  \frac{1}{T} \int_0^T \int_{\R^d} \nabla u \cdot (K_{\phi_i} *u) dx\ dt ,
\end{align*}
where in the last equality, we used integration by parts to get rid of $\nu \Delta u$. Applying integration by part again to $ \int_{\R^d} \nabla u \cdot (K_{\phi_i} *u)$ and note that $\nabla (K_{\phi_i} *u) = (\grad\cdot K_{\phi_i}) *u$, we obtain \eqref{b_best}. 

Since the error functional is quadratic, the minimizer can be given explicitly by \eqref{eq:mle_Hn}. 
\end{proof}

\begin{remark} 
The normal matrix's invertibility depends on the basis functions $\crl{\phi_i}_{i=1}^n$ of $\hypspace$. It is the identity matrix when the basis functions are orthonormal in the RKHS of $\GbarT$. When the basis functions are orthonormal in $L^2(\rhoT)$, its smallest eigenvalue is the smallest eigenvalue of the integral operator with kernel $\GbarT$ on $\hypspace$ (see Proposition {\rm\ref{prop:A_coercivity}}). Thus, to make the normal matrix invertible, we need a coercivity condition in Definition {\rm\ref{def_coercivty}}. 
\end{remark}
\begin{remark}[The PDE discrepancy error functional] Our error functional in \eqref{eq:costFn} has two advantages over the PDE discrepancy error functional 
\begin{equation*}
\mE_0(\psi) 
= \int_0^T \int_{\R^d} \abs{\nabla.(u(K_\psi*u)) - g}^2 dx\ dt.
\end{equation*}
where $g = \partial_t u  - \nu \Delta u $. 
First, it requires the derivatives $\grad u$ and $\Delta u$, because the integration by parts does not apply. Second,  our error functional makes sufficient use that $u(\cdot,t)$ is a probability density so that its components are expectations, allowing for Monte Carlo approximations. 
\end{remark}

\paragraph{Estimator from discrete data} 
When data are discrete in space-time, we approximate the integrals in the estimator and the error functional by numerical integrators. For simplicity, we consider only data on a regular mesh for $d=1$ and use the Euler scheme. In practice, we could use higher-order numerical methods for the integration and convolution, for instance, the trapezoid method for the integrals and Fourier transform for the convolution. Note also that these integrations are expectations, so in general, particularly for high dimensional cases, the data can also be independent samples of the distribution, and we approximate the integrations by the empirical mean. 

Suppose that the data are 
$ \{ u(x_m,t_l)\}_{m,l=1}^{M,L}, \text{ with }  t_l = l \Delta t$ for $L= T/\Delta t$ and with $\{x_m\}_{m=1}^M$ being a uniform mesh of $ \Omega$ with length/area $\Delta x$.

From these data, we approximate all the integrals by Euler scheme. We approximate $\rhoT$ in \eqref{eq:rhoavg} by its the empirical measure:
\begin{align}\label{rhoAppr} 
\rho_L^M(dr) = \frac{1}{L}\sum_{l=1}^L  \sum_{m,m'=1}^{M,M}u(x_m,t_l) u(x_{m'},t_l)  \delta_{|x_m-x_{m'}|}(r)dr. 
\end{align}
With  basis functions $\{\phi_i\}$ and $\psi= \sum_{i=1}^n c_i\phi_i$, we approximate the error functional in \eqref{errfcnl_Ab} by  
\begin{align}\label{eq:costFn_data}
\mE_{M,L}(\psi)= \mE_{M,L} (c)=  c^\top A_{M,L} c - 2b_{M,L}^\top c,
\end{align}
where the normal matrix $ A_{M,L}$ and vector $ b_{M,L}$, approximating $A$ and $b$ in \eqref{Amat}-\eqref{b_best}, are 
\begin{align}
  A_{n, M,L}^{i,j} & := \frac{1}{L} \sum_{l=1}^L \sum_{m=1}^M  \left[\left( P_{n, M, L}^i \cdot P_{n, M, L}^j  \right) u \right](x_m,t_l) \Delta x , \label{AnML}\\
  b_{n, M,L}^{i} & :=    -  \frac{1}{L} \sum_{l=1}^L \sum_{m=1}^M  \left[  \widehat{\partial_t u} \ Q_{n, M, L}^i  + \nu u R_{n, M, L}^i \right] (x_m,t_l)  \Delta x.\label{bnML}
\end{align}
Here $P_{n, M, L}^i$, $Q_{n, M, L}^i$ and $R_{n, M, L}^i$ are Riemann sum approximations of $K_{\phi_i}*u$, $\Phi_i*u$ and $\nabla K_{\phi_i}*u$, respectively, and $  \widehat{\partial_t u}$ is the finite difference approximation of $\partial_t u$ 
\begin{equation}\label{eq:PQ}
\begin{aligned}
 P_{n, M, L}^i(x,t) &: =   \sum_{m=1}^M K_{\phi_i}(x_m)u(x-x_m, t) \Delta x, \\
 Q_{n, M, L}^i(x,t) &: =   \sum_{m=1}^M \Phi_i(x_m)u(x-x_m, t) \Delta x, \\
  R_{n, M, L}^i(x,t) &: =   \sum_{m=1}^M \grad\cdot K_{\phi_i} (x_m)u(x-x_m, t) \Delta x ,\\
  \widehat{\partial_t u}(x,t) &:=\frac{1}{\Delta t} \sum_{l=1}^L [ u(x,t_{l})-u(x,t_{l-1})  ] \mathbf{1}_{(t_{l-1},t_{l}]}(t).
\end{aligned}
\end{equation}
A few remarks on the numerical aspects: (1) one can use high-order numerical integrators to increase the accuracy of the spatial integrals; (2) we computed $ R_{n, M, L}^i(x,t) $ assuming that the basis functions $\{\phi_i\}$ are in $W^{1,\infty}$. If $\{\phi_i\}$ are not differentiable, we can use $\grad u$ as in \eqref{b_best}; (3) in practice, we use zero padding for $u$ by setting $u(x_i,t) = 0$ if $x_i\in \partial \Omega$; (4) also, we normalize the vector $\{u(x_m,t)\}_{m=1}^M$ so that $\sum_{m=1}^M \Delta x  u(x_m,t) = 1$ for each $t$. 
This ensures that $  \sum_{m=1}^M \left[ \widehat{\partial_t u} \ Q_{n, M, L}^i \right] (x_m,t_l) $ does not depend on the constant $\Phi(0)$ in the antiderivative, as discussed in Remark \ref{rmk:Phi(0)}.

Correspondingly, the estimator is 
\begin{align}\label{eq:mle_nML}
\widehat \phi_{n,M,L} = \sum_{i = 1}^n \widehat{c}_{n,M,L}^i \phi_i,\quad \text{ with }  \widehat{c}_{n,M,L}= A_{n,M,L}^{-1} b_{n,M,L}. 
\end{align}

\subsection{Basis functions for the hypothesis space}\label{sec:basisFun}
We consider two classes of basis function for the hypothesis space $\hypspace$: the B-spline piecewise polynomials whose knots are uniform partition of $\rhoT$'s support; 
the RKHS basis consisting of eigenfunctions of the integral operator with kernel $\GbarT$. The B-splines are universal local basis, while the RKHS basis are global basis adaptive to data.    

\paragraph{B-spline basis functions} B-spline is a class of piecewise polynomials, and is capable of representing the local information of the interaction kernel. Here we review briefly the recurrence definition and properties of the B-splines, for more details we refer to \cite[Chapter 2]{piegl1997_NURBSBook} and \cite{lyche2018_FoundationsSpline}.    

Given a nondecreasing sequence of real numbers $\{r_0,r_1,\ldots, r_m\}$ (called knots), the B-spline basis functions of degree $p$, denoted by $\crl{N_{i,p}}_{i=0}^{m-1}$, is defined recursively as
\begin{equation}\label{eq:B_spline}
\begin{aligned}
  &N_{i,0}(r) = 
  \left\{
    \begin{array}{lr}
      1,\ &r_i \leq r < r_{i+1},\\
      0,\ &otherwise,
    \end{array}
  \right.\\
  &N_{i,p}(r) = \frac{r - r_i}{r_{i + p} - r_i} N_{i, p-1}(r) + \frac{r_{i+p+1} - r}{r_{i + p + 1} - r_{i + 1}}N_{i + 1, p- 1}(r).
\end{aligned}
\end{equation}
Each B-spline basis function $N_{i,p}$ is a nonnegative piecewise polynomial of degree $k$, locally supported on $[r_i,r_{i+p+1}]$, and it is $p-k$ times continuously differentiable at a knot, where $k$ is the multiplicity of the knot. Hence, continuity increases when the degree increases, and continuity decreases when knot multiplicity increases. Also, it satisfies partition unity: for each $r\in [r_i,r_{i+1}]$, $\sum_{j} N_{j,p}(s)  = \sum_{j=i-p}^i N_{j,p}(r) =1$. This basis is also called the balanced B-spline.

 For a function $f\in W^{k,\infty}$, denoting $f_\mH$ its projection to the linear space $\mH$ of B-splines with degree $p\geq k$, and denoting $D^{(k)}f$ denotes the $k$-th order derivative, we have  \cite[p.45]{lyche2018_FoundationsSpline}
\begin{equation}\label{eq:splineOrder} 
\|f-f_\mH\|_{\infty} \leq C_p h^{k} \|D^{(k)} f\|_\infty,
\end{equation}
 where $C_p$ is a constant depending on $p$, and $h=\max_{i} |r_i-r_{i-1}|$.    

We will set the knots to be the m-quantile (so that the m intervals have uniform probability) partition of the support of $\rhoT$, with augmented knots at the ends of the support interval, say $[R_{min}, R_{max}]$,  
\[
R_{min} = r_{-p+1} = \cdots = r_{0} \leq r_1 \leq \cdots \leq r_{m}
= \cdots = r_{p+m-1}= R_{min}.
\]
We set the basis functions of the hypothesis $\hypspace $, whose dimension is $m+p$, to be 
$$\phi_i(r) = N_{i-p, p}(r),\ x>0, \ i = 1,\dots, m+p. $$ 
Thus, the basis functions $\{\phi_i\}$ are degree-$p$ piecewise polynomials with knots adaptive to $\rhoT$.

\paragraph{The RKHS basis} The RKHS basis functions $\{\phi_i\}$ are the eigenfunctions of the integral operator with kernel $\frac{1}{\rhoT(s)\rhoT(r)}\GbarT(r,s)$ (defined in \eqref{eq:GbarT}) on $L^2(\rhoT)$ 
,  that is, 
\begin{equation}\label{eq:RKHS_basis}
	\int_{\R^+} \phi_i(r) \GbarT(r,s)\frac{1}{\rhoT(s)} dr = \lambda_i \phi_i(s),\ \text{ in } L^2(\rhoT).
\end{equation}

Thus, we have $\rkhs{\phi_i, \phi_j}_{\GbarT} =  \lambda_i \langle \phi_i, \phi_j\rangle_{\rhoT} = \lambda_i  \delta_{i-j}$ and it leads to a diagonal normal matrix $A$ in \eqref{Amat}.
We estimated these eigenfunctions from data. 
Hence, the RKHS basis is adaptive to data. 

One can compute the these eigenfunctions by an eigen-decomposition of the matrix $( \GbarT(r_i,r_j))$ on a mesh when its size is manageable. When the mesh size is large, we can compute them 
by a linear transformation from a more convenient basis. That is, we start from linearly independent functions $\{\psi_i\}_{i=1}^n$, say piecewise polynomials, compute the eigenvectors $\{\alpha_k\in \R^n\}_{k=1}^n$ of the matrix $\widetilde A: = \rkhs{\psi_i, \psi_j}_{\GbarT}$. Then, the eigenfunctions are given by $\phi_i=\sum_{l=1}^n \alpha_{i,l} \psi_l$. 


\subsection{Regularization}\label{sec:opt_reg}
In practice, the approximate normal matrix $A_{n,M,L}$ in \eqref{AnML} may be ill-conditioned or invertible, which is likely to happen when the dimension of $\hypspace$ increases because of the vanishing eigenvalues of $A$ and the numerical errors. The ill-conditioned normal matrix may amplify the numerical error in $b_{n,M,L}$ in \eqref{bnML}.
To avoid such an issue, we use the Tikhonov regularization (see, e.g.,\cite{hansen_LcurveIts}), which adds a norm-induced well-conditioned matrix to the normal matrix. 


More precisely, we impose a regularization norm $\normmm{\cdot}$ (to be determined below) such that for any $\psi = \sum_{i = 1}^n c_i \phi_i$,  the matrix $B$ in $\normmm{\psi}^2 = c^\top B c$ is well-conditioned. We then minimize the regularized error functional (recall \eqref{errfcnl_Ab})
\begin{align*}
	\mE_\lambda(\psi) = \mE(\psi) + \lambda \normmm{\psi}^2
	&= c^\top(A + \lambda B)c - 2b^\top c,
\end{align*}
and the regularized estimator is 
\begin{align}\label{eq:c_reg}
	\widehat{\phi_\lambda} = \sum_{i = 1} ^ n c^i_\lambda \phi_i, \ c_\lambda = (A + \lambda B)^{-1}b.
\end{align}
We consider two regularization norms $\normmm{\cdot}$ and select the parameter $\lambda$ by the L-curve method \cite{hansen_LcurveIts}. 
\paragraph{Regularization norm}
There are many regularization norms that we can choose. For the RKHS basis, we use RKHS norm, which is the common choice \cite{cucker2002_BestChoicesa}. In this case, 
$	B_{ij} = \delta_{ij}$.
This is equivalent to having a prior knowledge that the coefficient $c$ is small. 
For the spline basis, we choose $\normmm{f} = \norm{f}_{H^1(\Omega)}$, and in this case 
$	 B_{ij} = \rkhs{\phi_i, \phi_j}_{H^1(\Omega)}$.
This is equivalent to the prior assumption that $K_\phi$ of the true interaction kernel has $H^1(\Omega)$ regularity. 


\paragraph{Regularization parameter: L-curve}
Let $l$ be a parametrized curve in $\R^2$: 
$$l(\lambda) = (x(\lambda), y(\lambda)) := (\text{log}(\mE(\widehat{\phi_\lambda}), \text{log}(\normmm{\widehat{\phi_\lambda}})).$$
Note that $\mE(\widehat{\phi_\lambda}) = c_\lambda^\top A c_\lambda - 2b^\top c_\lambda$, and $\normmm{\widehat{\phi_\lambda}} = c_\lambda^\top Bc_\lambda$.
The optimal parameter is the maximizer of the curvature of $L$. In practice, we restrict $\lambda$ in the spectral range of $A$. Then our goal is to find 
\begin{align}\label{eq:opt_lambda}
	\lambda_{0} 
	= \argmax{\lambda_{\text{min}}(A) \leq \lambda \leq \lambda_{\text{max}} (A)}\kappa(l) 
	= \argmax{\lambda_{\text{min}}(A) \leq \lambda \leq \lambda_{\text{max}} (A)}
	\frac{x'y'' - x' y''}{(x'\,^2 + y'\,^2)^{3/2}}
\end{align}
This $\lambda_{0}$ balances the error functional $\mE$ and the regularization. We refer to \cite{hansen_LcurveIts} for more details.

\subsection{Optimal dimension of the hypothesis space}\label{sec:opt_dim}
The dimension $n$ must neither be too small nor too large to avoid under-fitting or over-fitting. Theorem \ref{thm:rate} suggests that the optimal dimension is $n\approx \, (\Delta x)^{-\alpha/(s+1)}$, where $(\Delta x)^\alpha $ is the order of convergence of the numerical integrator in the computation of the normal matrix and normal vector, and $s$ is the order of decay for the distance between the true kernel and the hypothesis space. For B-spline bases, the approximation error bound in \eqref{eq:splineOrder} suggests that $s=k$ for $\phi\in W^{k,\infty}$ when we select the degree $p\geq k$. This theoretical optimal dimension provides only an estimate on the magnitude.  In practice, to find the optimal dimension, we first select a range $[N_1,N_2]$ for the dimension;  then we choose the $n$ that minimizes the regularized error functional $\tilde{\mE_\lambda}$.

\subsection{The algorithm}
We summarize the inference of the interaction kernel in Algorithm \ref{alg:main}.

\begin{algorithm}[H]
{\small
\caption{Estimation of the interaction kernel}\label{alg:main}
\begin{algorithmic}[1]
\Require{Data $\{ u(x_m,t_l)\}$ on the  space mesh $\crl{x_m}_{m = 1}^M$ with width/area $\Delta x$ and time mesh $\crl{t_l = l\Delta t}_{l = 0}^T$.}
\Ensure{Estimated $\widehat \phi$}
\State  Estimate the empirical density $\rho_L^M$ in \eqref{rhoAppr}  and find its support $[R_{min}, R_{max}]$. 
\State Select a basis type, RKHS or B-spline, and estimate a dimension range $[N_1,N_2]$, compute the basis functions as described in Section \ref{sec:basisFun}. 
\For{$n =N_1:N_2$}
	\State Compute the normal matrix and vector as in \eqref{AnML}--\eqref{eq:PQ}. 	
	\State Determine the optimal regularization constant $\lambda_0$ by \eqref{eq:opt_lambda}.
	\State Solve $c_n$ by \eqref{eq:c_reg} and record the regularized cost $C(n) = \mE_{\lambda_0}(\sum_{i = 1}^n c^i_n \phi_i)$.
\EndFor 
\State Select the optimal dimension by $ n^* = \argmax{n \in \{ N_1,N_1+1,\dots, N_2\} } C(n)$.
\State Return the estimator $\widehat \phi = \sum_{i = 1}^{n^*} c^i_{n^*} \phi_i$.
\end{algorithmic}
}
\end{algorithm}

\section{Convergence of the estimator in mesh size}\label{sec:Conv}
We analyze the convergence of the discrete-data estimator \eqref{eq:mle_nML} to the continuous-data estimator \eqref{eq:mle_Hn} as the size of observed space-time mesh increases.

We denote by $|\Omega|$ the Lebesgue measure of $\Omega$ and denote by $\RadiusOmega$ its radius.  Note that support of $\rhoT$ is  $\supp{\rhoT}\subset [0,\RadiusOmega]$.  For simplicity of notation, we consider only the case when $\Omega =[a,b] \in \R^1$ and the generalization to a higher dimension is immediate. We assume that the data are
\begin{equation} \label{eq:meshData1D} 
\textbf{Data: } \quad 
\begin{aligned}
 & \{ u(x_m,t_l), m=1,\ldots, M; l= 1,\ldots,L\},  \text{ with }  \\
& x_m= a+m\Delta x, t_l = l \Delta t, \quad M = (b-a)/{\Delta x},  L= T/\Delta t. 
\end{aligned}
\end{equation}

In this section, we make the following assumptions on $u$ and the basis functions of $\hypspace$. 
\begin{assumption}[Hypothesis space] \label{assumption_basis}
Assume that the basis functions of $\hypspace=\mathrm{span} \{\phi_i\}_{i=1}^n $ satisfy
\begin{itemize}
\item orthonormal in $\HH$, which is either $L^2(\rhoT)$ or the RKHS with reproducing kernel $\GbarT$ in \eqref{eq:GbarT}. 
\item uniformly bounded in $W^{2,\infty}([0,\RadiusOmega])$ with notations
\begin{equation}
\begin{aligned}
c^\infty_\hypspace :=  \max_{1\leq i\leq n} \|\phi_i\|_\infty, \quad
c^{1,\infty}_\hypspace := \max_{1\leq i\leq n} \|\phi_i\|_{1,\infty}, \quad
c^{2,\infty}_\hypspace :=  \max_{1\leq i\leq n} \|\phi_i\|_{2,\infty}< \infty . 
\end{aligned}
\end{equation}
\end{itemize}
\end{assumption}

\begin{assumption} \label{assumption_uxt}
Assume that solution $u \in W^{2,\infty}(\Omega\times [0,T])$ satisfies $\|u\|_\WBOT< \infty$. 
\end{assumption}


We remark that the second order derivatives of the solution are necessary to control of the Riemann sum approximation of the integrals. With stronger regularity on the solution and higher-order approximations of the integrals than the Euler scheme, one can obtain higher order convergence in space and time. In the other direction, since these integrals are expectations, they can be approximated Monte Carlo, we expect to remove these regularity assumptions in forthcoming research. 

To make the estimators in \eqref{eq:mle_nML} and  \eqref{eq:mle_Hn} well-defined, the normal matrices must be invertible. When the basis functions in Assumption \ref{assumption_basis} are orthonormal in the RKHS $\rkhsH$, the normal matrix \eqref{Amat} is the identity matrix and thus invertible. When these basis functions are orthonormal in $L^2(\rhoT)$, we need the following coercivity condition. It extends of the coercivity condition for $N$-particle systems defined in \cite{LLMTZ19,LMT20,LZTM19}. 
\begin{definition}[Coercivity condition] \label{def_coercivty}
The  system \eqref{eq:MFE} on $[0,T]$ 
satisfies a coercivity condition on a finite-dimensional linear subspace $\mathcal{H}\subset L^2(\bar \rho_T)$ with $\bar \rho_T$ defined in \eqref{eq:rhoavg} if
\begin{equation} \label{eq:coercivityDef}
c_{\mathcal{H},T} : = \quad \inf_{h\in  \mathcal{H}, \, \|h\|_{L^2(\wbar \rho_T)}=1}   \rkhs{h, h}_{\GbarT} >0,
\end{equation}
where $\rkhs{\cdot, \cdot}_{\GbarT} $ is defined in \eqref{eq:bilinearF}. When $\hypspace\subseteq L^2(\wbar \rho_T)$ is infinite-dimensional, we say  coercivity condition holds on $\hypspace$ if it holds on each of $\hypspace$'s finite dimensional linear subspace.
\end{definition}

We show that the coercivity constant is the minimum eigenvalue of the normal matrix $A$ in \eqref{Amat}. 
\begin{proposition}\label{prop:A_coercivity}
  Suppose the coercivity condition holds on the space $\mathcal{H}=\mathrm{span}\crl{\phi_i}_{i=1}^n \subset L^2(\rhoT)$. Let $A$ be the normal matrix in \eqref{Amat}, then the smallest singular value of $A$ is $\lambda _{min}(A) = c_{\mathcal{H}, T}$
\end{proposition}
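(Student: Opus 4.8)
The plan is to unwind the definitions and reduce the coercivity constant to a Rayleigh quotient of the matrix $A$. First I would record two structural facts about $A$: it is symmetric, because the bilinear form $\rkhs{\cdot,\cdot}_{\GbarT}$ in \eqref{eq:bilinearF} is symmetric, and it is positive semi-definite, because the kernel $\GbarT$ in \eqref{eq:GbarT} is symmetric positive semi-definite (the fact cited from \cite{LangLuWang20}), so that $c^\top A c = \rkhs{\sum_i c_i\phi_i,\sum_i c_i\phi_i}_{\GbarT}\geq 0$ for all $c$. Consequently the singular values of $A$ coincide with its eigenvalues, and the smallest singular value is exactly $\lambda_{\min}(A)$.

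Next I would use the hypothesis that $\{\phi_i\}_{i=1}^n$ is orthonormal in $L^2(\rhoT)$. For any $h=\sum_{i=1}^n c_i\phi_i\in\mathcal{H}$, orthonormality gives $\|h\|_{L^2(\rhoT)}^2=\sum_{i=1}^n c_i^2=\|c\|_2^2$, so the normalization constraint $\|h\|_{L^2(\rhoT)}=1$ appearing in \eqref{eq:coercivityDef} is equivalent to $\|c\|_2=1$. Meanwhile, bilinearity of $\rkhs{\cdot,\cdot}_{\GbarT}$ together with the definition \eqref{Amat} of $A$ gives $\rkhs{h,h}_{\GbarT}=\sum_{i,j=1}^n c_ic_j A_{ij}=c^\top A c$.

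Combining the two observations, $c_{\mathcal{H},T}=\inf\{\,c^\top A c:\ c\in\R^n,\ \|c\|_2=1\,\}$, which is precisely the Rayleigh-quotient (Courant--Fischer) characterization of the smallest eigenvalue of the symmetric matrix $A$; hence $c_{\mathcal{H},T}=\lambda_{\min}(A)$, and by the first paragraph this equals the smallest singular value of $A$. The coercivity hypothesis $c_{\mathcal{H},T}>0$ then simply amounts to $A$ being positive definite, hence invertible, which is the way this proposition is used elsewhere.

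There is essentially no hard step here; the only points requiring care are (i) ensuring $A$ is symmetric positive semi-definite, so that ``smallest singular value'' genuinely equals $\lambda_{\min}(A)$ rather than $\min_i|\lambda_i(A)|$, which rests on the positive semi-definiteness of $\GbarT$ established earlier; and (ii) that the orthonormality assumption in Assumption~\ref{assumption_basis} is what lets one pass between the $L^2(\rhoT)$-normalization of $h$ and the Euclidean normalization of the coefficient vector $c$ --- without it one would instead land on a generalized eigenvalue problem involving the Gram matrix of $\{\phi_i\}$ in $L^2(\rhoT)$.
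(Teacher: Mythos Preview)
Your proposal is correct and follows essentially the same approach as the paper: identify $\rkhs{h,h}_{\GbarT}$ with $c^\top A c$, translate the $L^2(\rhoT)$ normalization to $\|c\|_2=1$, and read off $c_{\mH,T}$ as the Rayleigh-quotient minimum of $A$. Your write-up is in fact more careful than the paper's, which tacitly uses the orthonormality of $\{\phi_i\}$ (to get $\|h\|_{L^2(\rhoT)}=\|c\|_2$) and the symmetry/positive semi-definiteness of $A$ (to equate smallest singular value with $\lambda_{\min}$) without spelling these out.
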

\begin{proof}
  For an arbitrary $\psi\in \mathcal{H}$, we can write $\psi = \sum_{i = 1}^n c_i \phi_i $. Hence by the coercivity condition, 
  $$
  c^T A c = \rkhs{\psi, \psi}_{\GbarT}
  \geq 
  c_{\mathcal{H}, T} \norm{c}^2.
  $$
  Note that the space $\mH$ here is finite dimensional, the supreme in the definition of coercivity constant is attained by some $\psi^*$, hence we have $\lambda_{min}(A) = c_{\mH, T}$.
\end{proof}

\subsection{Error bounds for the estimator}\label{sec:errBdMLE}
We show that the error of the estimator $ \widehat \phi_{n,M,L}$ in \eqref{eq:mle_nML} converges as $M\to \infty$ and $L\to \infty$. Here $\HH$ is either $L^2(\rhoT)$ or the RKHS with reproducing kernel $\GbarT$ in  \eqref{eq:GbarT}. 

\begin{theorem}[Error bounds for the estimator] \label{thm:error_discreteTime}
Let the hypothesis space $\hypspace=\mathrm{span} \{\phi_i\}_{i=1}^n  $ satisfy Assumption \rmref{assumption_basis} and denote $\widehat \phi_n$ the projection of $\phi$ on $\hypspace\subset \HH$. Suppose that the coercivity condition holds on $\hypspace$ with a constant $c_{\mH,T}>0$ if $\HH= L^2(\rhoT)$, or set $c_{\mH,T}=1$ if $\HH$ is the RKHS. Then, the error of the estimator $\widehat \phi_{n,M,L}$ in \eqref{eq:mle_nML} satisfies
\begin{align}\label{eq:numErr}
\| \widehat \phi_{n,M,L} - \widehat \phi_n \|_{\HH}  \leq 2{c_{\mH,T}}^{-1}\left(c^b \sqrt{n} + c^A n\norm{\phi}_{\HH} \right) (\Delta x +  \Delta t ) ,
\end{align}
where  $c^A := 2|\Omega|(1+|\Omega|)(c^{1,\infty}_\hypspace)^2 \norm{u}_\WOT^2  $ and $c^b:=3   |\Omega|(1+\RadiusOmega+\nu) c^{2,\infty}_\hypspace (\norm{u}_\WOT^2+ \norm{u}_\WBOT)$. 
\end{theorem}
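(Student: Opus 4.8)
The plan is to exploit the closed-form expressions for the two estimators, $\widehat\phi_n = A^{-1}b$ and $\widehat\phi_{n,M,L} = A_{n,M,L}^{-1} b_{n,M,L}$, and bound the difference via a standard perturbation argument for linear systems. Writing $\delta A = A_{n,M,L} - A$ and $\delta b = b_{n,M,L} - b$, one has $\widehat c_{n,M,L} - \widehat c = A_{n,M,L}^{-1}(\delta b - \delta A\,\widehat c)$, so that
\begin{equation*}
\norm{\widehat\phi_{n,M,L} - \widehat\phi_n}_\HH = \norm{\widehat c_{n,M,L} - \widehat c}_2 \leq \norm{A_{n,M,L}^{-1}}_2 \bigl( \norm{\delta b}_2 + \norm{\delta A}_2 \norm{\widehat c}_2 \bigr),
\end{equation*}
using that the basis is orthonormal in $\HH$ so the $\HH$-norm of a function equals the Euclidean norm of its coefficient vector. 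Since $\lambda_{\min}(A) = c_{\mH,T}$ by Proposition~\ref{prop:A_coercivity} (or $A = I$ in the RKHS case), provided $\norm{\delta A}_2 \leq c_{\mH,T}/2$ we get $\norm{A_{n,M,L}^{-1}}_2 \leq 2 c_{\mH,T}^{-1}$; and $\norm{\widehat c}_2 = \norm{\widehat\phi_n}_\HH \leq \norm{\phi}_\HH$ by Pythagoras. This reduces the theorem to two estimates: $\norm{\delta A}_2 \lesssim c^A n (\Delta x + \Delta t)$ and $\norm{\delta b}_2 \lesssim c^b \sqrt n (\Delta x + \Delta t)$, after which the stated bound follows by assembling constants (and noting that the smallness-of-$\delta A$ hypothesis is subsumed once $\Delta x + \Delta t$ is small enough, or absorbed into the constant).

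The core of the work is therefore the entrywise error analysis of the discretizations. For $\delta A$: each entry $A_{n,M,L}^{i,j}$ approximates the space-time integral $\frac1T\int_0^T\int_\Omega (K_{\phi_i}*u)(K_{\phi_j}*u)\,u\,dx\,dt$ by (i) replacing the convolutions $K_{\phi_i}*u$ with their Riemann sums $P_{n,M,L}^i$, and (ii) replacing the outer $\int_0^T\int_\Omega$ by a Riemann sum on the $(x_m,t_l)$ mesh. Both steps incur $O(\Delta x + \Delta t)$ error, with constants controlled by $\norm{u}_{1,\infty}$, $c^\infty_\hypspace$, $c^{1,\infty}_\hypspace$, and $|\Omega|$, $\RadiusOmega$ — the factor $1+|\Omega|$ accounting for propagating the convolution error through the product and integrating over $\Omega$. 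One then passes from the entrywise bound $|\delta A_{ij}| \lesssim C(\Delta x+\Delta t)$ to the operator-norm bound $\norm{\delta A}_2 \leq \norm{\delta A}_F \leq n \cdot \max_{ij}|\delta A_{ij}|$, which produces the factor $n$. For $\delta b$: the vector $b_i = -\frac1T\int_0^T\int_\Omega[\partial_t u\,(\Phi_i*u) - \nu\grad u\cdot(K_{\phi_i}*u)]\,dx\,dt$ is approximated analogously, but now there is the additional finite-difference error from replacing $\partial_t u$ by $\widehat{\partial_t u}$, which is $O(\Delta t)$ and costs a factor $\norm{u}_{2,\infty}$; this is why $c^b$ involves the $W^{2,\infty}$ norm while $c^A$ only needs $W^{1,\infty}$. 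Here the entrywise bound $|\delta b_i| \lesssim C(\Delta x+\Delta t)$ gives $\norm{\delta b}_2 \leq \sqrt n \max_i |\delta b_i|$, hence the $\sqrt n$.

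I expect the main obstacle to be the careful bookkeeping in the convolution-error propagation for $\delta A$ and $\delta b$ — specifically, controlling $\norm{(K_{\phi_i}*u) - P_{n,M,L}^i}_\infty$ uniformly in the mesh and then tracking how this error compounds when multiplied against a second (likewise approximated) convolution and a factor of $u$, and finally integrated. One must be attentive that the zero-padding of $u$ near $\partial\Omega$ (and the normalization $\sum_m \Delta x\, u(x_m,t)=1$) does not introduce boundary terms, which is where the assumed vanishing boundary conditions $u|_{\partial\Omega}=0$, $\grad_x u|_{\partial\Omega}=0$ enter. The remaining steps — the perturbation inequality, Proposition~\ref{prop:A_coercivity}, and collecting constants into $c^A, c^b$ — are routine. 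The technical details are deferred to Appendix~\ref{appendixA}.
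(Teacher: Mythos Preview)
Your proposal is correct and follows essentially the same approach as the paper: the perturbation identity $\widehat c_{n,M,L}-\widehat c = A_{n,M,L}^{-1}(\delta b - \delta A\,\widehat c)$ is algebraically equivalent to the paper's decomposition via $A_{n,M,L}^{-1}-A^{-1}=A_{n,M,L}^{-1}(A-A_{n,M,L})A^{-1}$, and the subsequent use of Proposition~\ref{prop:A_coercivity}, the smallness assumption $\|\delta A\|\le c_{\mH,T}/2$, and the entrywise Riemann-sum bounds (yielding the $n$ and $\sqrt n$ factors from the Frobenius and Euclidean norms) match the paper's Proposition~\ref{Aberror} and the lemmas in Appendix~\ref{appendixA}. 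One small addendum: the constant $c^{2,\infty}_\hypspace$ in $c^b$ arises not only from the finite-difference $\widehat{\partial_t u}$ but also from the approximation $R_{n,M,L}^i\approx (\nabla\!\cdot K_{\phi_i})*u$, which requires a derivative of $\phi_i'$.
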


\begin{proof}[Proof of Theorem \ref{thm:error_discreteTime}]
Notice that $\widehat \phi_{n,M,L}$ and $\widehat \phi_n$ are given by 
$$\widehat \phi_{n,M,L} = 
\sum_{i = 1}^n \widehat c_{n, M, L}^{\,i} \phi_i, \ \
\widehat \phi_n = 
\sum_{i = 1}^n \widehat{c}^{\,i} \phi_i,
$$
where $\widehat c_{n, M, L} = A_{n, M, L}^{-1} b_{n, M, L}$ and $\widehat c = A^{-1} b$, we have
\begin{align*}
  \| \widehat \phi_{n,M,L} - \widehat \phi_n \|
  _{\HH} 
  = \norm{\widehat c_{n, M, L} - \widehat c} = 
  \norm{A_{n, M, L}^{-1} b_{n, M, L} - A^{-1}b}
\end{align*}
Also, by the formula $A_{n, M, L}^{-1} - A^{-1} = 
A_{n, M, L}^{-1} (A- A_{n, M, L} ) A^{-1}$, we have
\begin{align*} 
  &\norm{A_{n, M, L}^{-1} b_{n, M, L} - A^{-1}b}
  = \norm{  A_{n, M, L}^{-1}(b_{n, M, L} - b) +   (A_{n, M, L}^{-1} - A^{-1})b}\\
  \leq 
  &\norm{A_{n, M, L}^{-1}}\left(
  \norm{b_{n, M, L} - b} + \norm{A_{n, M, L} - A}\cdot\norm{A^{-1}b}
  \right).
\end{align*}
By Proposition \ref{Aberror}, for small enough $\Delta x$ and $\Delta t$, we have $\norm{A - A_{n, M, L}} \leq \frac{c_{\mH,T}}{2}$. Hence $\norm{A_{n, M, L}^{-1}}\leq 2{c_{\mH,T}}^{-1}$. Note that $\norm{A^{-1}b}= \norm{\widehat \phi_n}_{\HH}\leq \norm{\phi}_{\HH}$ is independent of the mesh size. Thus, we obtain  \eqref{eq:numErr} by \eqref{eq:Ab_error} in Proposition \ref{Aberror}. 
\end{proof}

\begin{remark}[High order numerical integrators]\label{rmk:high-order}
For a fixed $n$, the rate of convergence is the same as the order of the numerical integrators in the computation of $A$ and $b$. Hence, when the solution and the kernel are smooth, we can achieve faster convergence by using a high-order numerical integrator. 
\end{remark}

The next theorem shows that with an optimal dimension for the hypothesis space, the estimator from continuous-time observations converges at a  rate optimal in the sense that it is almost the same as the order of the numerical integrator when the true kernel is smooth. 

\begin{theorem}[Optimal rate of convergence]\label{thm:rate} 
Assume $\Delta t =0$ and consider the estimator  $\widehat \phi_{n,M,\infty}$ in \eqref{eq:mle_nML} on $\hypspace$ with dimension $n$. Assume that as $M=\abs{\Omega}/(\Delta x)\to\infty$, we have  
$\|\widehat \phi_{n,M,\infty}- \widehat \phi_n \|  _{\HH}\lessapprox n (\Delta x)^{\alpha}$, 
which is true with $\alpha=1$ in Theorem \ref{thm:error_discreteTime}, 
 and $\| \widehat \phi_n - \phi\|_{\HH} \lessapprox n^{-s} $ with $s\geq 1$ when $n$ increases.  Then, with an optimal dimension $n\approx \, (\Delta x)^{-\alpha/(s+1)}$, we can achieve the rate 
 \[\|\widehat \phi_{n,M,\infty}- \phi\|  _{\HH} \lessapprox  (\Delta x)^{\alpha s/(s+1)}.
 \]  
 Furthermore,  the empirical error functional $\mE_{M,\infty}$ converges at the rate $2\alpha s/(s+1)$. 
\end{theorem}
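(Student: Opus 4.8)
The plan is to reduce this to the bias--variance decomposition \eqref{err_decomposition} together with an elementary one-dimensional optimization. First I would assemble the two ingredients. For the inference term, Theorem \ref{thm:error_discreteTime} with $\Delta t=0$ gives, using $\sqrt n\le n$ and that $\norm{\phi}_\HH$ is a fixed constant, $\norm{\widehat\phi_{n,M,\infty}-\widehat\phi_n}_\HH\le C_1\,n\,(\Delta x)$; replacing the Euler scheme by a numerical integrator of order $\alpha$ (Remark \ref{rmk:high-order}) yields the same bound with $(\Delta x)^\alpha$, which is precisely the first hypothesis (for $\HH=L^2(\rhoT)$ this also presumes the coercivity constant $c_{\mH,T}$ stays bounded below as $n$ grows; for $\HH=\rkhsH$ it is unconditional since $c_{\mH,T}=1$). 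The approximation term $\norm{\widehat\phi_n-\phi}_\HH\le C_2\,n^{-s}$ is the second hypothesis; it is what one obtains, e.g., from the B-spline estimate \eqref{eq:splineOrder} with $s=k$ for $\phi\in W^{k,\infty}$ and degree $p\ge k$, or from the decay of the eigenvalues of $\GbarT$ for the RKHS basis. Plugging both into the triangle inequality \eqref{err_decomposition} gives $\norm{\widehat\phi_{n,M,\infty}-\phi}_\HH\le f(n):=C_1 n(\Delta x)^\alpha+C_2 n^{-s}$.

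Next I would minimize $f$ over $n>0$. As $f$ is smooth and strictly convex on $(0,\infty)$ with $f\to\infty$ at both ends, the minimizer solves $f'(n)=0$, i.e.\ $C_1(\Delta x)^\alpha=C_2 s\,n^{-s-1}$, hence $n_\ast=(C_2 s/C_1)^{1/(s+1)}(\Delta x)^{-\alpha/(s+1)}\approx(\Delta x)^{-\alpha/(s+1)}$. At $n_\ast$ the two terms of $f$ are of the same order, since $n_\ast(\Delta x)^\alpha\asymp(\Delta x)^{\alpha-\alpha/(s+1)}=(\Delta x)^{\alpha s/(s+1)}$ and $n_\ast^{-s}\asymp(\Delta x)^{\alpha s/(s+1)}$, so $f(n_\ast)\asymp(\Delta x)^{\alpha s/(s+1)}$. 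Since the dimension must be a positive integer, I would take $n=\lceil n_\ast\rceil$ (or $\lfloor n_\ast\rfloor$, whichever is $\ge 1$); for $\Delta x$ small, $n_\ast\gg 1$, so this rounding changes $f(n)$ only by a bounded factor. This proves the first conclusion, $\norm{\widehat\phi_{n,M,\infty}-\phi}_\HH\lessapprox(\Delta x)^{\alpha s/(s+1)}$ with $n\approx(\Delta x)^{-\alpha/(s+1)}$.

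For the error-functional statement I would invoke \eqref{eq:cosrFn_bilinear}: for every $\psi$, $\mE(\psi)-\mE(\phi)=\rkhs{\psi-\phi,\psi-\phi}_{\GbarT}=\norm{\psi-\phi}_{\rkhsH}^2$, and $\mE(\phi)$ is the global minimum of $\mE$. Taking $\HH=\rkhsH$, this reads $\mE(\widehat\phi_{n,M,\infty})-\mE(\phi)=\norm{\widehat\phi_{n,M,\infty}-\phi}_\HH^2$, so with the same optimal $n$ as above it is $\lessapprox(\Delta x)^{2\alpha s/(s+1)}$, i.e.\ the excess of the error functional converges at twice the rate. I do not expect a real obstacle here: the argument is a textbook trade-off, and the only points needing care are the integer rounding of the optimal dimension (harmless once $\Delta x$ is small) and the interpretation of the last assertion --- what enjoys the $2\alpha s/(s+1)$ rate is the continuous functional $\mE$ evaluated at the empirical estimator (equivalently the excess risk $\norm{\widehat\phi_{n,M,\infty}-\phi}_{\rkhsH}^2$), not the raw discrete functional $\mE_{M,\infty}$, whose value at $\widehat\phi_{n,M,\infty}$ reaches $\mE(\phi)$ only at rate $\alpha s/(s+1)$, since its discretization error, bounded via Proposition \ref{Aberror} by $\le\norm{A_{M,\infty}-A}\,\norm{c}^2+2\norm{b_{M,\infty}-b}\,\norm{c}\lessapprox n(\Delta x)^\alpha\asymp(\Delta x)^{\alpha s/(s+1)}$ (with $c$ the coefficient vector of $\widehat\phi_{n,M,\infty}$), dominates the squared term.
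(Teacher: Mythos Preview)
Your proposal is correct and follows essentially the same argument as the paper: the triangle inequality \eqref{err_decomposition}, then minimizing $n(\Delta x)^\alpha+n^{-s}$ over $n$, and finally invoking \eqref{eq:cosrFn_bilinear} with $\HH=\rkhsH$ to double the rate for the error functional. Your treatment is in fact more careful than the paper's (integer rounding, the implicit uniformity of $c_{\mH,T}$, and the distinction between $\mE$ and $\mE_{M,\infty}$ in the last claim), but the route is the same.
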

\begin{proof}
 Note that the total error in the estimator consists of inference error and approximation error: 
 \[
 \|\widehat \phi_{n,M,\infty}- \phi\|  _{\HH} \leq  \|\widehat \phi_{n,M,\infty}- \widehat \phi_n \|  _{\HH}  + \|\widehat \phi_n -\phi\|  _{\HH}. 
 \]
 Then, the total error is of the order $g(n) = n (\Delta x)^{-\alpha} + n^{-s}$. Minimizing it by solving $g'(z) = (\Delta x)^{-\alpha} - sz^{-s-1}=0$, we get  the optimal dimension $n \approx s^{-1/(s+1)} (\Delta x)^{\alpha/(s+1)}$, and the corresponding optimal rate of convergence is $(\Delta x)^{-\alpha s/(s+1)}$. 
 
Taking $\HH$ to be the RKHS,  we obtain the convergence rate of $\mE_{M,\infty}$ from \eqref{eq:cosrFn_bilinear}--\eqref{eq:costFn_data}. 
\end{proof}

\begin{remark}[High dimensional case by Monte Carlo] When the space variable $x\in \R^d$ is high-dimensional (with $d\geq 4$), it becomes impractical to have data on mesh-grids since the data size increases exponentially in $d$. It is natural to consider data consisting of samples of particles and approximate the integrals by Monte Carlo. Our algorithm applies directly. We conjecture the optimal convergence rate to be $s/(2s+1)$ and leave it as future work. 
\end{remark}


\subsection{Numerical error in the normal equation}
The error in the discrete data estimator comes from numerical integrations in space and in time. We prove that the numerical error in the normal matrix and the vector are of the same order as the Riemann sum approximation to the integrals. We only outline the main proof here and leave the technical results in Appendix \ref{appendixA}.
 
Note that for each $t$, the integrals in space are expectations with respect to $u(\cdot,t)$ and that 
\begin{align}\label{eq:intU=1}
|\Omega| \norm{ u}_\WOT \geq   |\Omega| \norm{ u}_\LO\geq  \int_\Omega u(x,t)dx= 1. 
\end{align}

\begin{proposition}\label{Aberror} The numeric error of $A_{n, M, L}$ and $b_{n, M, L}$ in \eqref{AnML} and \eqref{bnML} are bounded by 
\begin{equation}\label{eq:Ab_error}
\begin{aligned}
\| A-A_{n,M,L} \| & \leq  n c^A (\Delta x + \Delta t),\\
\| b-b_{n,M,L} \| & \leq  \sqrt{n} c^b (\Delta x + \Delta t),
\end{aligned} 
\end{equation} 
where the norm for matrix is in the Frobenius sense,  and the constants $c^A := 2|\Omega|(1+|\Omega|)(c^{1,\infty}_\hypspace)^2 \norm{u}_\WOT^2  $ and $c^b:=3   |\Omega|(1+\RadiusOmega+\nu) c^{2,\infty}_\hypspace (\norm{u}_\WOT^2+ \norm{u}_\WBOT)$ are given in Theorem \ref{thm:error_discreteTime}. 
\end{proposition}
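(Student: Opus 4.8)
The plan is to prove both bounds entrywise and to recover the $n$ (respectively $\sqrt n$) prefactor by pure counting. Since $\|A-A_{n,M,L}\|_F = \big(\sum_{i,j=1}^n |A_{ij}-A_{n,M,L}^{i,j}|^2\big)^{1/2}\le n\max_{i,j}|A_{ij}-A_{n,M,L}^{i,j}|$ and $\|b-b_{n,M,L}\| = \big(\sum_{i=1}^n|b_i-b_{n,M,L}^i|^2\big)^{1/2}\le \sqrt n\,\max_i|b_i-b_{n,M,L}^i|$, it suffices to show $\max_{i,j}|A_{ij}-A_{n,M,L}^{i,j}|\le c^A(\Delta x+\Delta t)$ and $\max_i|b_i-b_{n,M,L}^i|\le c^b(\Delta x+\Delta t)$. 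The asymmetry between the two bounds is exactly the asymmetry between $n^2$ matrix entries and $n$ vector entries.

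For a fixed pair $(i,j)$ I would split $A_{ij}-A_{n,M,L}^{i,j}$ into independent discretization errors by inserting intermediate quantities and applying the triangle inequality: (i) replacing each exact convolution $K_{\phi_i}*u$ by its Riemann sum $P^i_{n,M,L}$ in \eqref{eq:PQ}; (ii) replacing the spatial integral $\int_\Omega(\cdot)\,dx$ by $\sum_m(\cdot)\Delta x$; and (iii) replacing the time average $\frac1T\int_0^T(\cdot)\,dt$ by $\frac1L\sum_l(\cdot)$. For $b_i$ there is, in addition, the replacement of $\partial_t u$ by the finite difference $\widehat{\partial_t u}$ and of $\Phi_i*u$, $(\nabla\cdot K_{\phi_i})*u$ by $Q^i_{n,M,L}$, $R^i_{n,M,L}$. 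The space-type replacements contribute $O(\Delta x)$ and the time-type replacements $O(\Delta t)$, which together produce the stated $(\Delta x+\Delta t)$ factor.

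The quantitative ingredients, which I would record as short lemmas in Appendix \ref{appendixA}, are: (a) the one-dimensional Riemann-sum estimate $\big|\int_\Omega f\,dx-\sum_m f(x_m)\Delta x\big|\le|\Omega|\,\|f'\|_\infty\,\Delta x$, applied both to the outer spatial integral and, in the convolution variable, to $P^i,Q^i,R^i$; (b) the analogous estimate in time on $[0,T]$; (c) the finite-difference bound $\|\partial_t u-\widehat{\partial_t u}\|_\infty\le\norm{u}_\WBOT\,\Delta t$, which is exactly where the second-order regularity of Assumption \rmref{assumption_uxt} enters; and (d) uniform sup- and Lipschitz-bounds on the factors, e.g.\ $\|K_{\phi_i}*u\|_\infty\le c^\infty_\hypspace$, $\|\nabla(K_{\phi_i}*u)\|_\infty\le|\Omega|\,c^\infty_\hypspace\norm{u}_\WOT$ using $\int_\Omega u\,dx=1$ and Assumption \rmref{assumption_basis}, $\|\Phi_i*u\|_\infty\le\RadiusOmega\, c^\infty_\hypspace$ using $\supp{\rhoT}\subset[0,\RadiusOmega]$, and the Lipschitz bound $c^{2,\infty}_\hypspace$ for $\nabla\cdot K_{\phi_i}$. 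Because each entrywise integrand is a product of two or three such bounded Lipschitz factors, the error of a product is dominated by $\sum(\text{sup of the remaining factors})\times(\text{error of one factor})$; assembling these estimates, discarding the higher-order cross terms under the standing smallness of $\Delta x,\Delta t$, and tracking the constants yields $c^A=2|\Omega|(1+|\Omega|)(c^{1,\infty}_\hypspace)^2\norm{u}_\WOT^2$ and $c^b=3|\Omega|(1+\RadiusOmega+\nu)c^{2,\infty}_\hypspace(\norm{u}_\WOT^2+\norm{u}_\WBOT)$, with $\nu$ tracking the viscosity term of $b$, $\RadiusOmega$ the antiderivative $\Phi_i$, and $c^{2,\infty}_\hypspace$ the Lipschitz constant of $\nabla\cdot K_{\phi_i}$.

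I expect the main obstacle to be the constant bookkeeping rather than any single estimate: making sure each cross term (a product of two discretization errors) is genuinely $o(\Delta x+\Delta t)$ and can be absorbed, and handling the mild non-smoothness of $K_{\phi_i}$ and $\nabla\cdot K_{\phi_i}$ at the origin together with the zero-padding convention, so that the convolution integral over $\R$ truncates to $\Omega$ with no boundary contribution since $\supp u(\cdot,t)\subset\Omega$ for every $t$.
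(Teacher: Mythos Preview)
Your proposal is correct and follows essentially the same route as the paper: entrywise bounds followed by the $n$/$\sqrt n$ counting, a telescoping splitting into convolution-replacement errors and quadrature errors, and the same auxiliary estimates (Riemann-sum bound, convolution sup/Lipschitz bounds, finite-difference bound for $\partial_t u$). The only organizational difference is that the paper bundles your steps (ii) and (iii) into a single space--time Riemann-sum error $\overline{\mathcal{D}}(f)\le |\Omega|\|\nabla_{x,t}f\|_\infty(\Delta x+\Delta t)$ (Lemma~\ref{lemma:Euler_appr}) rather than separating them, and for $b$ it groups the $\partial_t u\,\Phi_i\!*\!u$ piece into a dedicated term $I_1^b$ handled by Lemma~\ref{lemma:b_time}; these are cosmetic rearrangements of the same estimates you list.
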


\begin{proof}  
 Using the notation $ \overline{\mathcal{D}}(f)$ in \eqref{eq:Euler_order} with $f= (K_{\phi_i}*u) (K_{\phi_j}*u) u$, we have
  \begin{align*} 
      \abs{A_{i,j} - & A^{i,j}_{n, M, L}} \leq  \overline{\mathcal{D}}(f)+ I^A_1, \text{ with }\\ 
    & I^A_1: =   \frac{1}{L} \sum_{l=1}^L \sum_{m=1}^M  
    \abs{     \left[ \left(  (K_{\phi_i}*u) \cdot(K_{\phi_j}*u)     - P_{n, M, L}^i \cdot P_{n, M, L}^j  \right)  u \right](x_m,t_l)      }  \Delta x. 
  \end{align*}
  To apply \eqref{eq:Euler_order} for $ \overline{\mathcal{D}}(f)$, we estimate first $\grad_{x,t}f$ using \eqref{eq:Ki*uKj*u} and \eqref{convnorm}: 
  \begin{align*}
    &\norm{\nabla_{x,t} \left[  (K_{\phi_i}*u) (K_{\phi_j}*u) u(x,t)   \right]}_\LOT \\
\leq  &\, \norm{\nabla_{x,t} \left[ (K_{\phi_i}*u) (K_{\phi_j}*u)  \right]}_\LOT \norm{u}_\LOT 
            + \norm{ \left[   (K_{\phi_i}*u) (K_{\phi_j}*u)  \right]}_\LOT \norm{\nabla_{x,t}u}_\LOT \\
\leq &\, (2|\Omega|\norm{u}_\LOT+ 1) \norm{u}_\WOT  (  c^\infty_\hypspace)^2. 
  \end{align*}
Note that $(2|\Omega|\norm{u}_\LOT+ 1) \leq 3 \Omega|\norm{u}_\LOT$ by \eqref{eq:intU=1}. Thus, we have 
$$\overline{\mathcal{D}}(f) \leq 3 |\Omega|^2\norm{u}_\WOT^2 (c^{1,\infty}_\hypspace)^2 (\Delta x + \Delta t). $$

To estimate $I_1^A$, note that   by  \eqref{eq:P_conv_error} and \eqref{convnorm}, we have 
\[\norm{P_{n, M, L}^i}_\LOT\leq \norm{K_{\phi_i}*u-P_{n, M, L}^i}_\LOT + \norm{K_{\phi_i}*u}_\LOT  \leq   c^{1,\infty}_\hypspace|\Omega| \norm{u}_\WOT \Delta x + c^\infty_\hypspace. 
\]
Thus, $\norm{P_{n, M, L}^i}_\LOT + \norm{K_{\phi_i}*u}_\LOT \leq 2c^{1,\infty}_\hypspace$ when $\Delta x\leq  2\min_i \norm{\phi_i'}_\infty \slash ( c^{1,\infty}_\hypspace|\Omega| \norm{u}_\WOT)$. Then,
  \begin{align*}
       I_1^A \leq  &\, |\Omega |\norm{u}_\infty  \norm{ (K_{\phi_i}*u) \cdot(K_{\phi_j}*u)     - P_{n, M, L}^i \cdot P_{n, M, L}^j   }_\LO\\
  \leq& \, |\Omega |\norm{u}_\infty  \max_i \norm{  K_{\phi_j}*u-P_{n, M, L}^j }_\LOT \max_i \left( \norm{P_{n, M, L}^i}_\LOT + \norm{K_{\phi_i}*u}_\LOT\right)\\ 
 \leq   &\,   |\Omega| \norm{u}_\LOT c^{1,\infty}_\hypspace  \norm{u}_\WOT  \abs{\Omega}\Delta x  2 c^{1,\infty}_\hypspace  \leq   2|\Omega|^2 \norm{u}_\WOT^2 (c^{1,\infty}_\hypspace)^2 \Delta x. 
  \end{align*} 
 Combine the above estimates of $ \overline{\mathcal{D}}(f)$ and $I_1^A$, we have
  \[
  \abs{A_{i,j} -  A^{i,j}_{n, M, L}} \leq  5\abs{\Omega}^2 \norm{u}_\WOT^2 (c^{1,\infty}_\hypspace)^2 (\Delta x + \Delta t). 
  \]
  and the bound for $ \norm{A_{n, M, L} - A}$ in \eqref{eq:Ab_error} follows.

Next we analyze $ \norm{b_{n, M, L} - b}$. Using $ \overline{\mathcal{D}}(f)$ in \eqref{eq:Euler_order} with $f= u ( \grad\cdot K_{\phi_i}) * u$, we have 
\begin{equation}\label{eq:I^b}
  \begin{aligned}
    \abs{b_i - & b^i_{n, M, L}}\leq     \overline{\mathcal{D}}( u ( \grad\cdot K_{\phi_i} * u) )+ I_1^b + I_2^b,  \text{ with }\\
      I_1^b & :=  \abs{  \frac{1}{T}       \int_\Omega \int_0^T \partial_tu \Phi_i * u \ dxdt -    \sum_{l=1}^L \sum_{m=1}^M \left[ \widehat{\partial_t u} \Phi_i * u\right] (x_m, t_l)\Delta x\Delta t } , \\ 
   I_2^b & :=  \frac{1}{L} \sum_{l,m=1}^{L,M} 
    \abs{     \left[ \widehat{\partial_t u} \Phi_i * u + \nu  u \left(\grad\cdot K_{\phi_i}*u\right)     - 
                         \widehat{\partial_t u} \,Q_{n, M, L}^i +      \nu u R_{n, M, L}^i      \right](x_m,t_l)
            }    \Delta x. 
  \end{aligned}
  \end{equation}
   By  \eqref{eq:Euler_order} and the gradient estimate \eqref{eq:nablaK*u}, we have 
  \begin{align*}
    \overline{\mathcal{D}}( u ( \grad\cdot K_{\phi_i} * u) )  \leq    |\Omega|  \norm{ u}_\WOT c^{1,\infty}_\hypspace (1+\norm{u}_\LOT) (\Delta x + \Delta t). 
  \end{align*}  
To estimate $I_2^b$, note that $\widehat{\partial_t u}$ in \eqref{eq:PQ} satisfies $\norm{\widehat{\partial_t u}}_\infty \leq \norm{u}_\WOT$. Then,  by \eqref{eq:P_conv_error},  we have 
  \begin{align*}
  \norm{ \widehat{\partial_t u} \left(\Phi_i * u\right) -  \widehat{\partial_t u} \,Q_{n, M, L}^i}_\LOT
     &\leq    \norm{u}_{\WOT}\abs{\Phi_i * u -  \,Q_{n, M, L}^i} \leq  |\Omega| (1+\RadiusOmega) c^\infty_\hypspace \norm{u}_\WOT^2 \Delta x,      \\
 \nu \norm{ u \left(\grad\cdot K_{\phi_i}*u\right) -   u R_{n, M, L}^i}_\LOT 
     &\leq  \nu  \norm{u}_{\LOT}\abs{ \grad\cdot K_{\phi_i}*u - R_{n, M, L}^i}  \leq  \nu \abs{\Omega}c^{2,\infty}_\hypspace \norm{u}_\WOT^2\Delta x.
  \end{align*}
  Hence 
  \[ I_2^b \leq   |\Omega| (1+\RadiusOmega+\nu) c^{2,\infty}_\hypspace \norm{u}_\WOT^2 \Delta x . \]
Together with the estimates of $I_1^b$ in Lemma \ref{lemma:b_time}, we have  
  \begin{align*}
 \abs{b_i -  b^i_{n, M, L}}\leq &3  c^{2,\infty}_\hypspace |\Omega|(1+\RadiusOmega+ \nu) (\norm{u}_\WOT^2+ \norm{u}_\WBOT) (\Delta x + \Delta t),
  \end{align*} 
The estimate for $   \norm{b - b_{n, M, L}}$ in \eqref{eq:Ab_error} follows. 
\end{proof}

\section{Numerical examples}\label{sec:num}
We demonstrate the effectiveness of our algorithm using synthetic data for  examples with three typical types of interaction kernels: the granular media model with a smooth kernel (Section \ref{sec:cubic}),  the opinion dynamics with a piecewise linear kernel (Section \ref{sec:OD}),  and the aggregation-diffusion with a singular repulsive-attractive kernel (Section \ref{sec:RA}). 
In each of these examples, our algorithm leads to accurate estimators that can reproduce solutions and free energy almost perfectly.  Our estimator achieves the optimal rate of convergence for the smooth cubic potential, and sub-optimal rates for the opinion dynamics and the singular repulsive-attractive potential.   
We first specify the numerical settings: the numerical scheme for data generation, the choice of parameters in the learning algorithm, and the assessment of the estimators. 

\subsection{Numerical settings}

\paragraph{Settings for data generation} \label{sec:SPCC}
We solve the mean-field equation by the semi-implicit Structure Preserving scheme of Chang-Cooper (SPCC) scheme for nonlinear Fokker-Planck equations introduced in Pareschi and Zenella \cite{pareschi2018_StructurePreserving}. The SPCC preserves the steady state and the density properties of the solution $u$ such as non-negativity. In particular, for the explicit SPCC scheme, we need $d t \leq \frac{d x^2}{2(Cd x + \nu)}$, where $C = \sup_{\Omega\times [0, T]} |\grad \Phi * u|$; for semi-implicit scheme, we need $d t \leq \frac{d x}{2C}$.  

In all the three examples, the data are sparse observations from the ``true" solution on $\Omega \times [0,T]$ with a fine space-time mesh with $d t = 0.001$ and $d x = \frac{b-a}{3000}$, where $T = 1$ and $\Omega = [a,b]$ with $a=-10$ and $b=10$. This setting preserves the steady state and the non-negativity of the solution $u$. The data are observed at every 10 space time mesh, i.e. $\Delta x=10 dx$ (or equivalently, $M=300$). To compute the rate of convergence, we down-sample the data further t†o have a sequence of $\Delta x= kdx$ with $ k\in \{10,12,15,20,24,30,50,60,75,100 \} $, correspondingly, we have $ M\in \{300, 250,200,150,120,100,60,50,40,30\} $).  



We summarize these settings in Table \ref{tab:num_settings}. 
\begin{table}[htb] 
{\small
	\begin{center} 
		\caption{ \, Numerical settings in data generation and inference for all examples.} \label{tab:num_settings} \vspace{-1mm}
		\begin{tabular}{ l  l }
		\toprule 
			Notation   &  Description \\  \hline
	                $[0,T]=[0,1] $ and $\Omega = [-10, 10]$    & time interval and space domain \\
	                $d t = 0.001$ and $d x = 20/3000$     & time step and space mesh size of true solution\\
	                \multirow{2}{*}{$\Delta t = dt$ and $\Delta x = k dx$ }   & time step and space mesh size of data\\	
		       	                                                                          &  with $ k\in \{10,12,15,20,24,30,50,60,75,100\} $   \\
	                $r_0=0, r_m=10$    &    knots for B-spline    \\
	                $W(u,\widehat u)$  & the Wasserstein distance, defined in \eqref{eq:Wd}\\
	                $E[u, \phi](t) $         & free energy flow, defined in \eqref{eq:FreeEnergy}\\
			\bottomrule	
		\end{tabular}  
	\end{center}
	}
\end{table}

	\noindent\textbf{Settings for inference algorithm} We estimate the interaction kennel $\phi$ by Algorithm \ref{alg:main}. We test both spline basis and RKHS basis. For the spline basis in \eqref{eq:B_spline}, we use $r_0 = 0$ and $r_m = 10$. The range of knot number is $m\in[3, 40]$ for all three examples. We set the degree of the spline to be in $\{0,1,2,3\}$ according to the smoothness of each example. We obtain the RKHS basis functions by solving the eigenvalue problem \eqref{eq:RKHS_basis} as described in Section \ref{sec:basisFun}, and we set the dimension range to be $[2,50]$ for all examples. 
	

We select the optimal dimension of the hypothesis space under adaptive regularization. That is, we find first the optimal regularization constant for each given dimension of hypothesis space as in Section \ref{sec:opt_reg}, then we select the optimal dimension as in Section \ref{sec:opt_dim}. Figure \ref{fig:L&D} demonstrates this process for the opinion dynamics (the plots are similar for other examples).

\begin{figure}[htb] 
  \centering 
  \subfigure[Use the L-curve to find optimal regularization constant]{
    \includegraphics[width=0.51\textwidth]{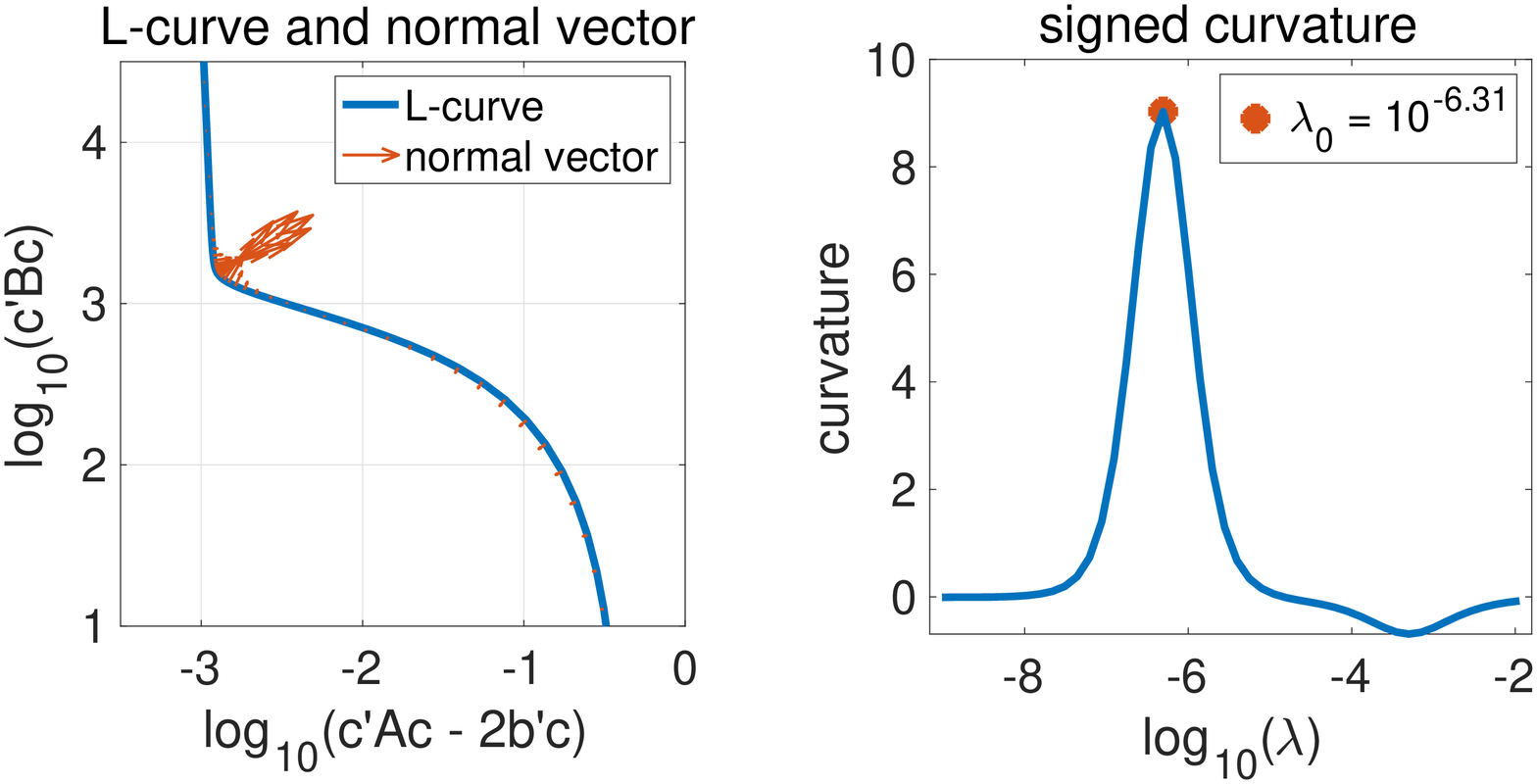}
  }
  \ \ 
  \subfigure[Find the optimal dimension]{
    \includegraphics[width=0.42\textwidth]{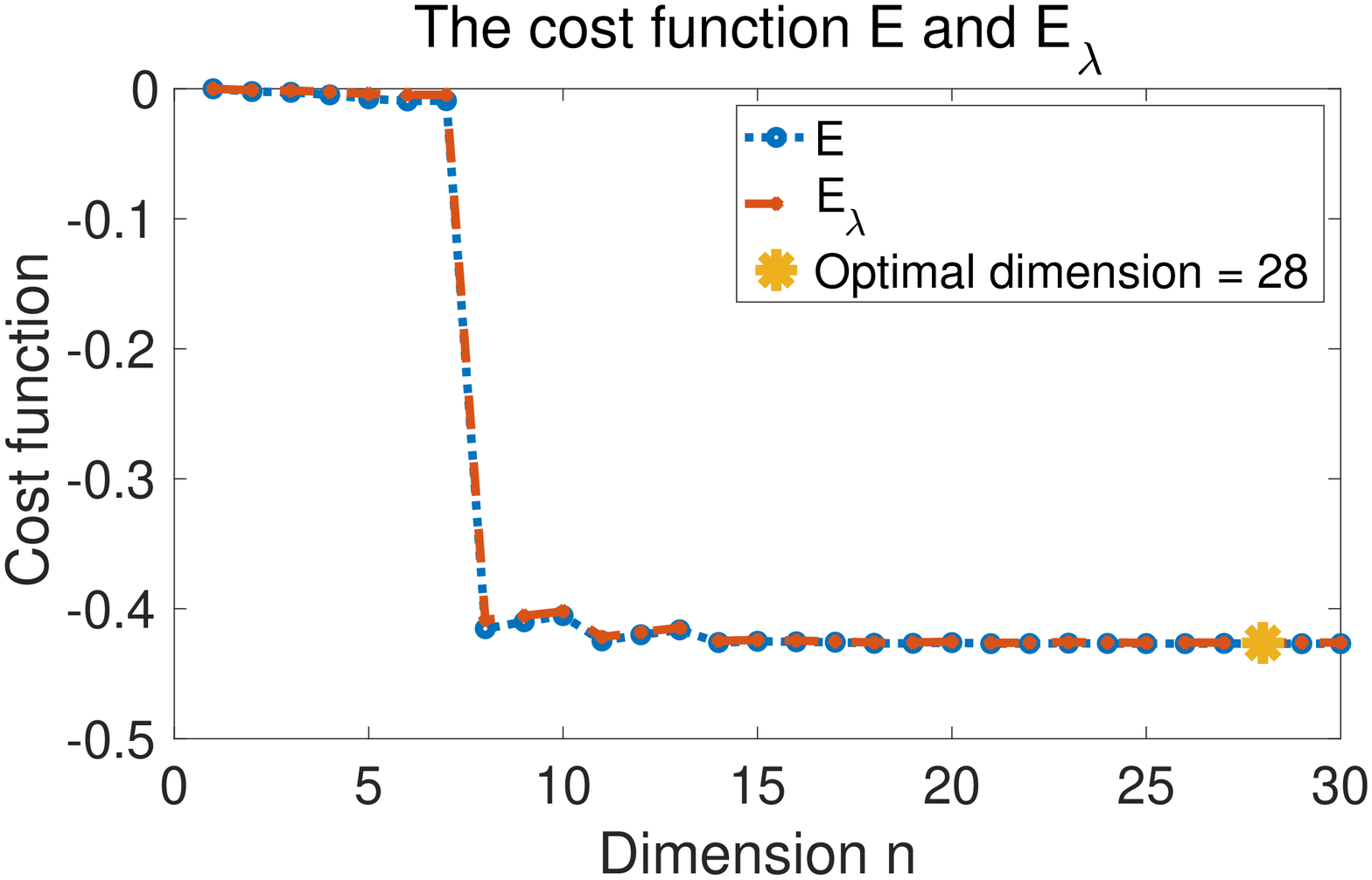}
  }
  \vspace{-3mm}
  \caption{Regularization and selection of optimal dimension for the opinion dynamics (see Section \ref{sec:OD}) using degree 1 spline basis with knot number 30. (a) the use of L-curve to find the optimal regularization constant and (b) shows the selection of optimal dimension for the hypothesis space.
  }
  \label{fig:L&D}
\end{figure}

\paragraph{Results assessment in a typical estimation} We assess the estimator in a typical estimation with $M=200$ in three plots: comparison of the estimator with the truth, the Wasserstein distance between the estimated and true solutions, and the true and reproduced free energy flows.  

\begin{itemize}[leftmargin=4mm]
\item \textbf{Estimated and true kernels} We compare the true and estimated kernels by plotting them side-by-side, together with the density $\rhoT$. The estimated kernels are from either B-spline basis functions or RKHS basis functions, with optimal dimension provided in the context. We also give the relative RKHS error and relative $L^2(\rhoT)$ error.

\item \textbf{Wasserstein Distance} The solutions from the true and estimated kernels can not be distinguished by eyes. To compare them, we compute the 2-Wasserstein distance between them. We consider two sets of solutions, starting from either the original or a new initial condition $\widetilde u_0$. We set $\tilde u_0$  to be the average of the density functions of $\mN(2, 1)$ and $\mN(-2, 1)$, whose major mass is in the support of $\rhoT$. Recall that the 2-Wasserstein distance $W_2(f, g)$ of two probability density functions $f$ and $g$ over $\Omega$ with second order moments is given by
\begin{equation}\label{eq:Wd}
	W_2(f, g) := \brak{\inf_{\gamma \in \Gamma(f, g)}\int_{\Omega \times \Omega} |x-y|^2d\gamma(x,y)}^{1/2},
\end{equation}
where $\Gamma(f, g)$ denotes the set of all measures on $\Omega \times \Omega$ with $f$ and $g$ as marginals (see e.g.,\cite{Ambrosio2008GradientFlow}). We use the  numerical method for Wasserstein distance as in \cite{Kolbe2020WassersteinDistance}. This method is based on an observation in \cite{carrillo2004wasserstein}. More precisely, suppose $F$ is the probability distribution induced by the density $f$ and define its pseudo inverse by setting, for $\alpha \in (0,1)$, $F^{-1}(\alpha) = \inf\{x:F(x)>\alpha\}.$ Similarly we have $G$ and $G^{-1}$. Then the $L^2$ distance of the pseudo inverse functions
\begin{equation*}
d_2(f,g) = \brak{\int_0^1 [F^{-1}(\alpha) - G^{-1}(\alpha)]^2 d\alpha}^{1/2}
\end{equation*}
is equal to the 2-Wasserstein distance $W_2(f,g)$.

\item \textbf{Free Energy}  We also compare the true and estimated free energy flows. The free energy, whose Wasserstein gradient is the right hand side of the mean-field equation \cite{carrillo2019_AggregationdiffusionEquations}, is defined by
\begin{equation}\label{eq:FreeEnergy}
	E[u, \phi](t) = \nu \int_{\R^d} u\ \text{log}(u) dx + \int_{\R^d} u (u*\Phi)dx,\text{ with } \Phi(r)=\int_0^r \phi(s)ds.
\end{equation}
The true and estimated free energy flows are $E[u, \phi](t)$ and $E[\widehat u, \widehat \phi](t)$, respectively. 
\end{itemize}

\paragraph{Rate of convergence} We test the rate of convergence of the estimator in $L^2(\rhoT)$ error and empirical error functional $\mE_{M,L}$ in \eqref{eq:costFn_data}, as $\Delta x$ changes. We consider the downsampled data with $\Delta x =k dx$, where $ k\in \{10,12,15,20,24,30,50,60,75,100 \}$. 
 We use the spline basis, because the data-adaptive RKHS basis is not suitable for such a test. 
 

For each estimator, we compute the $\|\phi - \hat \phi\|_{L^2(\rhoT)} $ by Riemann sum approximation.  The measure $\rhoT$, defined in \eqref{eq:rhoavg}, is approximated from data by $\rho_L^M$ in \eqref{rhoAppr}, using the data with the finest mesh $\Delta x = 10dx$ (equivalently, M = 300). We compute the RKHS error as in \eqref{errfcnl_Ab} with a little modification: we use the $A$ computed from the finest data and individual $b$ and $c$ for different samples.

The rate of convergence of the error functional is twice the rate of the estimator in the RKHS metric, because of \eqref{eq:cosrFn_bilinear}. From data, we obtain a sequence of values for the error functional $\mE_{M,L}(\hat \phi_{n,M,L})$ as in \eqref{eq:costFn_data}. We compute its convergence rate $\beta$ by optimization: suppose the error functional has the form $E_k=a\Delta x_k^\beta-\gamma$ with the multiplicative constant $a$ and $\gamma =\rkhs{\phi,\phi}_{\GbarT}$ unknown, we compute $\beta$ by 
$\beta, \gamma, a)=\argmin{\beta, c, a} \sum_k |\log (E_k+\gamma)- \beta \log \Delta x_k - \log a|^2$.

\begin{figure}[htb]
  \centering 
  \subfigure[The solution $u(x,t)$]{    \includegraphics[width=0.32\textwidth]{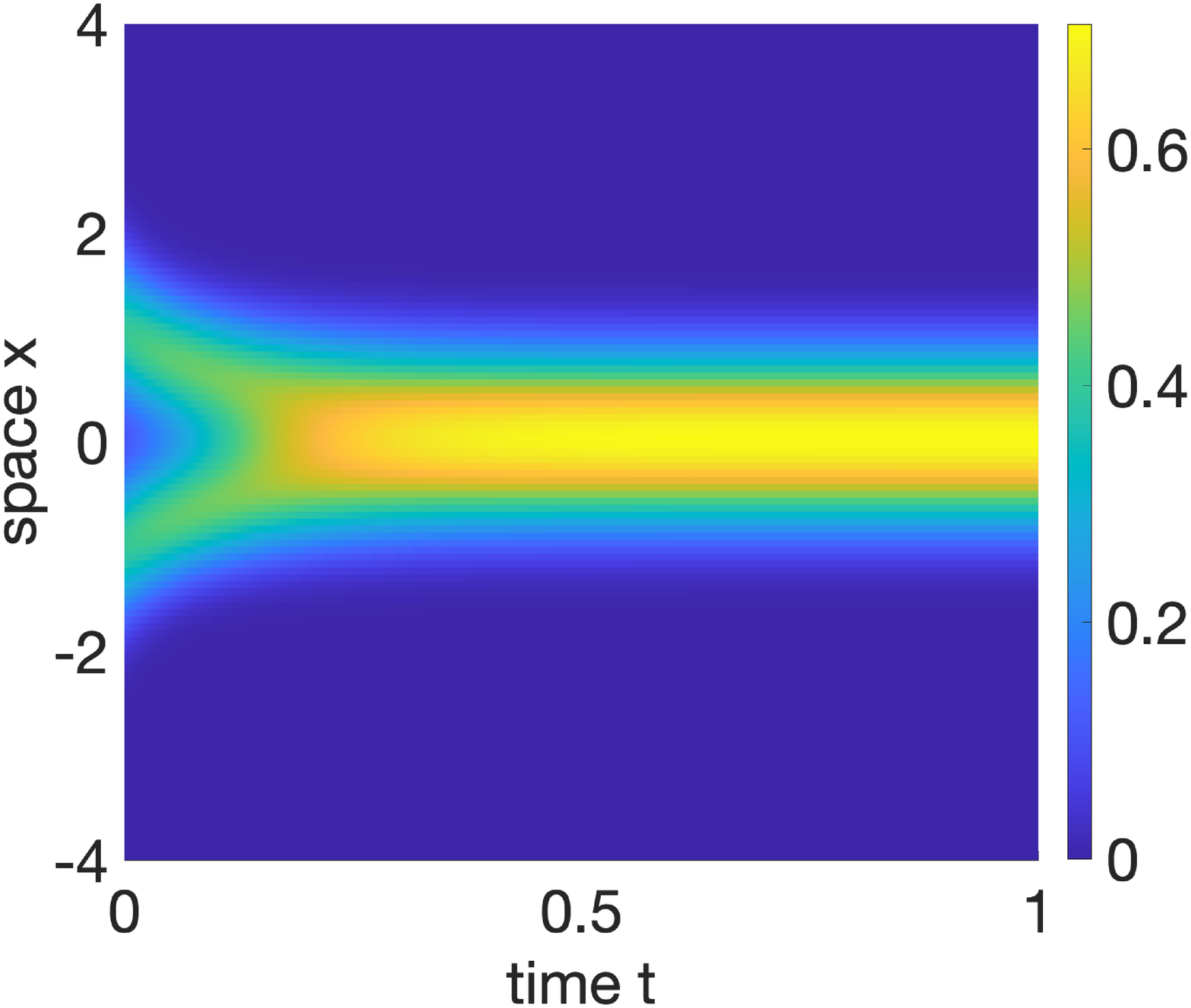}  }
  \subfigure[Wasserstein distance $W(u, \widehat u)( t)$]{     \includegraphics[width=0.3\textwidth]{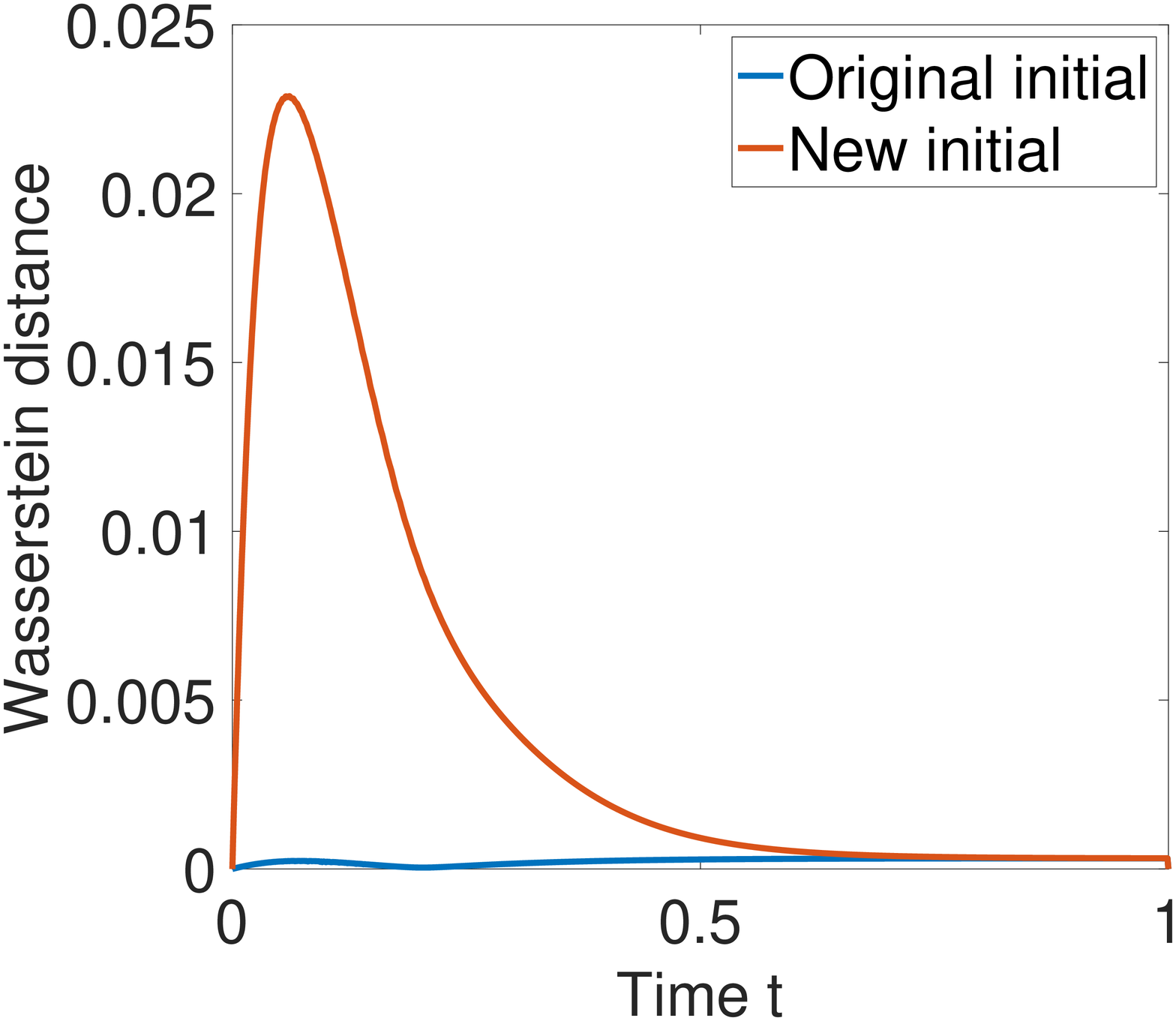}  }\ \ \ 
    \subfigure[Convergence rate of $L^2(\rhoT)$ error]{    \includegraphics[width=0.3\textwidth]{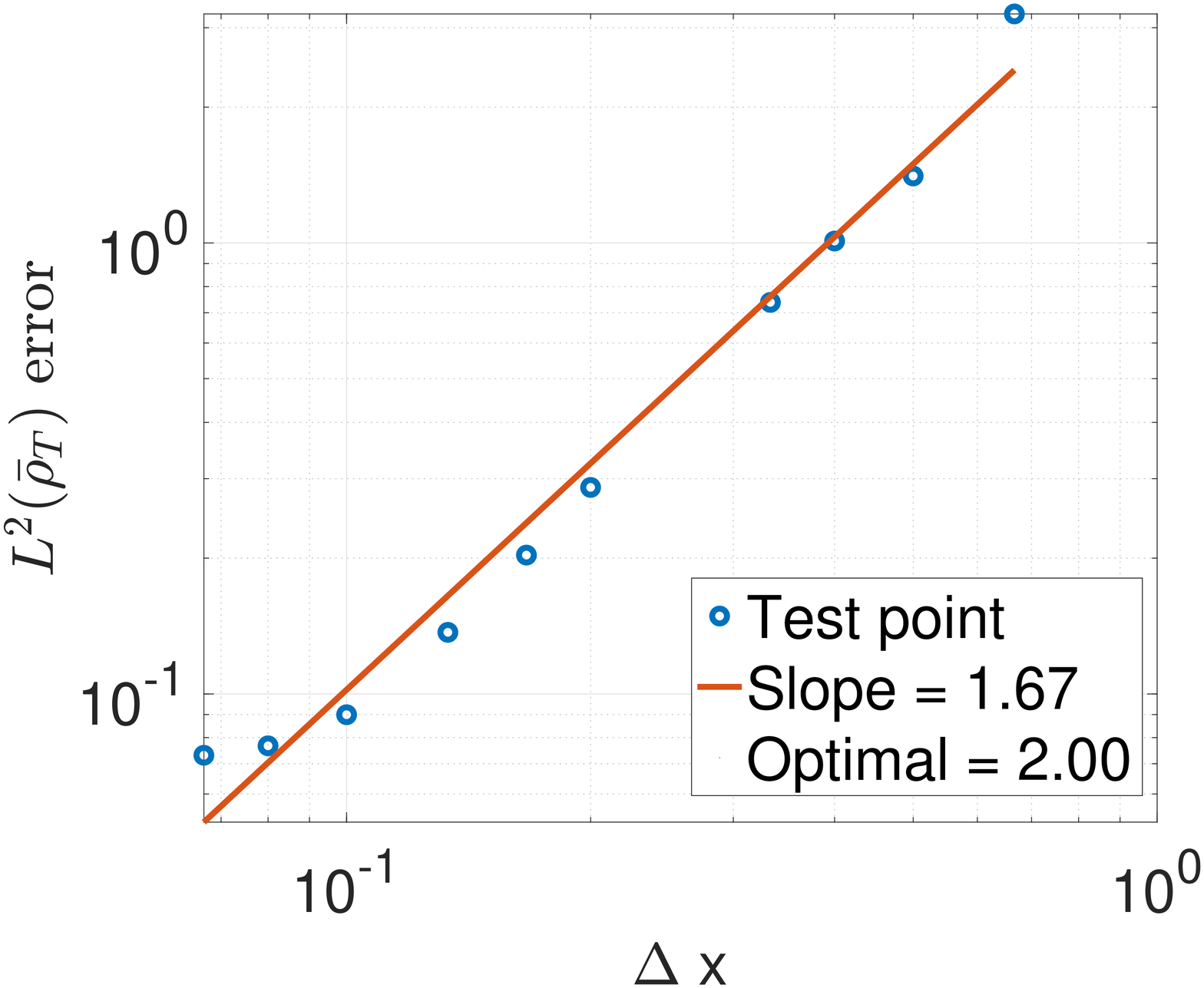}  }
    \subfigure[Estimators of $\phi$]{     \includegraphics[width=0.32\textwidth]{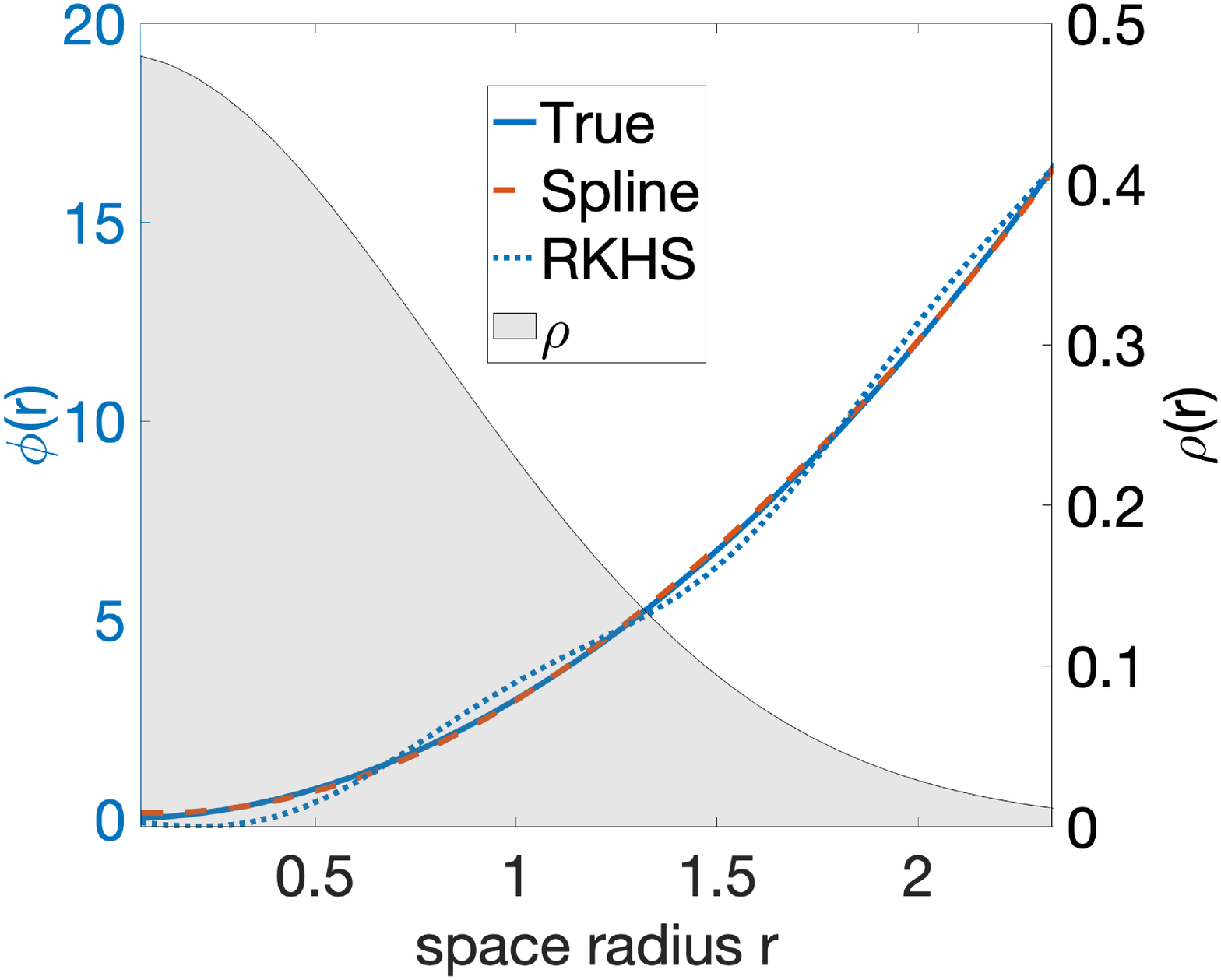}   }
    \subfigure[Free energy]{    \includegraphics[width=0.3\textwidth]{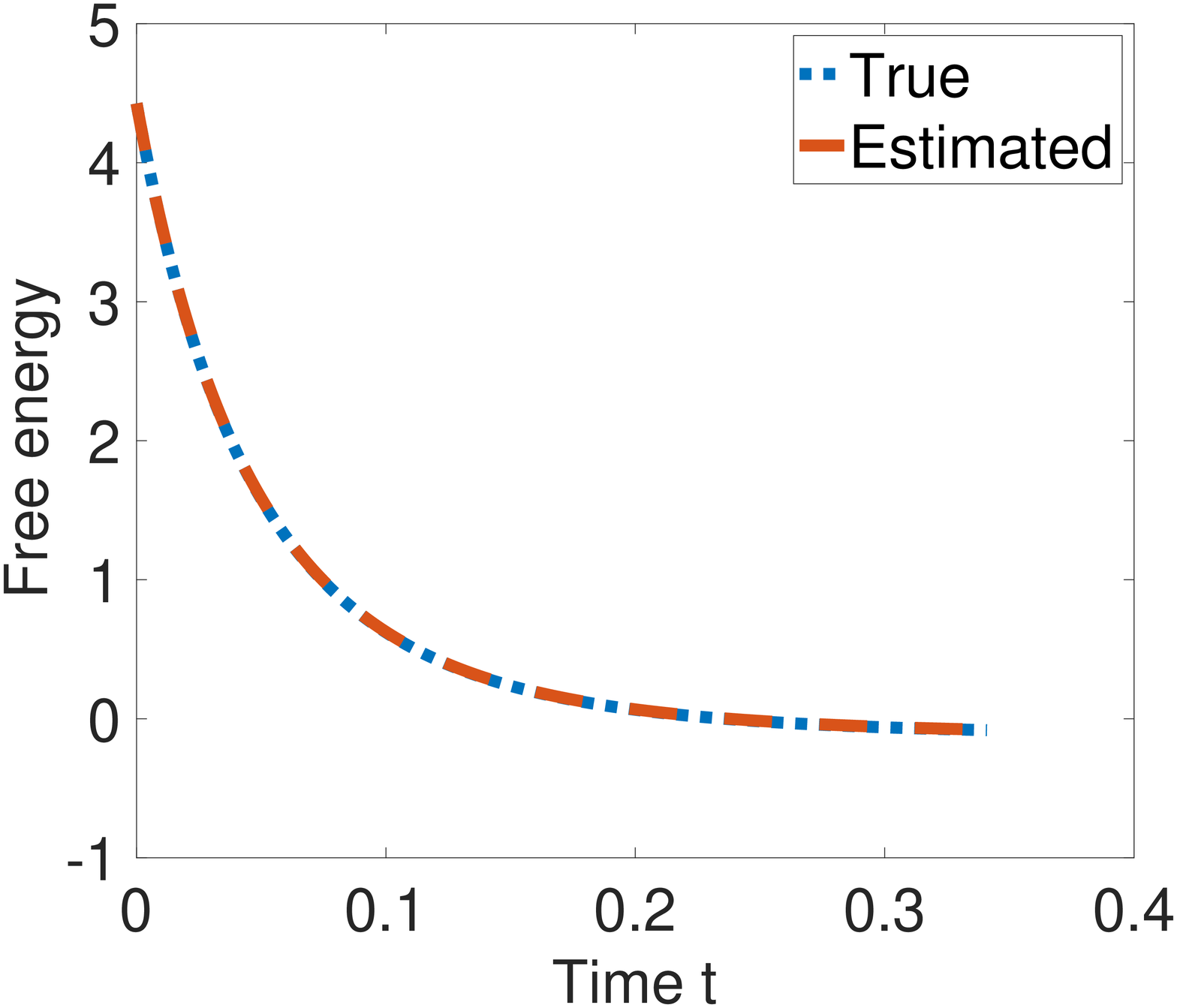}   }\ \ \ 
    \subfigure[Convergence rate of $\mE_{M,L}$ in $\Delta x$ ]{    \includegraphics[width=0.3\textwidth]{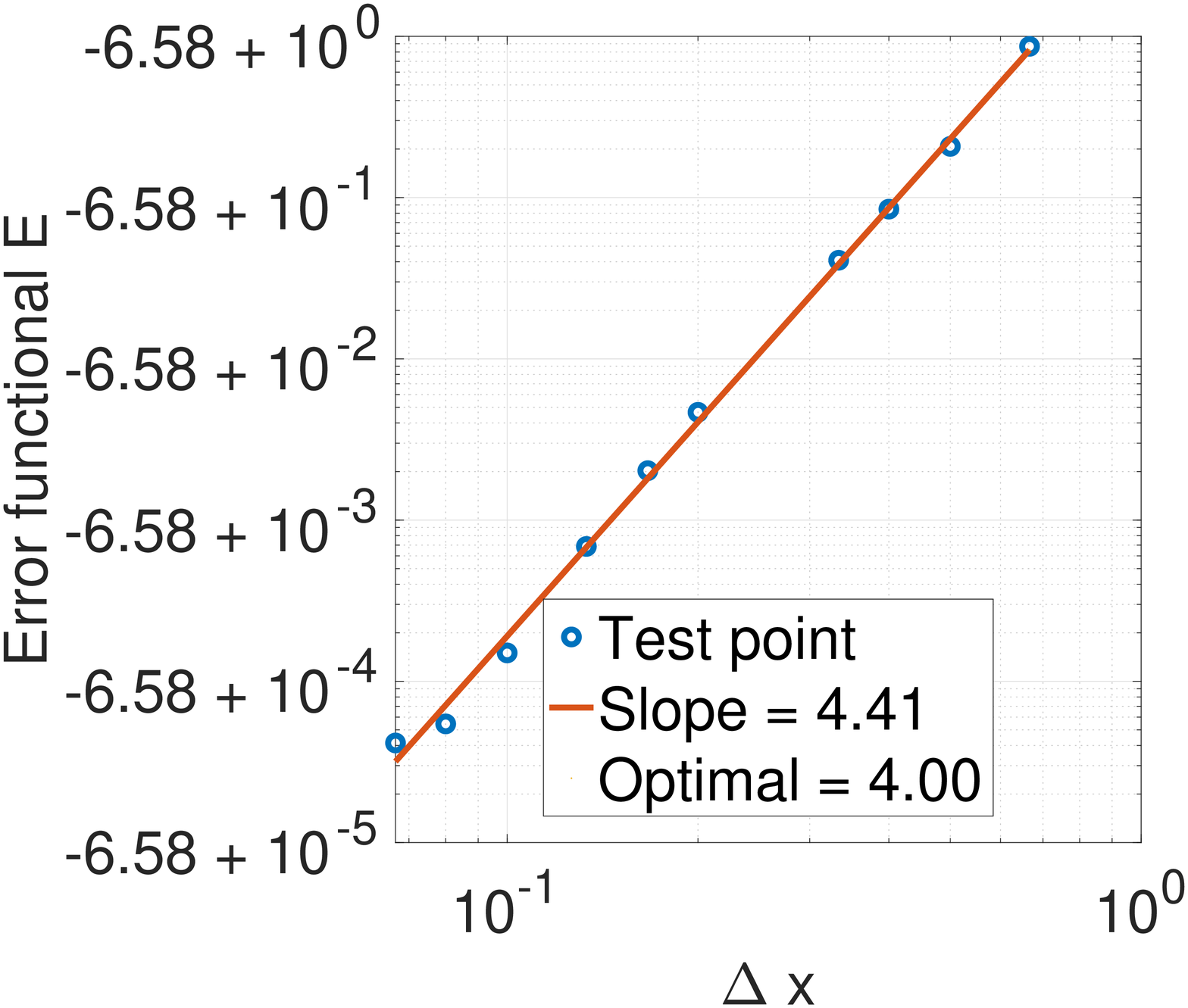}
  }  \vspace{-3mm}
  \caption{Learning results of cubic potential. Subfigure (a) show the solution formulating a steady state. (d) shows the estimated  kernels, superimposed with the empirical density $\rhoT$. The relative errors are shown in Table \ref{tab:3x}. (b) shows that the Wasserstein distance between the solutions are small. In particular, they all tend to the same steady state at large time. In (e) the free energy is well-learned. 
 Subfigure (c) and (f) show the rates of convergence of the $L^2(\rhoT)$ error and the error functional. The two rates are close to the optimal rates in Theorem \ref{thm:rate}. }  \label{fig:3x}
\end{figure}
\vspace{-3mm}
\subsection{Cubic potential }\label{sec:cubic}
The cubic potential $\Phi(x) = |x|^3$ (equivalently, $\phi(r)= 3r^2$) is of special interest for modeling of granular media \cite{malrieu2003_ConvergenceEquilibriumc,cattiaux2007_ProbabilisticApproacha}. Since $\Phi$ is only non-uniformly convex on a single point, the equation \eqref{eq:MFE} possess a unique steady state \cite{cattiaux2007_ProbabilisticApproacha} and thus the SDE \eqref{eq:nSDE} is ergodic. We set $\nu = 1$  and take  $u_0(x)$ to be the average of the densities of $\mN(1, 0.25)$ and $\mN(-1, 0.25)$. 

We use B-spline basis with degree 2, matching the degree of the true kernel. 

Figure \ref{fig:3x} presents the estimation results. Sub-figure (a) shows the solution $u(x,t)$, which is dominated by the diffusion. Subfigure (d) shows that the estimated kernels, either by B-spline or RKHS basis, are close to the true kernel, with relative errors shown in Table \ref{tab:3x}. 
\vspace{-3mm}
\begin{table}[H]
 \centering
 {\small
		\caption{ \, Relative errors of estimators in Figure \ref{fig:3x}(d).
		 } \label{tab:3x} \vspace{-3mm}
		\begin{tabular}{ c | c | c c  }
		\toprule 
			           &                              &  \multicolumn{2}{c}{Relative error}   \\  
		Basis type  & optimal dimension & in $L^2(\rhoT)$ ($\|\phi\|=3.84$)  & in RKHS  ($\|\phi\|=2.57$)   	 \\ \hline
	         B-spline     & 10                          & 1.90\%                        &   0.43\%        \\
	         RKHS       & 13                       &      7.98\%                  &    0.51\%  \\
			\bottomrule	
		\end{tabular}  
}
\end{table}
 Subfigure (b) plots the Wasserstein distances between true and reproduced solutions, showing that the solution with the original initial condition is accurately reproduced. For the new initial condition $\widetilde u_0$, the Wasserstein distance is relatively large at first and then decays to the same level as the original initial condition case. This is because: (1) $\widetilde u_0$, the mixture of $\mN(2, 1)$ and $\mN(-2, 1)$, has a large probability mass outside of the ``well-learned region, the large-probability region of $\rhoT$; (2) the system converge to a unique state state for different initial conditions. 
 Subfigure (e)  shows that the free energy flow is almost perfectly reproduced. Subfigure (c) and (f) show that we achieve near optimal rates of convergence of the estimator in $L^2(\rhoT)$ and the error functional. The two rates are close to the theoretical optimal rates $\frac{\alpha s}{s+1} = 2$ and $\frac{2\alpha s}{s+1} =4$ in Theorem \ref{thm:rate}, where $s=2$ is the degree of the B-spline set according the true kernel and $\alpha=2$ is the order of trapezoidal integration. 
  

 \begin{figure}[htb]
  \centering
  \subfigure[The solution $u(x,t)$]{ \includegraphics[width=0.32\textwidth]{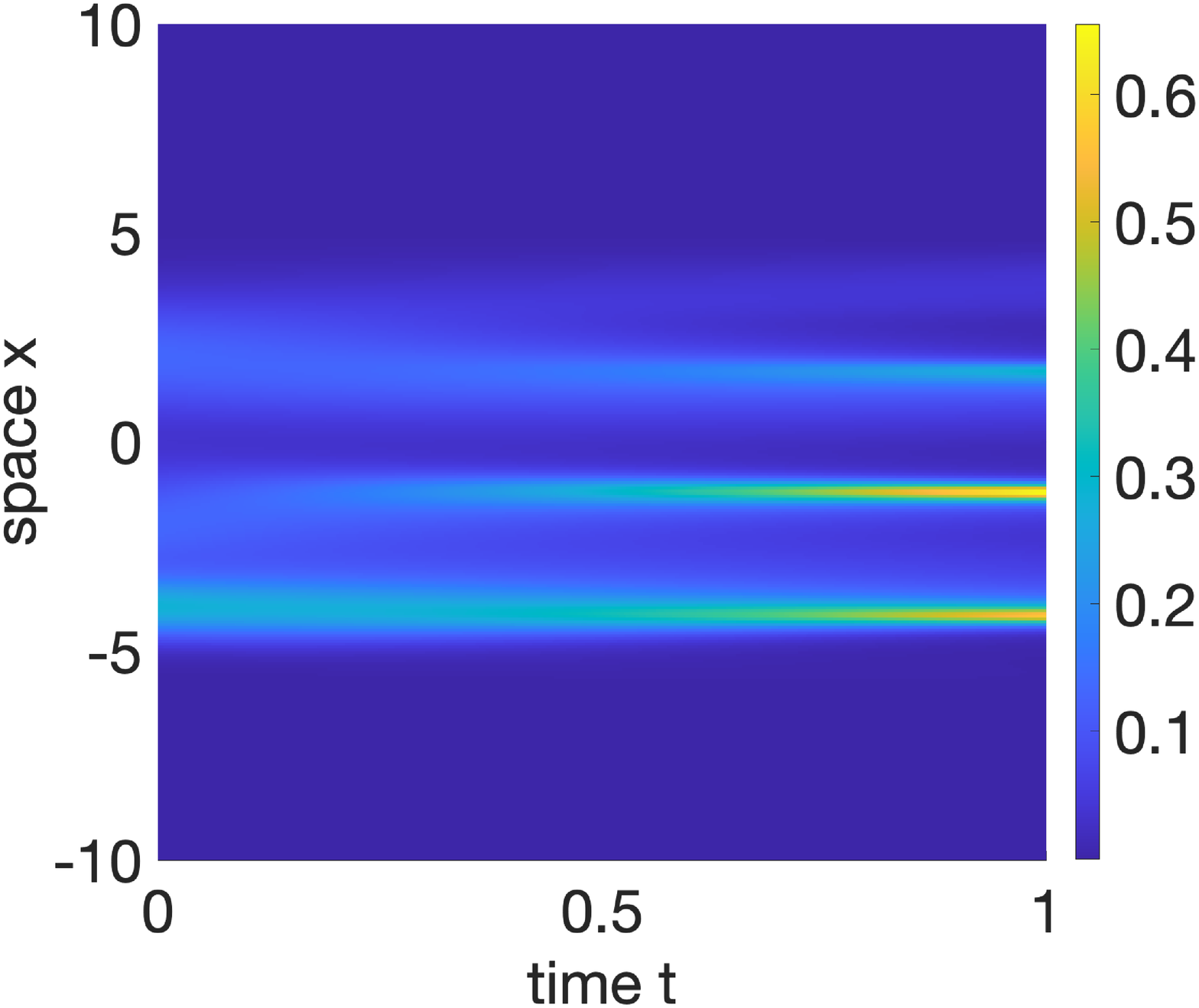}   }
  \subfigure[Wasserstein distance $W(u, \widehat u)( t)$]{ \includegraphics[width=0.3\textwidth]{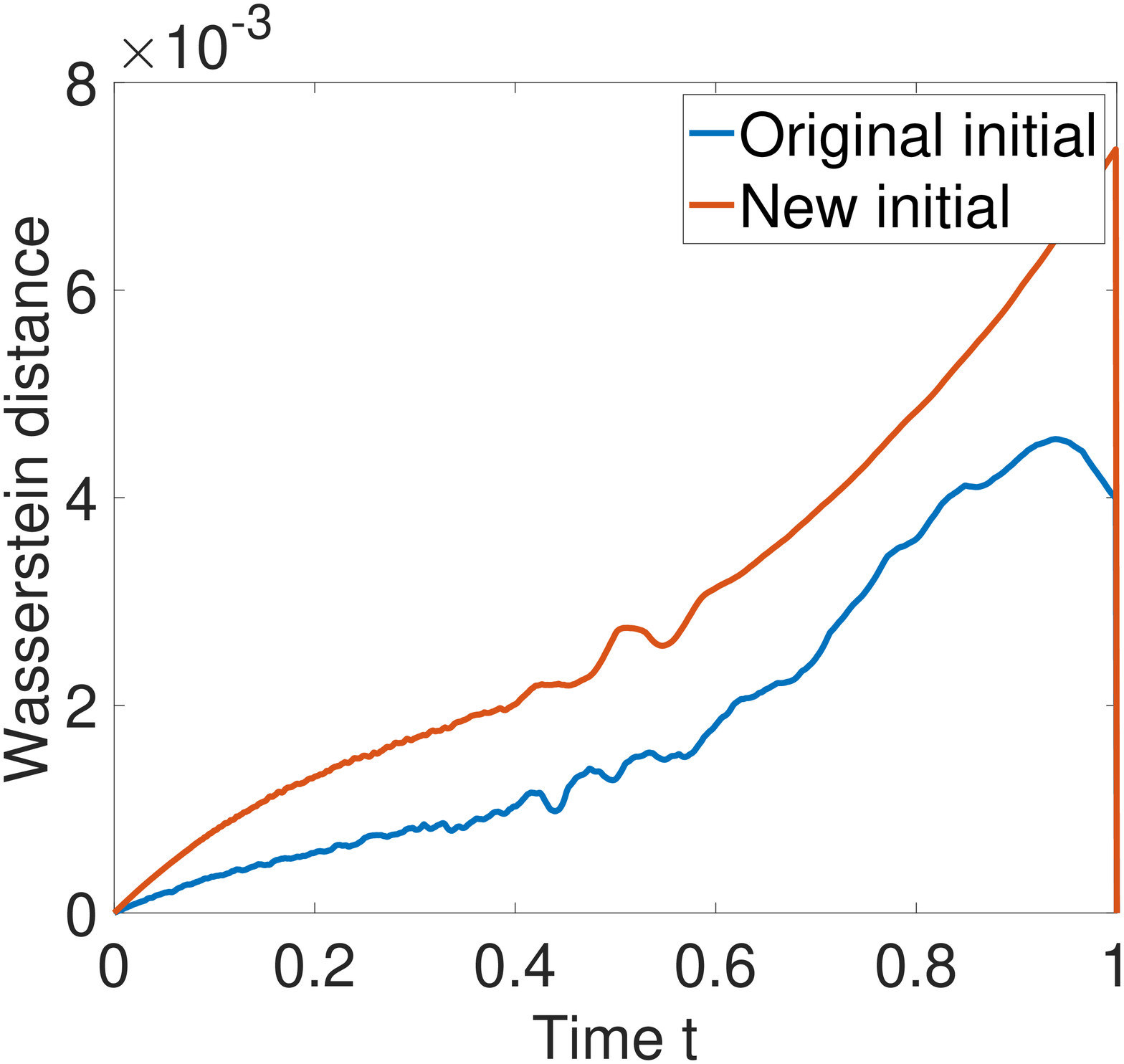} }\ \ \ 
  \subfigure[Convergence rate of $L^2(\rhoT)$ error]{  \includegraphics[width=0.3\textwidth]{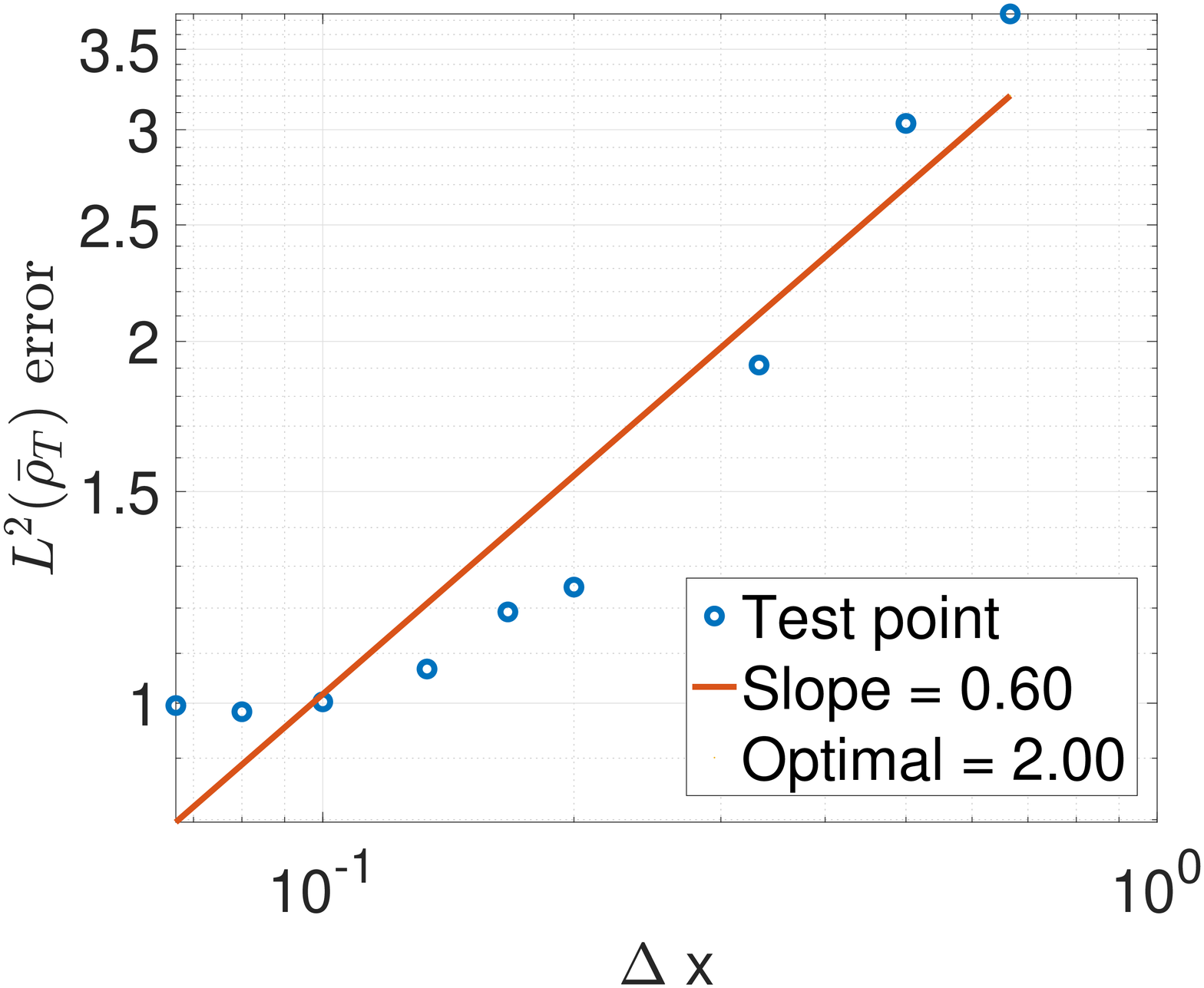}   }
    \subfigure[Estimators of $\phi$]{ \includegraphics[width=0.32\textwidth]{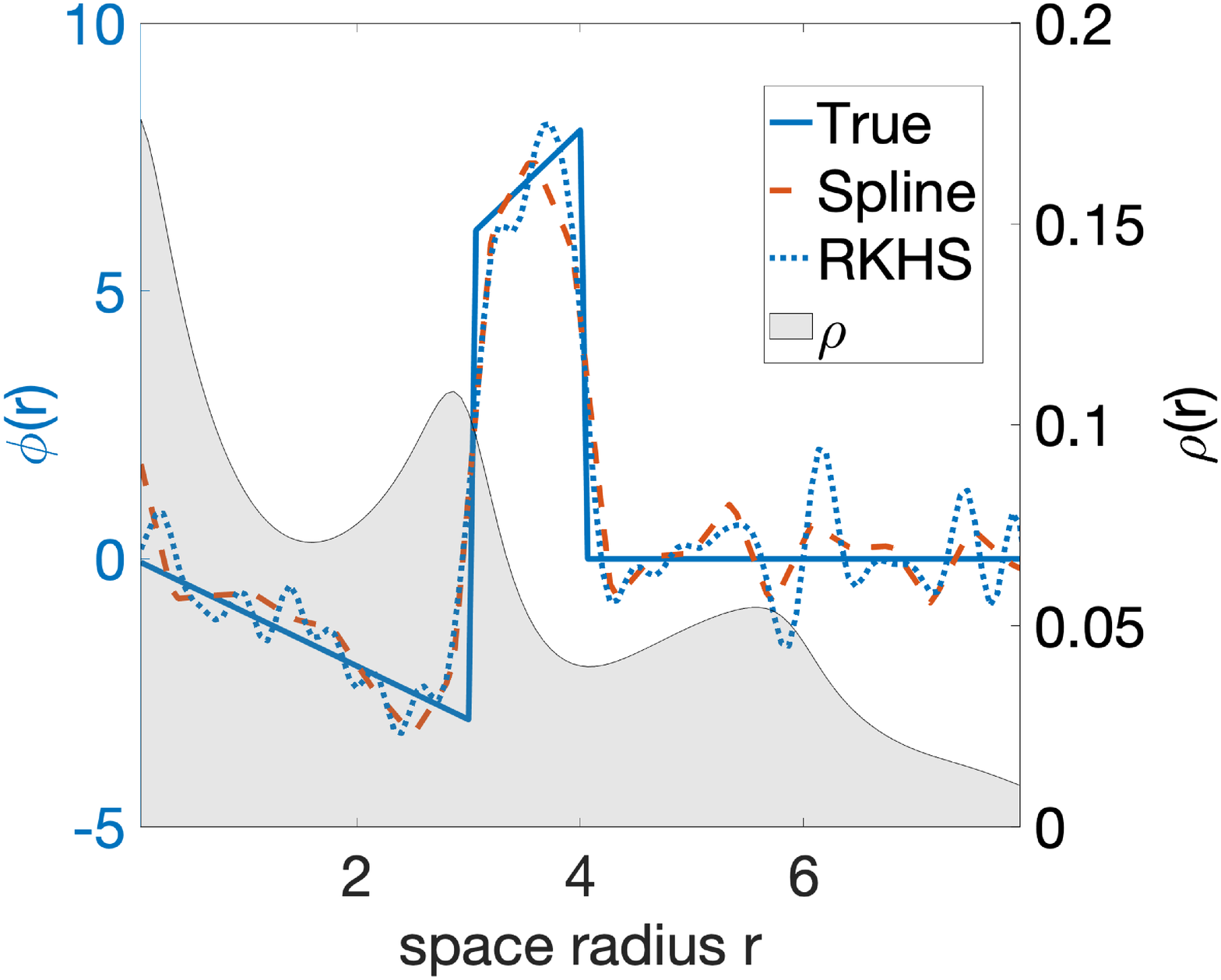}   }
    \subfigure[Free energy ]{ \includegraphics[width=0.3\textwidth]{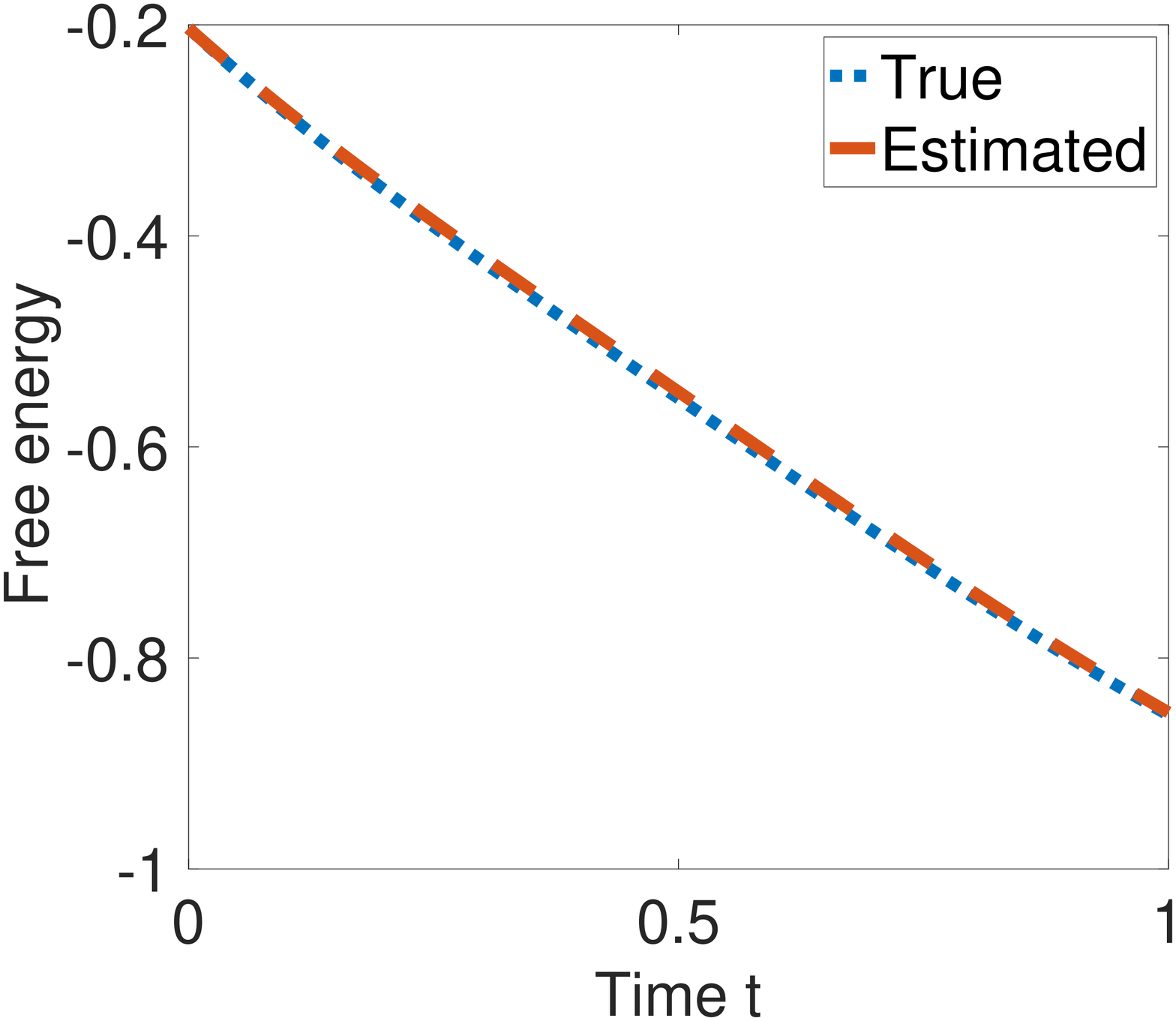}   }\ \ \ 
    \subfigure[Convergence rate of $\mE_{M,L}$ in $\Delta x$]{ \includegraphics[width=0.3\textwidth]{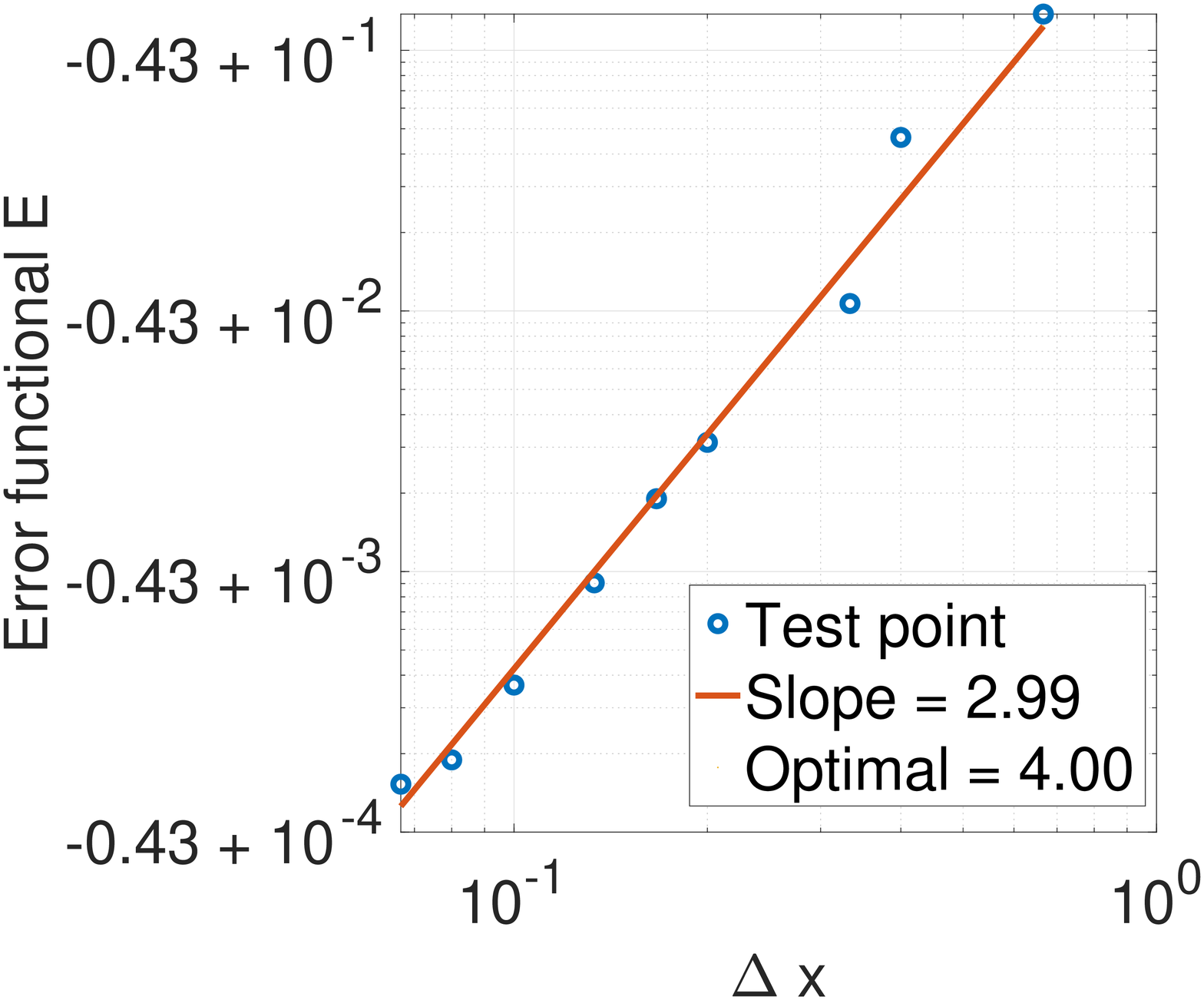}  }
  \vspace{-3mm}
  \caption{Learning results of opinion dynamics.  The solution in (a) formulates clusters. Subfigure (d) shows the estimated kernels, with relative errors shown in Table \ref{tab:OD}. 
  The Wasserstein distance in (b) shows that the solution is accurately reproduced by the estimated kernel with spline, for both the original and new initial conditions. Subfigure (e) show that the free energy is almost perfectly reproduced. Subfigure (c) and (f) show the rates of convergence of the estimator $L^2(\rhoT)$ and the error functional. Due to the lack of regularity of the true kernel, the optimal rate in Theorem \ref{thm:rate} does not apply.  }  \label{fig:OD}
\end{figure}

\subsection{Opinion dynamics}\label{sec:OD}
Opinion dynamics (see \cite{motsch2014_HeterophiliousDynamicsa} and the reference therein) describes the evolution of opinions of agents in social networks. We consider the case when the system formulates clusters of opinions: 
the interaction function $f(\abs{x}) = \phi(\abs{x})/\abs{x}$ is piecewise constant,   \vspace{-3mm}
\begin{equation*}
f(r) = 
\left\{      \begin{array}{lr}
             -1, & 0\leq r\leq 3, \\
             2,  & 3< r\leq 4,\\
             0,  & 4< r.
             \end{array}
\right.
\end{equation*}
and hence $\phi(\abs{x}) = f(\abs{x})\abs{x}$ is piecewise linear; the initial value $u_0(x)$ is density of the Gaussian mixture $\frac{1}{3} [ \mN(-2, 1)+\mN(-4, 0.5^2) + \mN(2, 1)]$;  the viscosity constant is $\nu = 0.1$. 

We set the degree of spline basis to be 1.

 Figure \ref{fig:OD} presents the estimation results. Subfigure (a) is the solution $u(x,t)$, which shows three clusters forming at time $T=1$. Subfigure (d) shows the estimated and true kernels, with relative errors shown in Table \ref{tab:OD}. 
 \vspace{-3mm}
\begin{table}[h]
 \centering
 {\small
		\caption{ \, Relative errors of estimators in Figure \ref{fig:OD}(d).
		 } \label{tab:OD} \vspace{-3mm}
		\begin{tabular}{ c | c | c c  }
		\toprule 
			           &                              &  \multicolumn{2}{c}{Relative error}   \\  
		Basis type  & optimal dimension & in $L^2(\rhoT)$ ($\|\phi\|=2.71$)  & in RKHS  ($\|\phi\|=0.65$)   	 \\ \hline
	         B-spline     & 28                          &  36.74\%                     &   8.10\%        \\
	         RKHS       & 40                       &      46.66\%                  &    7.46\%  \\
			\bottomrule	
		\end{tabular}  
}
\end{table}
Subfigure (b) is the Wasserstein distance between true and reproduced solutions, showing that the solutions are accurately reproduced. The Wasserstein distance increases because of the formulating clusters, which lead to singular measures. Subfigure (e)  shows that the free energy flow is almost perfectly reproduced. Subfigure (c) and (f) shows the rates of convergence of the estimator in $L^2(\rhoT)$ and the error functional. Due to the lack of regularity of $\phi$, 
 both rates are smaller than the optimal rates $\frac{\alpha s}{s+1} =2$ and $\frac{2 \alpha s}{s+1} =4$ in Theorem \ref{thm:rate}, where $s=1$ is the degree of B-splines, set by assuming $\phi \in W^{1,\infty}$ by ignoring the discontinuities, and $\alpha=2$ is the order of trapezoidal integration.  

 \begin{figure}[htb]
  \centering 
  \subfigure[The solution $u(x,t)$]{  \includegraphics[width=0.32\textwidth]{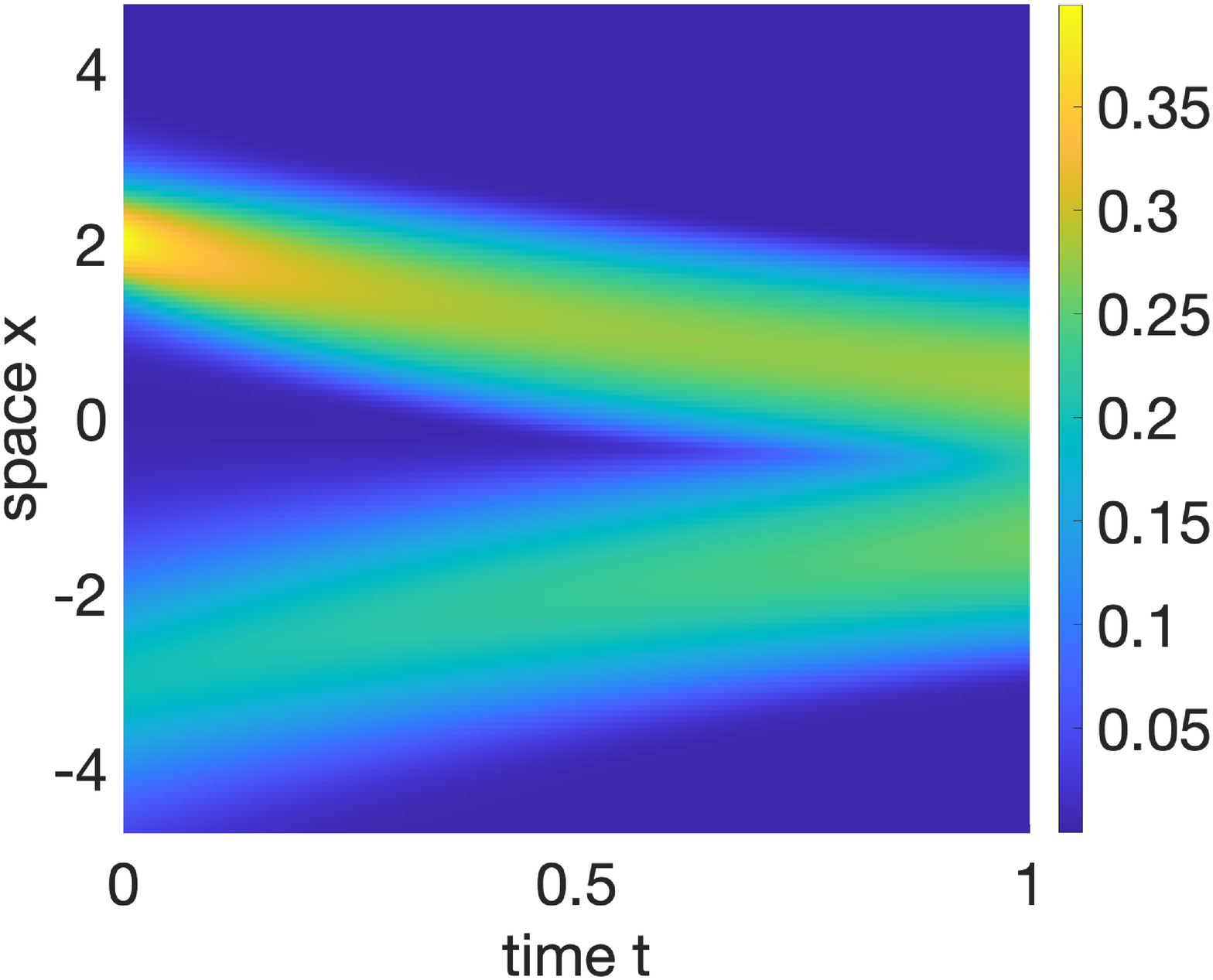}  }
  \subfigure[Wasserstein distance $W(u, \widehat u)( t)$]{    \includegraphics[width=0.3\textwidth]{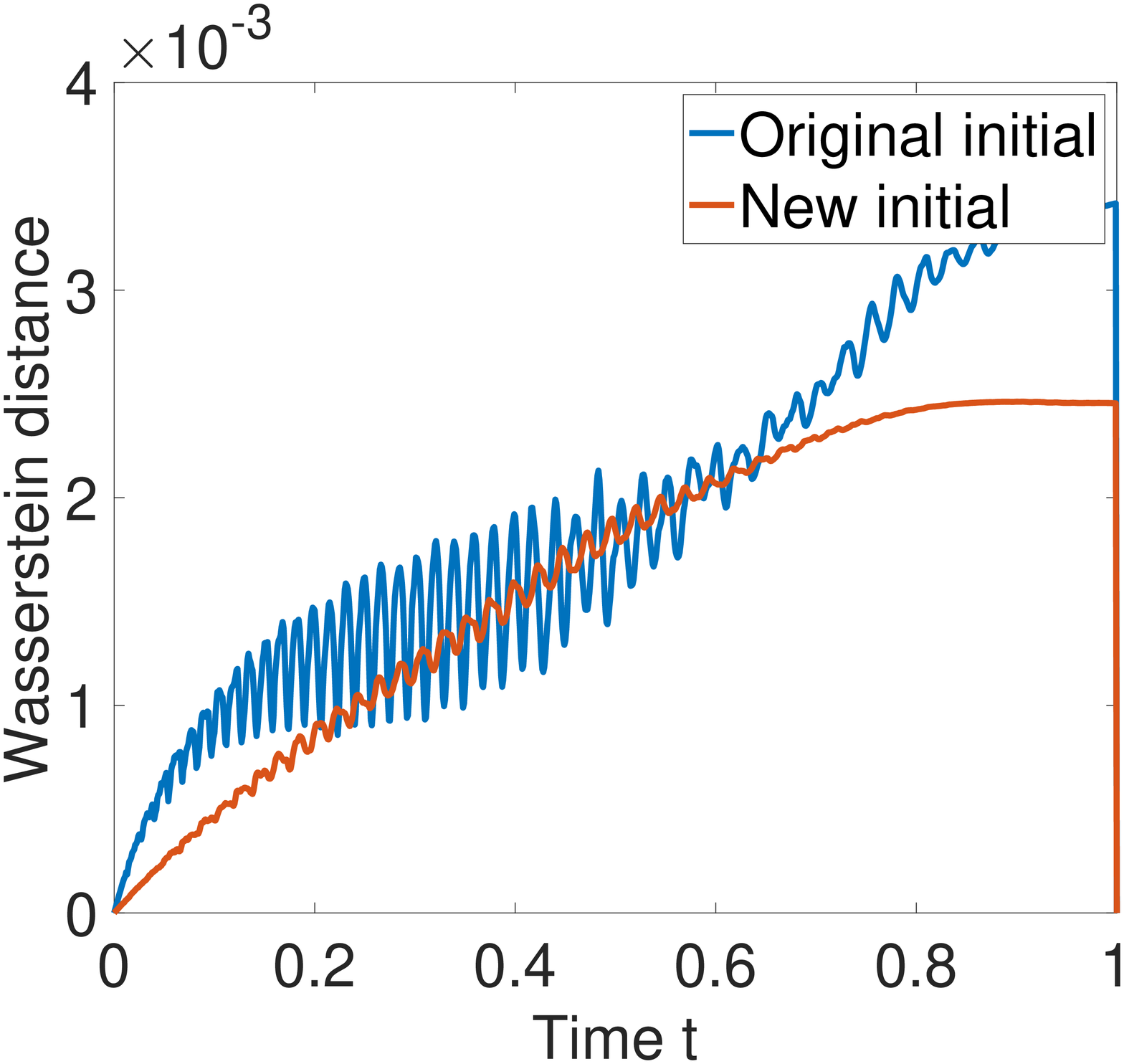}  }\ \ \ 
  \subfigure[Convergence rate of $L^2(\rhoT)$ error]{    \includegraphics[width=0.3\textwidth]{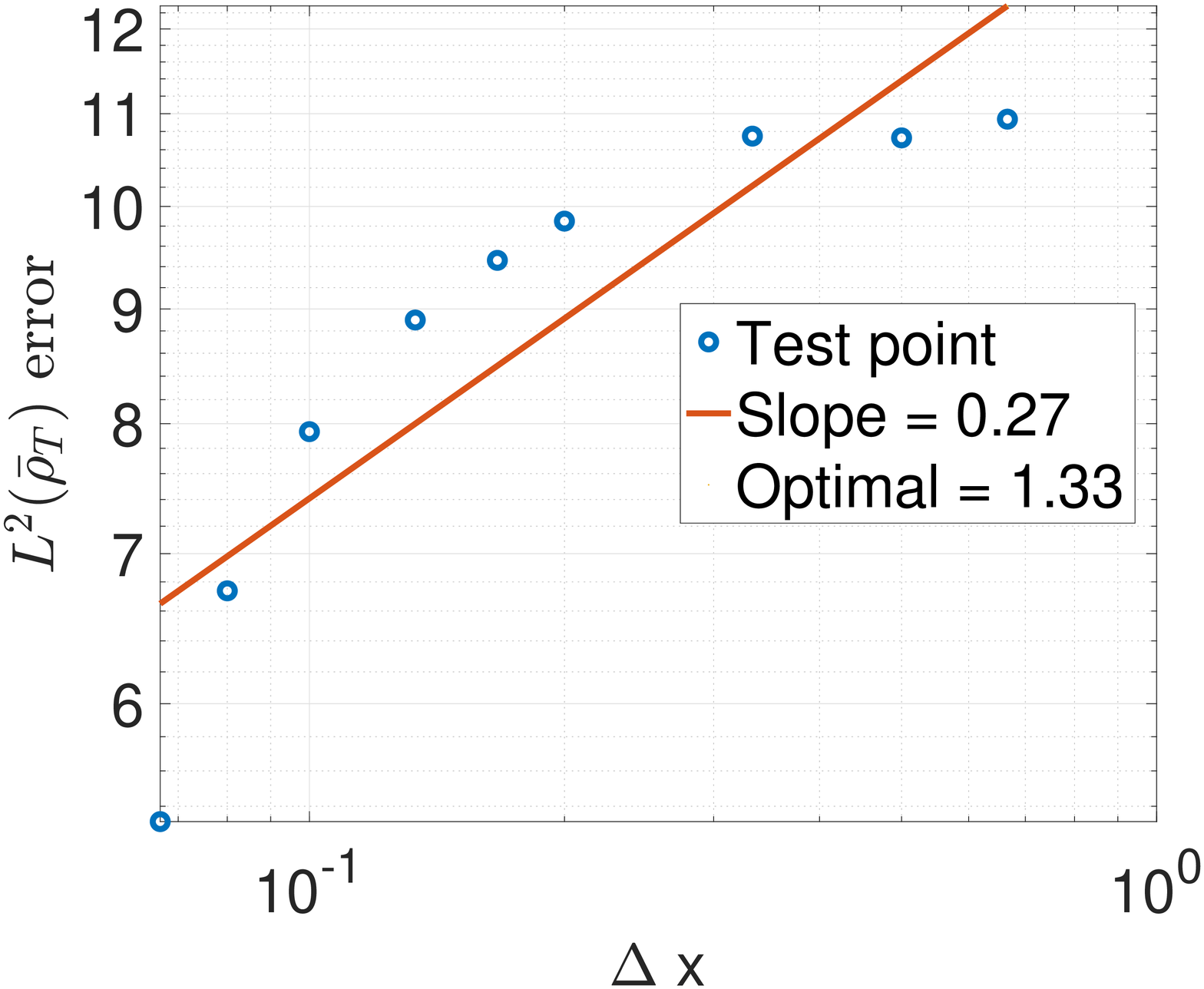}   }
  \subfigure[Estimators of $\phi$]{    \includegraphics[width=0.32\textwidth]{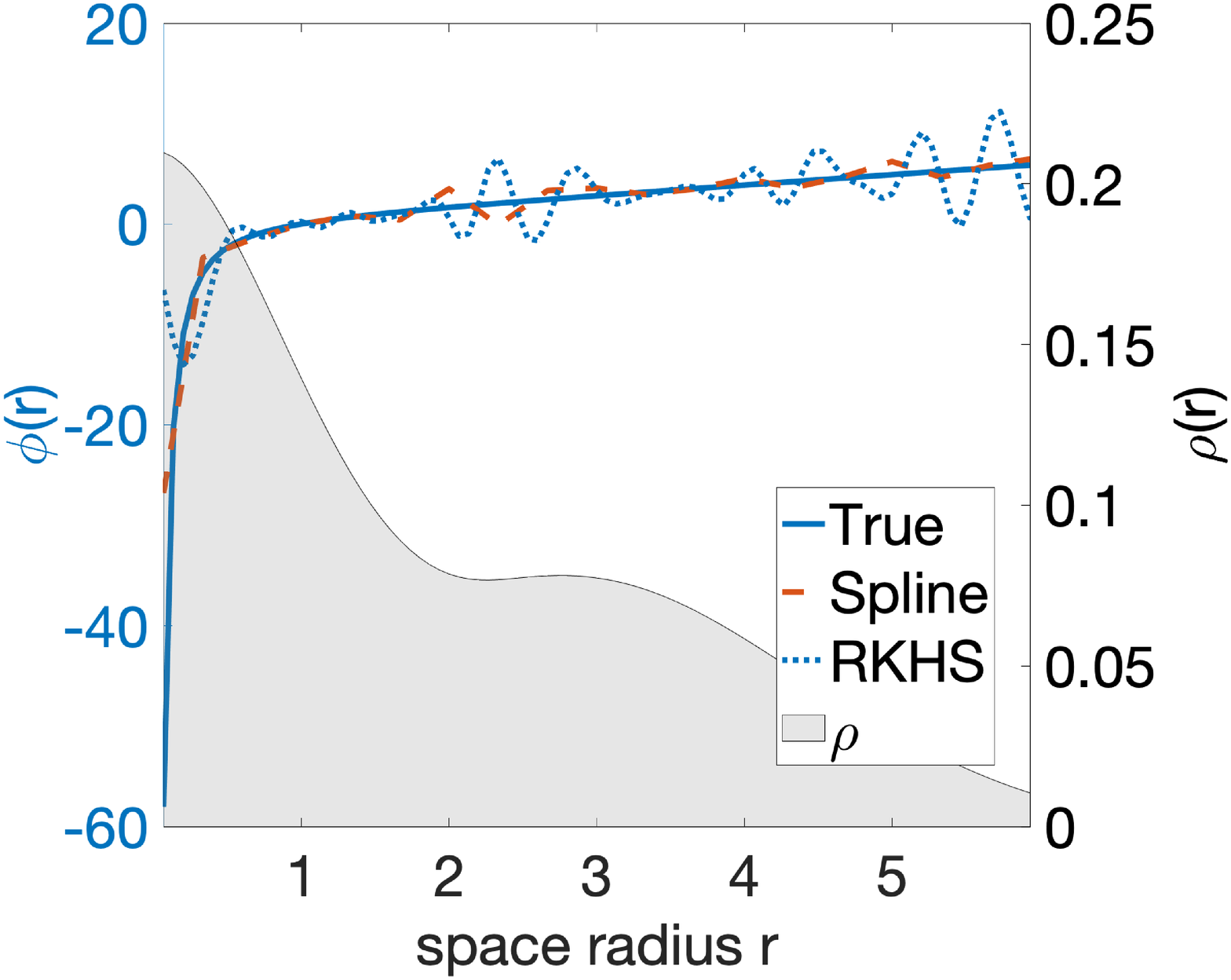}  }
  \subfigure[Free energy]{    \includegraphics[width=0.3\textwidth]{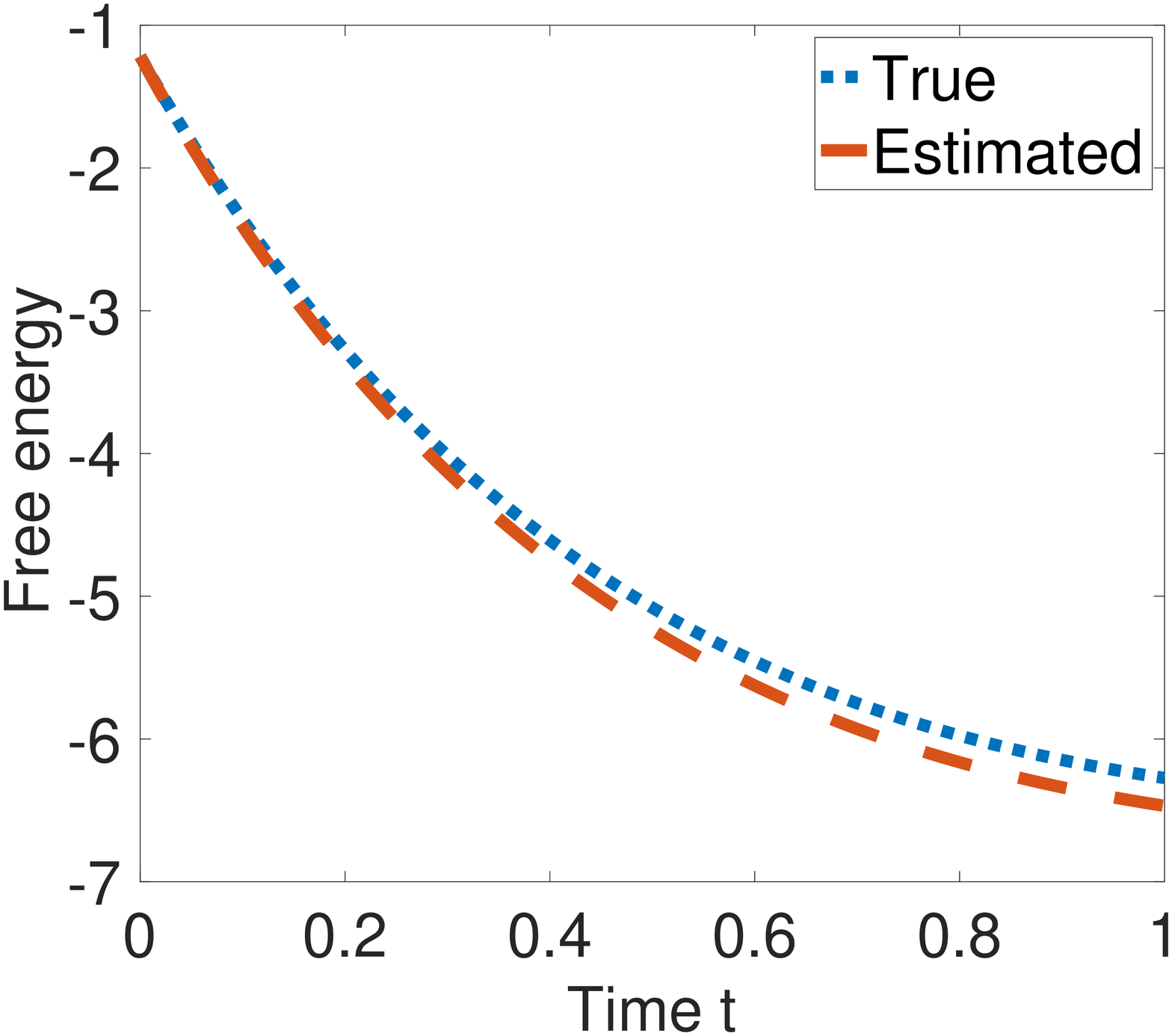}    }\ \ \ 
  \subfigure[Convergence rate of $\mE_{M,L}$ in $\Delta x$]{
    \includegraphics[width=0.3\textwidth]{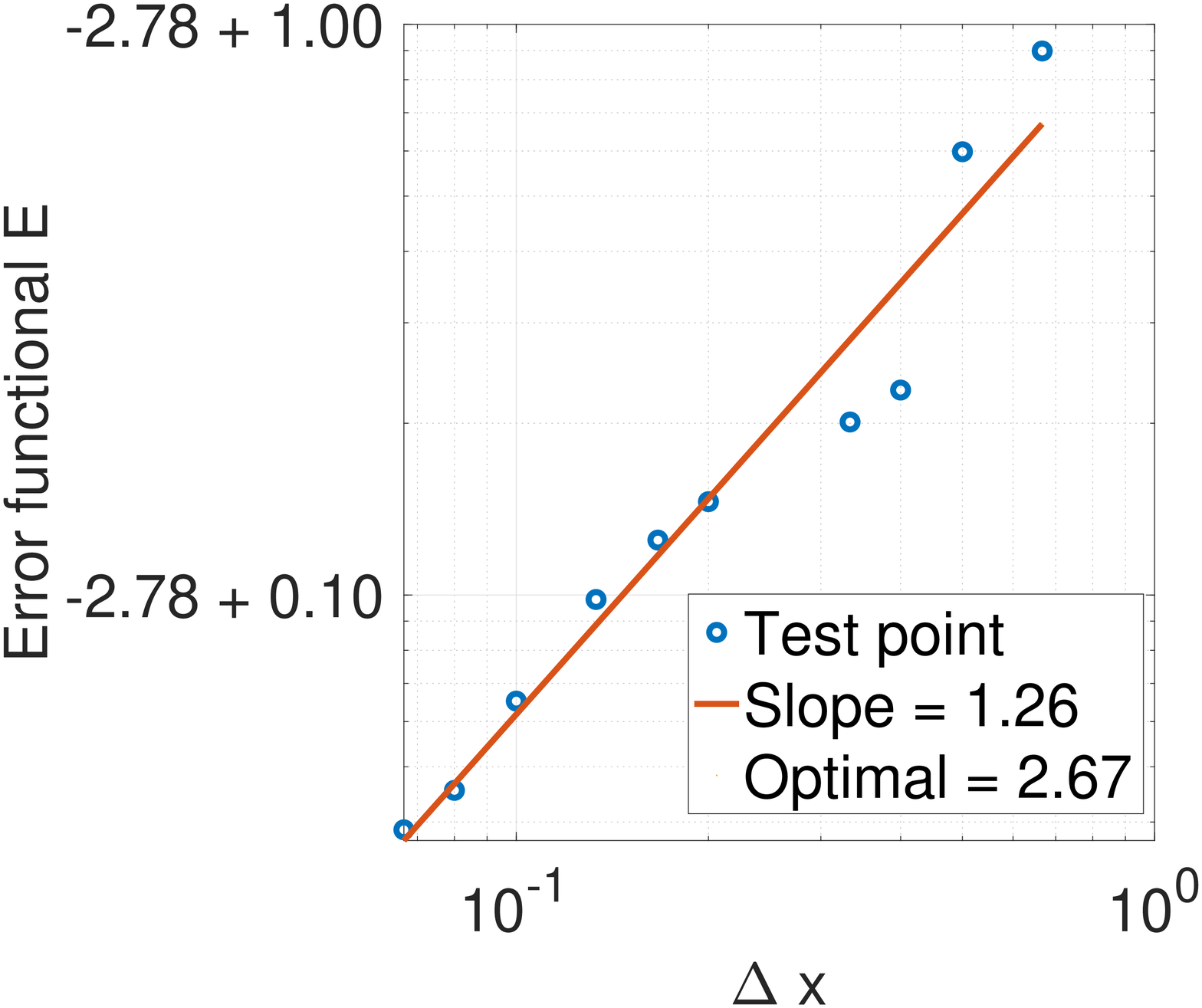} }\vspace{-3mm}
  \caption{Learning result of a repulsion-attraction potential.  In (a) we can see a repulsion-attraction effect. The two cluster tend to get closer, but not merging together because of the repulsion. In (d), we can see that our method tried to learn the singularity at the origin. The relative errors shown in Table \ref{tab:AB}. (b) shows that the Wasserstein distance between the solutions are small. The free energy estimate in (e) is pretty good. This is the intuitive nature of our algorithm. Subfigure (c) and (f) shows the rate of convergence of the estimator $L^2(\rhoT)$ and the error functional. These rates are relatively low due to the singularity of the true kernel (and the optimal rate in Theorem \ref{thm:rate} does not apply). 
  }  \label{fig:AB}
\end{figure}

\subsection{The repulsion-attraction potential} \label{sec:RA}
To model the collision free particles, the repulsion-attraction (RA) potential with singularity at 0 is widely used. We consider the power law RA potential \cite{carrillo2019_AggregationdiffusionEquations}: 
$$ \Phi(x) = \frac{|x|^p}{p} - \frac{|x|^q}{q},\ \ 2\geq p>q>-d.$$
When $0<q<1$, 0 is a singular point of $\phi$. When $q<0$, 0 is also a singular point of $\Phi$. We take $p = 2$ and $q = -0.5$. To show the repulsion clearly, we use a small viscosity constant $v = 0.01$. We take the initial value $u_0$ being the mean of the densities of $\mN(2, 0.25)$ and $\mN(-3, 1)$. We use B-spline basis with degree 1.

Figure \ref{fig:AB} exhibits the estimation results. Sub-figure (a) shows the solution $u(x,t)$, which demonstrate the attraction and repulsion under the influence of diffusion. Subfigure (d) shows that the estimated kernels, either by B-spline or RKHS basis, are close to the true kernel, with relative errors shown in Table \ref{tab:AB}. 
\vspace{-3mm}
\begin{table}[H]
 \centering
 {\small
		\caption{ \, Relative errors of estimators in Figure \ref{fig:AB}(d).
		 } \label{tab:AB} \vspace{-3mm}
		\begin{tabular}{ c | c | c c  }
		\toprule 
			           &                              &  \multicolumn{2}{c}{Relative error}   \\  
		Basis type  & optimal dimension & in $L^2(\rhoT)$ ($\|\phi\|=10.84$)  & in RKHS  ($\|\phi\|=1.59$)   	 \\ \hline
	         B-spline     & 30                          & 49.06\%                     &  4.36\%        \\
	         RKHS       & 40                       &      86.96\%                 &    2.28\%  \\
			\bottomrule	
		\end{tabular}  
}
\end{table}
The large relative error in $L^2(\rhoT)$ is due to the singularity at the origin and that the measure $\rhoT$ does not reflect the repulsion.  Nevertheless, Subfigure (b) shows that the estimated kernel can reproduce accurate solutions, suggesting that the $L^2(\rhoT)$ norm may not be suitable for the assessment of the estimator of singular kernels.   The slightly oscillating Wasserstein distances indicate that the error in the estimator does not propagate. Subfigure (e) shows that the free energy flow is almost perfectly reproduced. Subfigure (c) and (f) shows relatively low rates of convergence of the estimator in $L^2(\rhoT)$ and the error functional. These rates are relatively low. Due to the singularity of the kernel at the origin, the optimal rate in Theorem \ref{thm:rate} (with $\alpha=2, s=2$) does not apply.

\section{Conclusion and future work}\label{sec:conclusion} 

We have introduced a scalable nonparametric learning algorithm to estimate the interaction kernel from discrete data with performance guarantee. The algorithm learns the kernel on a data-adaptive hypothesis space by least squares with regularization. The estimator is the minimizer of a probabilistic error functional derived from the likelihood of the diffusion process whose Fokker-Planck equation is the mean-field equation. It does not require spatial derivatives of the solution, thus it is suitable for discrete data. We prove that, under regularity conditions, the estimator converges as the space mesh size decreases, in the RKHS space of learning, at a rate optimal in the sense that it is almost the order of the numerical integrator in the evaluation. The empirical error functional converges at twice the rate, and thus this rate can be used for model selection. 

We demonstrate the performance of our algorithm on three typical examples: the opinion dynamics with a piecewise linear kernel, the granular media model with a quadratic kernel and the aggregation-diffusion with a repulsive-attractive kernel. In all the examples, the estimator is accurate, and it can reproduce solutions with small Wasserstein distance to the truth and with almost perfect free energy. For the quadratic kernel, which is smooth, our estimator achieve the optimal rate of convergence. For non-smooth piecewise linear kernel and the singular repulsive-attractive kernel, our estimator converges at sub-optimal rates.  

\bigskip
There are many directions of future research to extend the present work. We mention a few here:
\begin{itemize}
\item Second-order systems of interacting particles/agents and systems with multiple potentials.    
\item Non-radial interaction kernels. Many applications involve non-radial kernels, such as the Biot-Savart kernel \cite{jabin2017_MeanFielda} and the local time kernel for viscous Burgers equation \cite{sznitman1991_TopicsPropagation}. The major issue is the curse of dimensionality in representing high-dimensional functions. We expect the data-adaptive RKHS basis \cite{cucker2007learning} to continue to work. 
\item High-dimensional space.  It become impractical to have data on mesh-grids when the dimension $d$ of the space is large, because the size of mesh-grids increases exponentially in $d$. It is natural to consider data consisting of samples of particles and approximate the error functional by Monte Carlo. Our algorithm applies and the optimal convergence rate would be $s/(2s+1)$. 
\item Partial observations of large systems of interacting agents. When only partial particles of a large system of agents are observed, it is an ill-posed problem to estimate the position of other agents \cite{zhang2020cluster}. By the propagations of chaos, we may view these particles as independent trajectories and estimate the interaction kernel from the SDE of the mean-field equation. 
\end{itemize}

\appendix
\section{Appendix: errors in the numerical integrations}\label{appendixA}
We provide technical bounds on the error of the numerical integrator based on Riemann sum (the Euler scheme). Let's start with a reminder about the error of the Euler scheme. 
\begin{lemma}\label{lemma:Euler_appr}
 Let  $x_m =  m\Delta x$ and $t_l = l\Delta t$ be the mesh given in \eqref{eq:meshData1D}. Suppose that Assumption \rmref{assumption_uxt} holds true. Suppose $f\in W^{1,\infty}(\Omega\times[0,T])$. Then the Euler scheme is of order $\sqrt{\Delta x^2 + \Delta t^2}$, i.e.
  \begin{align}
\overline{\mathcal{D}}(f)& :=    \abs{  \frac{1}{T}  \int_\Omega \int_0^T f(x,t)\ dxdt -    \sum_{l=1}^L \sum_{m=1}^M f(x_m, t_l)\Delta x\Delta t }
    \leq  |\Omega| \norm{\nabla_{x,t} f}_\LOT  (\Delta x + \Delta t), \label{eq:Euler_order} \\
\mathcal{D}_t(f)& :=\abs{ \int_\Omega  f udx  -  \sum_{m=1}^M f(x_m,t) u(x_m,t) \Delta x} \leq  \left( \norm{\nabla f}_\LO + |\Omega| \norm{f}_\LO \norm{\nabla u}_\LO \right) \Delta x. \label{eq:sum4Exp}
\end{align}
\end{lemma}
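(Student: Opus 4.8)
The plan is to prove the two error estimates in Lemma~\ref{lemma:Euler_appr} by the standard one-step analysis of Riemann sums, treating \eqref{eq:Euler_order} as a two-dimensional (space-time) Riemann sum and \eqref{eq:sum4Exp} as a one-dimensional (space only) Riemann sum at a fixed time $t$.

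For \eqref{eq:Euler_order}, I would write the difference between the integral and the sum as a sum over the mesh cells $Q_{m,l} = [x_{m-1},x_m]\times[t_{l-1},t_l]$ (after rescaling by $1/T$, so $\frac1T = \frac{1}{L\Delta t}$ and the left Riemann sum weight is $\frac{\Delta x \Delta t}{T}$). On each cell,
\[
\abs{\frac{1}{T}\int_{Q_{m,l}} f(x,t)\,dx\,dt - \frac{\Delta x\Delta t}{T} f(x_m,t_l)}
\leq \frac{\Delta x\Delta t}{T}\sup_{(x,t)\in Q_{m,l}} \abs{f(x,t)-f(x_m,t_l)},
\]
and by the mean value inequality the pointwise difference is at most $\norm{\nabla_{x,t}f}_{\infty}(\Delta x+\Delta t)$ (using the $\ell^1$-type bound $\abs{x-x_m}+\abs{t-t_l}\le\Delta x+\Delta t$ on the cell). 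Summing over the $M$ spatial cells and $L$ temporal cells gives $M\Delta x = |\Omega|$ and $L\Delta t = T$, so the total is bounded by $|\Omega|\,\norm{\nabla_{x,t}f}_{\infty}(\Delta x+\Delta t)$, which is \eqref{eq:Euler_order}. One only needs $f\in W^{1,\infty}(\Omega\times[0,T])$, which is assumed; Assumption~\ref{assumption_uxt} is not even needed here but is harmless to invoke.

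For \eqref{eq:sum4Exp}, fix $t$ and set $g(x) = f(x,t)u(x,t)$; then the quantity to bound is $\abs{\int_\Omega g\,dx - \sum_m g(x_m)\Delta x}$, a one-dimensional left Riemann sum. The cell-wise bound gives $\sum_m \Delta x \sup_{x\in[x_{m-1},x_m]}\abs{g(x)-g(x_m)} \le |\Omega|\norm{g'}_{\infty}\Delta x$. Then I would expand $g' = f'u + fu'$ by the product rule and bound $\norm{g'}_\infty \le \norm{f'}_\infty\norm{u}_\infty + \norm{f}_\infty\norm{u'}_\infty$. To match the stated constant one uses $|\Omega|\norm{u}_\infty \ge \int_\Omega u\,dx = 1$ (this is exactly \eqref{eq:intU=1}), so $|\Omega|\norm{f'}_\infty\norm{u}_\infty \cdot$ is absorbed — more precisely, $|\Omega|\norm{g'}_\infty \le |\Omega|\norm{u}_\infty\norm{\nabla f}_\infty + |\Omega|\norm{f}_\infty\norm{\nabla u}_\infty$, and since $|\Omega|\norm{u}_\infty$ appears, one rewrites using the normalization; the cleaner route is to bound $\int_\Omega \abs{f'(x)}u(x)\,dx \le \norm{\nabla f}_\infty\int_\Omega u = \norm{\nabla f}_\infty$ directly when estimating the $f'u$ contribution to the cell differences, which yields the $\norm{\nabla f}_\infty$ term without the $|\Omega|$ factor, and bound the $fu'$ contribution by $|\Omega|\norm{f}_\infty\norm{\nabla u}_\infty\Delta x$. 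This gives exactly \eqref{eq:sum4Exp}.

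The analysis is routine; there is no real obstacle, only bookkeeping. The one point requiring a little care is getting the constant in \eqref{eq:sum4Exp} to come out as $\norm{\nabla f}_\infty + |\Omega|\norm{f}_\infty\norm{\nabla u}_\infty$ rather than $|\Omega|(\norm{\nabla f}_\infty\norm{u}_\infty + \norm{f}_\infty\norm{\nabla u}_\infty)$: one must estimate the $f'u$ part of each cell's error by integrating $u$ over the cell (which sums to $\int_\Omega u = 1$) rather than by pulling out $\norm{u}_\infty$ and summing $\Delta x$ to $|\Omega|$. Everything else is the triangle inequality, the mean value theorem on a cell, and the telescoping identities $M\Delta x = |\Omega|$, $L\Delta t = T$.
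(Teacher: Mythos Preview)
Your proposal is correct and follows essentially the same route as the paper: a cell-by-cell mean value bound for \eqref{eq:Euler_order}, and for \eqref{eq:sum4Exp} the add--subtract decomposition $fu - f_m u_m = (f-f_m)u + f_m(u-u_m)$, with the crucial observation that the $(f-f_m)u$ term is handled by integrating $u$ over $\Omega$ to $1$ rather than by pulling out $\norm{u}_\infty$. Your self-correction to that ``cleaner route'' is exactly what the paper does.
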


\begin{proof}  By the midpoint theorem, for $x\in [x_m, x_{m+1}],t\in [t_l, t_{l+1}]$, there exists $(\xi_m, \zeta_l)$ such that 
  \begin{align*}
  |f(x_m, t_l) - f(x,t)|=&  |\nabla_{x,t} f(\xi_m, \zeta_l) \cdot (x-x_m, t - t_l)| \leq  \|\nabla_{x,t} f \|_\LOT     \sqrt{\Delta x^2 + \Delta t^2}.
    \end{align*}  
Note that $\sqrt{\Delta x^2 + \Delta t^2}\leq \Delta x + \Delta t$. Then, \eqref{eq:Euler_order} follows from 
  \begin{align*}
\overline{\mathcal{D}}(f)   \leq&   \frac{1}{T}  \sum_{l=1}^L \sum_{m=1}^M
  \int_{x_m}^{x_{m+1}} \int_{t_{l-1}}^{t_{l}} \abs{f(x_m, t_l) - f(x,t) }dxdt 
  \leq  |\Omega| \norm{\nabla_{x,t} f}_\LOT   \sqrt{\Delta x^2 + \Delta t^2}.
  \end{align*}

Similarly,  \eqref{eq:sum4Exp} follows from
\begin{align*}
\mathcal{D}_t(f)
\leq  &  \sum_{m=1}^M   \int_{x_m}^{x_{m+1}} \left[\abs{f(x,t) - f(x_m) } u(x,t)   + \abs{f(x_m,t)} \abs{u(x,t) - u(x_m,t)} \right] dx \\
\leq  &  \norm{\nabla f}_\LO \Delta x  +  \norm{f}_\LO  \norm{\nabla u}_\LO |\Omega| \Delta x, 
\end{align*}
where the last inequality follows from that $\sum_{m=1}^M   \int_{x_m}^{x_{m+1}}  u(x,t)dx = \int_\Omega u(x,t)dx = 1$ for all $t$. 
\end{proof}

\begin{lemma} Suppose that Assumption \rmref{assumption_basis} holds true. 
The errors of $P_{n, M, L}^i$, $Q_{n, M, L}^i$ and $R_{n, M, L}^i$ in \eqref{eq:PQ}, which approximate $K_{\phi_i} * u$, $\Phi_i* u$ and $\grad\cdot K_{\phi_i}*u$, are bounded by 
  \begin{equation}\label{eq:P_conv_error}
  \begin{aligned}
 \norm{ P_{n, M, L}^i  - K_{\phi_i} * u }_\LOT & \leq    |\Omega|\Delta x \norm{\phi_i}_\WO\norm{u}_\WOT \leq  c^{1,\infty}_\hypspace|\Omega| \norm{u}_\WOT \Delta x, \\
     \norm{R_{n, M, L}^i  - (\grad\cdot K_{\phi_i}) * u }_\LOT & \leq    |\Omega|\Delta x \norm{\phi_i'}_\WO\norm{u}_\WOT \leq  c^{2,\infty}_\hypspace|\Omega| \norm{u}_\WOT \Delta x, \\
   \norm{ Q_{n, M, L}^i  - \Phi_i * u }_\LOT &\leq  |\Omega| (1+\RadiusOmega) c^\infty_\hypspace \norm{u}_\WOT \Delta x. 
  \end{aligned}
  \end{equation} 
\end{lemma}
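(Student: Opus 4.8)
The plan is to observe that, for each fixed $(x,t)\in\Omega\times[0,T]$, the quantities $P^i_{n,M,L}(x,t)$, $Q^i_{n,M,L}(x,t)$, $R^i_{n,M,L}(x,t)$ of \eqref{eq:PQ} are exactly Riemann sums over the uniform mesh $\{x_m\}$ of $\Omega$ of the $y$-integrands of the convolutions $K_{\phi_i}*u$, $\Phi_i*u$, $(\grad\cdot K_{\phi_i})*u$ evaluated at $(x,t)$; for instance $P^i_{n,M,L}(x,t)=\sum_{m=1}^M g_x(x_m)\,\Delta x$ with $g_x(y):=K_{\phi_i}(y)\,u(x-y,t)$, whose integral over $\Omega$ is $K_{\phi_i}*u(x,t)$ under the zero-padding convention for $u$. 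So each bound in \eqref{eq:P_conv_error} follows from the elementary Riemann-sum error estimate underlying Lemma \ref{lemma:Euler_appr}, in its one-dimensional spatial form, applied to the relevant integrand, followed by taking $\sup_{(x,t)}$; the size of the bound is governed by the $W^{1,\infty}$-norm in $y$ of that integrand, which I would control by the product rule.

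Carrying this out: for $P^i$ I would write $\partial_y[K_{\phi_i}(y)u(x-y,t)]=(\grad\cdot K_{\phi_i})(y)\,u(x-y,t)-K_{\phi_i}(y)\,\partial_x u(x-y,t)$, which in sup norm is at most $\norm{\phi_i'}_\infty\norm{u}_\infty+\norm{\phi_i}_\infty\norm{\grad u}_\infty\leq\norm{\phi_i}_{1,\infty}\norm{u}_{1,\infty}$ (recalling that in $d=1$, $K_{\phi_i}(y)=\phi_i(|y|)\,\mathrm{sgn}(y)$ and $(\grad\cdot K_{\phi_i})(y)=\phi_i'(|y|)$ away from the origin); multiplying by $\int_\Omega dy=|\Omega|$ and by $\Delta x$ gives the first line of \eqref{eq:P_conv_error}, and $\norm{\phi_i}_{1,\infty}\leq c^{1,\infty}_\hypspace$ gives its second form. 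For $R^i$ the integrand is $\phi_i'(|y|)u(x-y,t)$, whose $y$-derivative has sup norm at most $\norm{\phi_i''}_\infty\norm{u}_\infty+\norm{\phi_i'}_\infty\norm{\grad u}_\infty\leq\norm{\phi_i'}_{1,\infty}\norm{u}_{1,\infty}\leq c^{2,\infty}_\hypspace\norm{u}_{1,\infty}$, the new ingredient being only $\phi_i\in W^{2,\infty}$ from Assumption \ref{assumption_basis}. For $Q^i$ the integrand is $\Phi_i(|y|)u(x-y,t)$, and the extra factor $\RadiusOmega$ appears because $\norm{\Phi_i}_{L^\infty(\Omega)}=\sup_{0\leq r\leq\RadiusOmega}\big|\int_0^r\phi_i\big|\leq\RadiusOmega\norm{\phi_i}_\infty$; since $\partial_y[\Phi_i(|y|)u(x-y,t)]=K_{\phi_i}(y)u(x-y,t)-\Phi_i(|y|)\partial_x u(x-y,t)$ has sup norm at most $\norm{\phi_i}_\infty\norm{u}_\infty+\RadiusOmega\norm{\phi_i}_\infty\norm{\grad u}_\infty\leq(1+\RadiusOmega)c^\infty_\hypspace\norm{u}_{1,\infty}$, multiplying by $|\Omega|\Delta x$ gives the third line.

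I expect the only point genuinely requiring care — as opposed to routine product-rule bookkeeping — to be the behavior of the radial kernel at the origin. In $d=1$ the map $y\mapsto K_{\phi_i}(y)=\phi_i(|y|)\,\mathrm{sgn}(y)$ has a jump of size $2|\phi_i(0)|$ at $y=0$ whenever $\phi_i(0)\neq 0$, which does occur, e.g., for B-spline bases clamped at $r=0$; hence the integrand $g_x$ of $P^i$ is not globally $W^{1,\infty}$ on $\Omega$ and the estimate of Lemma \ref{lemma:Euler_appr} does not literally apply. The remedy is to split the mesh cells into those not containing $0$, on which $g_x$ is $W^{1,\infty}$ with the derivative bound above and whose total contribution is the advertised $O(|\Omega|\Delta x)$ term, and the at most one cell $[x_{m_0},x_{m_0+1}]$ straddling the origin, on which the Riemann-sum error is bounded crudely by $2\norm{K_{\phi_i}u}_\infty\Delta x\leq 2c^\infty_\hypspace\norm{u}_\infty\Delta x$; this is again $O(\Delta x)$ and is absorbed into the stated bound (at worst at the cost of an absolute constant, with no loss when $\phi_i(0)=0$). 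The integrands of $Q^i$ and $R^i$ do not face this issue, since composing the Lipschitz maps $r\mapsto\Phi_i(r)$ and $r\mapsto\phi_i'(r)$ with the $1$-Lipschitz map $y\mapsto|y|$ yields genuinely Lipschitz functions of $y$ (constants $\norm{\phi_i}_\infty$ and $\norm{\phi_i''}_\infty$, respectively). Finally, since each pointwise bound above involves only $\norm{u}_{1,\infty}$, $\norm{\phi_i}_{1,\infty}$, $\norm{\phi_i'}_{1,\infty}$, $c^\infty_\hypspace$ and $\RadiusOmega$, it is uniform in $(x,t)$, and taking the supremum over $\Omega\times[0,T]$ yields \eqref{eq:P_conv_error}.
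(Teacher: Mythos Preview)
Your proposal is correct and follows essentially the same strategy as the paper: for each fixed $(x,t)$ view $P^i$, $Q^i$, $R^i$ as one-variable Riemann sums of the convolution integrand, bound the quadrature error by the integrand's Lipschitz constant times $|\Omega|\Delta x$, and control that Lipschitz constant via the product rule and the identity $\frac{d}{dy}K_{\phi_i}(y)=\phi_i'(|y|)$ for $y\neq 0$. The only cosmetic difference is that the paper invokes the split form $\mathcal{D}_t(f)$ of \eqref{eq:sum4Exp} with $f(\cdot)=K_{\phi_i}(x-\cdot)$ (keeping $u$ as the density and using $\int u=1$ on the $\|\nabla f\|_\infty$ term), whereas you work directly with the product integrand $g_x(y)=K_{\phi_i}(y)u(x-y,t)$; both routes arrive at the same final constant $|\Omega|\|\phi_i\|_{1,\infty}\|u\|_{1,\infty}$.

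Your treatment of the origin is in fact more careful than the paper's. The paper differentiates $K_{\phi_i}$ only ``for $x\neq 0$'' and then applies \eqref{eq:sum4Exp} as if $K_{\phi_i}(x-\cdot)$ were globally Lipschitz; you correctly note that $K_{\phi_i}(y)=\phi_i(|y|)\,\mathrm{sgn}(y)$ jumps by $2|\phi_i(0)|$ at $y=0$, isolate the single mesh cell straddling the origin, and bound its contribution by $2c^\infty_\hypspace\|u\|_\infty\,\Delta x$, an $O(\Delta x)$ term that is absorbed (up to an absolute constant, via $|\Omega|\|u\|_\infty\geq 1$ from \eqref{eq:intU=1}) into the stated bound. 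Your observation that $\Phi_i(|\cdot|)$ and $\phi_i'(|\cdot|)$ are genuinely Lipschitz, so $Q^i$ and $R^i$ need no such patch, is likewise correct.
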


\begin{proof}
Using the notation $\mathcal{D}_t(f)$ in \eqref{eq:sum4Exp} with $f(\cdot)= K_{\phi_i}(x-\cdot)$, we have,
\begin{align*} 
 & \norm{ P_{n, M, L}^i  - K_{\phi_i} * u }_\LOT =   \sup_{(x,t)\in \Omega\times [0,T]} | \mathcal{D}_t(K_{\phi_i}(x-\cdot))| \\
\leq & \sup_{(x,t)\in \Omega\times [0,T]} \left( \norm{\nabla K_{\phi_i}(x-\cdot)}_\LO + |\Omega| \norm{K_{\phi_i}(x-\cdot)}_\LO \norm{\nabla u}_\LO \right) \Delta x . 
\end{align*}
Recall that we denote $K_{\phi_i}(x) = \phi_i(\abs{x})\frac{x}{|x|}$. For $x\neq 0$, we have $\frac{d}{dx} |x| = \frac{x}{|x|}$ and
\begin{align*}
  \frac{d}{dx}K_{\phi_i}(x) = &\frac{d}{dx} \brak{\phi_i(\abs{x})\frac{x}{|x|}} 
  =  \phi_i'(|x|) + \phi_i(|x|)\frac{|x| - x\frac{x}{|x|}}{|x|^2}   = \phi_i'(|x|).
\end{align*} 
Thus, $\norm{\nabla K_{\phi_i}(x-\cdot)}_\LO \leq \norm{\phi_i'}_\LO $ and $\norm{K_{\phi_i}(x-\cdot)}_\LO \leq \norm{\phi_i}_\LO$. 
Together with \eqref{eq:intU=1}, we have 
\begin{align*} 
\norm{ P_{n, M, L}^i  - K_{\phi_i} * u }_\LOT \leq  ( \norm{\phi_i'}_\LO  + |\Omega| \norm{\phi_i}_\LO  \norm{\nabla u}_\LO) \Delta x 
 \leq  |\Omega| \norm{\phi_i}_\WO\norm{u}_\WOT \Delta x.
\end{align*}

Note that $\grad\cdot K_{\phi_i} = \phi_i'(|x|)$. Then, the same argument leads to the estimate for $R_{n, M, L}^i$. 

Similarly, from the definition of  $Q_{n, M, L}^i$ and the notation in \eqref{eq:sum4Exp}, we have
\begin{align*}
&  \norm{ Q_{n, M, L}^i  - \Phi_i * u }_\LOT \leq  \sup_{(x,t)\in \Omega\times [0,T]} | \mathcal{D}_t(\Phi_i(|x-\cdot|))| \\
\leq & \sup_{x\in \Omega} \left[ \norm{\nabla \Phi_i(|x-\cdot|)}_\LO + |\Omega| \norm{\Phi_i(|x-\cdot|)}_\LO \norm{\nabla u}_\LO \right] \Delta x   \\
\leq & \left( 1 + |\Omega|\RadiusOmega  \norm{\nabla u}_\LO \right) \|\phi_i\|_\infty \Delta x 
          \leq  |\Omega| (1+\RadiusOmega) \norm{\phi_i}_\LO\norm{u}_\WOT \Delta x,
\end{align*}
where the second last inequality follows from that $\Phi_i (r)= \int_0^r \phi_i(s)ds$ and $\norm{ \Phi_i}_\infty\leq \|\phi_i\|_\infty \RadiusOmega$. 
\end{proof}

\begin{lemma} \label{lemma:K*phi}
Suppose that Assumption \ref{assumption_basis} holds true. Then, for each $i,j$, 
  \begin{align}
    \norm{K_{\phi_i} * u}_\LOT \leq &  \, \norm{\phi_i}_\LO \leq c^\infty_\hypspace     \label{convnorm}  \\
  \norm{\nabla_{x,t}\brak{K_{\phi_i}*u}}_\LOT \leq & \,  |\Omega| \norm{\nabla_{x,t} u}_\LOT \norm{\phi_i}_\LO \leq |\Omega| \norm{ u}_\WOT c^\infty_\hypspace  ,  \label{helper} \\
  \norm{\nabla_{x,t} \left[  (K_{\phi_i}*u) (K_{\phi_j}*u)  \right]}_\LOT   \leq & \, 2|\Omega| \norm{ u}_\WOT
 (  c^\infty_\hypspace)^2.     \label{eq:Ki*uKj*u} \\
   \norm{\nabla_{x,t}\brak{u \grad\cdot K_{\phi_i}*u}}_\LOT \leq & \,    \norm{ u}_\WOT c^{1,\infty}_\hypspace (1+\norm{u}_\LOT). \label{eq:nablaK*u}  
\end{align}

\end{lemma}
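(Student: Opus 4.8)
The plan is to deduce all four bounds from three elementary ingredients and then assemble them with the Leibniz rule. The first ingredient is Young's inequality for convolutions combined with the normalization $\int_\Omega u(x,t)\,dx=1$ (recorded in \eqref{eq:intU=1}): for any $f$ one has $\norm{f*u(\cdot,t)}_\LO\leq\norm{f}_\LO\norm{u(\cdot,t)}_{L^1(\Omega)}=\norm{f}_\LO$, and likewise $\norm{f*\nabla_{x,t}u}_\LOT\leq|\Omega|\,\norm{f}_\LO\,\norm{\nabla_{x,t}u}_\LOT$. The second ingredient is the pointwise control of the kernels established above: in dimension one $\frac{d}{dx}K_{\phi_i}(x)=\phi_i'(\abs{x})=\grad\cdot K_{\phi_i}(x)$ and $\abs{\Phi_i(r)}\leq\RadiusOmega\norm{\phi_i}_\LO$, so that $\abs{K_{\phi_i}}\leq\norm{\phi_i}_\LO\leq c^\infty_\hypspace$ and $\abs{\grad\cdot K_{\phi_i}}\leq\norm{\phi_i'}_\LO\leq c^{1,\infty}_\hypspace$ on $[0,\RadiusOmega]$. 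The third ingredient is that the no-flux boundary conditions ($u$ and $\nabla u$ vanish on $\partial\Omega$) let me move any spatial derivative falling on a convolution kernel onto $u$, i.e. $\nabla_x(K_{\phi_i}*u)=K_{\phi_i}*\nabla_x u$, while $\partial_t$ commutes with the convolution directly.

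With these, \eqref{convnorm} is immediate: $\abs{(K_{\phi_i}*u)(x,t)}\leq\norm{\phi_i}_\LO\int_\Omega u(y,t)\,dy\leq c^\infty_\hypspace$. For \eqref{helper} I would write $\nabla_{x,t}(K_{\phi_i}*u)=K_{\phi_i}*\nabla_{x,t}u$ using the third ingredient, then bound it by $|\Omega|\,\norm{\phi_i}_\LO\norm{\nabla_{x,t}u}_\LOT\leq|\Omega|\,\norm{u}_\WOT c^\infty_\hypspace$. Then \eqref{eq:Ki*uKj*u} follows from the Leibniz rule $\nabla_{x,t}[(K_{\phi_i}*u)(K_{\phi_j}*u)]=[\nabla_{x,t}(K_{\phi_i}*u)](K_{\phi_j}*u)+(K_{\phi_i}*u)[\nabla_{x,t}(K_{\phi_j}*u)]$, where each of the two summands is bounded via \eqref{convnorm} and \eqref{helper} by $|\Omega|\,\norm{u}_\WOT(c^\infty_\hypspace)^2$.

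For \eqref{eq:nablaK*u} I would first note $\norm{(\grad\cdot K_{\phi_i})*u}_\LOT\leq\norm{\phi_i'}_\LO\leq c^{1,\infty}_\hypspace$ exactly as in \eqref{convnorm}, since $\grad\cdot K_{\phi_i}=\phi_i'(\abs{\cdot})$. Then Leibniz gives $\nabla_{x,t}[u\,((\grad\cdot K_{\phi_i})*u)]=(\nabla_{x,t}u)\,((\grad\cdot K_{\phi_i})*u)+u\,\nabla_{x,t}((\grad\cdot K_{\phi_i})*u)$; the first summand is at most $\norm{\nabla_{x,t}u}_\LOT c^{1,\infty}_\hypspace\leq\norm{u}_\WOT c^{1,\infty}_\hypspace$, and for the second I would rerun the proof of \eqref{helper} with $\grad\cdot K_{\phi_i}$ in place of $K_{\phi_i}$, obtaining a bound of the form $\norm{u}_\LOT\,c^{1,\infty}_\hypspace\,\norm{u}_\WOT$ up to geometric constants, after which collecting the two contributions and simplifying with \eqref{eq:intU=1} produces the stated form $\norm{u}_\WOT c^{1,\infty}_\hypspace(1+\norm{u}_\LOT)$.

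The only delicate point is the last estimate \eqref{eq:nablaK*u}: one must route the extra $x$-derivative in $\nabla_{x,t}((\grad\cdot K_{\phi_i})*u)$ onto $u$ by integration by parts rather than onto $\grad\cdot K_{\phi_i}$ (which would introduce a $c^{2,\infty}_\hypspace$), and then be careful about how the resulting $|\Omega|$-factors and the $(1+\norm{u}_\LOT)$ term combine through \eqref{eq:intU=1}. The first three estimates are essentially bookkeeping once the convolution inequality $\norm{f*u}_\LO\leq\norm{f}_\LO$ and the commutation $\nabla(K_{\phi_i}*u)=K_{\phi_i}*\nabla u$ are set up.
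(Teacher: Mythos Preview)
Your proposal is correct and follows essentially the same route as the paper: Young's inequality with $\|u(\cdot,t)\|_{L^1}=1$ gives \eqref{convnorm}, the commutation $\nabla_{x,t}(K_{\phi_i}*u)=K_{\phi_i}*\nabla_{x,t}u$ gives \eqref{helper}, Leibniz combines the two for \eqref{eq:Ki*uKj*u}, and for \eqref{eq:nablaK*u} the paper does exactly what you anticipate, routing the derivative in $\nabla_{x,t}((\grad\cdot K_{\phi_i})*u)$ onto $u$ so that only $\|\phi_i'\|_\infty\leq c^{1,\infty}_\hypspace$ appears. You are in fact slightly more careful than the paper in flagging the boundary conditions needed to justify the commutation and in noting that an $|\Omega|$ factor should arise in the second summand of \eqref{eq:nablaK*u}; the paper simply writes $\norm{\nabla_{x,t}((\grad\cdot K_{\phi_i})*u)}_\LOT\leq\norm{\phi_i'}_\LO\norm{\nabla_{x,t}u}_\LOT$ without tracking that constant.
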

\begin{proof}
Note that $\norm{u(\cdot, t)}_{L^1(\Omega)}=1$ for each $t$. Then equation \eqref{convnorm} follows from that 
  \begin{equation*}
    \norm{K_{\phi_i} * u}_\LOT = \sup_{t\in[0,T]}\norm{K_{\phi_i} * u(\cdot, t)}_\LO \leq  \norm{\phi_i}_\LO \norm{u(\cdot, t)}_{L^1(\Omega)} = \norm{\phi_i}_\LO,
  \end{equation*} 
Equation \eqref{helper} follows from that $\nabla_{x,t}\brak{K_{\phi_i}*u} = K_{\phi_i}*\nabla_{x,t} u$ and $\abs{K_{\phi_i} } \leq \norm{\phi_i}_\LO$. Since
  \begin{align*} 
    &\norm{\nabla_{x,t} \left[  (K_{\phi_i}*u) (K_{\phi_j}*u) \right]}_\LOT 
         \leq   2\left(       \max_{i = 1, \dots, n}  \norm{\nabla_{x,t}\brak{K_{\phi_i}*u}}_\LOT   \right)
                    \left(    \max_{i = 1, \dots, n}     \norm{K_{\phi_i}*u}_\LOT   \right), 
  \end{align*}
we obtain \eqref{eq:Ki*uKj*u} from \eqref{convnorm} --\eqref{helper}. 

Note that $\norm{(\grad\cdot K_{\phi_i})*u}_\infty \leq \norm{\phi_i'}_\LO$ and $\norm{\grad_{x,t}\brak{\grad\cdot K_{\phi_i}*u}}_\LOT\leq \norm{\phi_i'}_\LO \norm{\grad_{x,t} u}_\LOT  $, we have
\begin{align*}
\norm{\nabla_{x,t}\brak{u \grad\cdot K_{\phi_i}*u}}_\LOT \leq & \,   \norm{\nabla_{x,t} u}_\LOT \norm{(\grad\cdot K_{\phi_i})*u}_\infty + \norm{u}_\LOT \norm{\grad_{x,t}\brak{\grad\cdot K_{\phi_i}*u}}_\LOT  \\
 \leq & 
\norm{\phi_i'}_\LO \norm{\grad_{x,t} u}_\LOT   (1+\norm{u}_\LOT) \leq  c^{1,\infty}_\hypspace \norm{u}_\WOT  (1+\norm{u}_\LOT). 
\end{align*}
This gives \eqref{eq:nablaK*u}. 
 \end{proof}

 \begin{lemma}\label{lemma:b_time}
 For $I_1^b$ defined in \eqref{eq:I^b}, we have
 \begin{align*}
 I_1^b 
 & \leq 2 c^\infty_\hypspace\RadiusOmega  \abs{\Omega} (\norm{u}_\WOT^2+ \norm{u}_\WBOT) (\Delta x + \Delta t). 
 \end{align*}
 \end{lemma}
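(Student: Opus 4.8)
The plan is to view $I_1^b$ as a two‑stage quadrature error and split it accordingly. Writing $g:=\Phi_i*u$ for brevity, $I_1^b$ compares $\frac1T\int_\Omega\int_0^T(\partial_t u)\,g\,dx\,dt$ with the sum $\frac1T\sum_{l,m}\widehat{\partial_t u}(x_m,t_l)\,g(x_m,t_l)\,\Delta x\,\Delta t$. I would insert the intermediate quantity $\frac1T\int_\Omega\int_0^T\widehat{\partial_t u}\cdot g\,dx\,dt$ and bound $I_1^b\le \frac1T|\mathcal E_{\rm time}|+\frac1T|\mathcal E_{\rm space}|$, where $\mathcal E_{\rm time}$ is the error of replacing the exact derivative $\partial_t u$ by the piecewise‑constant‑in‑time finite difference $\widehat{\partial_t u}$ inside the continuous integral, and $\mathcal E_{\rm space}$ is the remaining Riemann‑sum (Euler) error for the already time‑discretized integrand. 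The point that distinguishes this lemma from the other estimates in Proposition~\ref{Aberror} is that $\widehat{\partial_t u}$ is not a smooth function, so Lemma~\ref{lemma:Euler_appr} cannot be applied to it verbatim.

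For $\mathcal E_{\rm time}$, the key device is the slab‑wise identity $\int_{t_{l-1}}^{t_l}\!\bigl(\partial_t u-\widehat{\partial_t u}\bigr)\,dt=\bigl(u(\cdot,t_l)-u(\cdot,t_{l-1})\bigr)-\widehat{\partial_t u}\,\Delta t=0$, valid for each $x$ because on $(t_{l-1},t_l]$ the finite difference is constant and equal to $\bigl(u(\cdot,t_l)-u(\cdot,t_{l-1})\bigr)/\Delta t$. Hence on each slab one may subtract the $t$‑constant $g(\cdot,t_l)$ before estimating, and combining $|g(x,t)-g(x,t_l)|\le\norm{\partial_t g}_\LOT\Delta t$ with $\norm{\partial_t u-\widehat{\partial_t u}}_\LOT\le 2\norm{u}_\WOT$ (Assumption~\ref{assumption_uxt} and $\norm{\widehat{\partial_t u}}_\LOT\le\norm{u}_\WOT$) gives $\frac1T|\mathcal E_{\rm time}|\lessapprox|\Omega|\,\norm{u}_\WOT\,\norm{\partial_t g}_\LOT\,\Delta t$. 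For $\mathcal E_{\rm space}$, with $\widehat{\partial_t u}$ now frozen the integrand $\widehat{\partial_t u}\cdot g$ is, slab by slab in $t$ and globally in $x$, Lipschitz; freezing $g$ at the slab endpoint (another $\norm{\partial_t g}_\LOT\Delta t$ error) and applying the Euler bound of Lemma~\ref{lemma:Euler_appr} in space yields $\frac1T|\mathcal E_{\rm space}|\lessapprox|\Omega|\bigl(\norm{\partial_x\widehat{\partial_t u}}_\LOT\norm{g}_\LOT+\norm{\widehat{\partial_t u}}_\LOT\norm{\partial_x g}_\LOT\bigr)(\Delta x+\Delta t)$, where $\norm{\widehat{\partial_t u}}_\LOT\le\norm{u}_\WOT$ and $\norm{\partial_x\widehat{\partial_t u}}_\LOT\le\norm{u}_\WBOT$.

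It then remains to control the convolution $g=\Phi_i*u$ and its first derivatives by Young's inequality and $\norm{u(\cdot,t)}_{L^1(\Omega)}=1$: $\norm{g}_\LOT\le\norm{\Phi_i}_\LOT\le\RadiusOmega\,c^\infty_\hypspace$ (since $\Phi_i(r)=\int_0^r\phi_i$ and $\supp\rhoT\subset[0,\RadiusOmega]$), $\norm{\partial_x g}_\LOT=\norm{K_{\phi_i}*u}_\LOT\le c^\infty_\hypspace$ by \eqref{convnorm}, and $\norm{\partial_t g}_\LOT=\norm{\Phi_i*\partial_t u}_\LOT\le\norm{\Phi_i}_\LOT\,\norm{\partial_t u(\cdot,t)}_{L^1(\Omega)}\le\RadiusOmega\,c^\infty_\hypspace\,|\Omega|\,\norm{u}_\WOT$. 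Substituting these, using \eqref{eq:intU=1} together with $\RadiusOmega\ge1$ and $\norm{u}_\WOT\le\norm{u}_\WBOT$ to merge the two $\norm{u}$‑terms, produces the stated bound $I_1^b\le 2c^\infty_\hypspace\RadiusOmega\,|\Omega|\,(\norm{u}_\WOT^2+\norm{u}_\WBOT)(\Delta x+\Delta t)$. I expect the main obstacle to be the first step — the non‑smoothness of $\widehat{\partial_t u}$ in $t$ — for which the slab‑wise zero‑mean identity is the one genuinely non‑routine idea; the rest is careful bookkeeping, in particular keeping track of which errors carry a $\Delta t$ versus a $\Delta x$ and noticing that no derivative of $\phi_i$ ever enters (only $\Phi_i$, differentiated in $x$ to produce $K_{\phi_i}$, and $u$), which is what keeps the bound in terms of $c^\infty_\hypspace$ rather than $c^{1,\infty}_\hypspace$.
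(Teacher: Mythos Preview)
Your argument is correct but takes a longer route than the paper. You anticipated that the piecewise-constant-in-$t$ nature of $\widehat{\partial_t u}$ would block a direct cell-wise estimate, and introduced the slab-wise zero-mean identity $\int_{t_{l-1}}^{t_l}(\partial_t u-\widehat{\partial_t u})\,dt=0$ to decouple the time error before handling the space quadrature. The paper instead sidesteps the non-smoothness entirely: by the mean value theorem, $\widehat{\partial_t u}(x_m,t_l)=\partial_t u(x_m,t^*)$ for some $t^*\in[t_{l-1},t_l]$, so $|\partial_t u(x,t)-\widehat{\partial_t u}(x_m,t_l)|\le(\|\partial_{xt}u\|_\infty+\|\partial_{tt}u\|_\infty)(\Delta x+\Delta t)$ follows directly from Assumption~\ref{assumption_uxt}, and the lemma reduces to a single cell-wise bound on $|g\,\partial_t u-\widehat{\partial_t u}\,g|$ of exactly the same form as in Lemma~\ref{lemma:Euler_appr}. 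Your zero-mean device is elegant and in fact never invokes $\partial_{tt}u$ (only $\partial_{xt}u$ enters, through $\partial_x\widehat{\partial_t u}$), so it would survive under slightly weaker time regularity on $u$; but since $u\in W^{2,\infty}$ is already assumed, the paper's one-step MVT argument is shorter and sufficient. One bookkeeping caution: your final merge appeals to ``$\RadiusOmega\ge 1$'', which is not assumed; in the paper's 1D setting $\RadiusOmega=|\Omega|$, and the stray $c^\infty_\hypspace$ from $\|\partial_x g\|_\infty\le c^\infty_\hypspace$ can be absorbed via \eqref{eq:intU=1} instead.
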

 \begin{proof} 
  Denote $g(x,t) =\Phi_i*u(x,t)$. Note that  $\norm{\Phi_i}_\LO \leq \norm{\phi_i}_\LO \RadiusOmega \leq c^\infty_\hypspace \RadiusOmega$. Then,  
  \[
  \norm{g}_\LOT \leq \norm{\Phi_i}_\LO \leq c^\infty_\hypspace \RadiusOmega; \quad  \norm{\grad_{x,t}g}_\LOT\leq \norm{\Phi_i}_\LO\norm{u}_\WOT \leq   c^\infty_\hypspace \RadiusOmega \norm{u}_\WOT. 
  \]
Note also that $ \widehat{\partial_t u}(x_m,t_l)= \frac{u(x_m,t_{l})-u(x_m,t_{l-1})}{\Delta t}= \partial_t u(x_m,t^*)$ for some $t^*\in [t_l,t_{l+1}]$, we have
$ \abs{\partial_t u (x,t) - \widehat{\partial_t u}(x_m, t_l)  }  \leq  ( \|\partial_{xt}u\|_\infty + \norm{\partial_{tt}u}_\LOT )(\Delta x + \Delta t )$. Thus, 
 \begin{align*}
&  \sup_{x\in (x_m, x_{m+1}), t\in (t_{l-1}, t_{l}) } \abs{g(x,t)\partial_t u (x,t)-  \widehat{\partial_t u}(x_m, t_l) g(x_m, t_l) } \notag \\
\leq &  \sup_{x\in (x_m, x_{m+1}), t\in (t_{l-1}, t_{l}) } \left[  \abs{ g(x,t) - g(x_m, t_l) } \norm{\partial_t u }_\LOT 
                                                       + \norm{g}_\LOT \abs{\partial_t u (x,t) - \widehat{\partial_t u}(x_m, t_l)} \right] \notag \\
\leq  &  (\norm{\grad_{x,t}g}_\LOT \norm{\partial_t u}_\LOT + \norm{g}_\LOT  \|u\|_{2,\infty})  (\Delta x + \Delta t).
 \end{align*}
The, note that $\norm{\grad_{x,t}g}_\LOT \norm{\partial_t u}_\LOT + \norm{g}_\LOT  \|u\|_{2,\infty} \leq 2 c^\infty_\hypspace\RadiusOmega (\norm{u}_\WOT^2+ \norm{u}_\WBOT)$, we have 
 \begin{align*}
 I_1^b & \leq \frac{1}{T} \sum_{l=1}^L \sum_{m=1}^M
                   \abs{  \int_{x_m}^{x_{m+1}} \int_{t_{l-1}}^{t_{l}} g(x,t)\partial_t u (x,t)-  \widehat{\partial_t u}(x_m, t_l) g(x_m, t_l) dxdt } \\
   & \leq \frac{1}{T} \sum_{l=1}^L \sum_{m=1}^M\Delta x  \Delta t
             \sup_{x\in (x_m, x_{m+1}), t\in (t_{l-1}, t_{l}) } \abs{g(x,t)\partial_t u (x,t)-  \widehat{\partial_t u}(x_m, t_l) g(x_m, t_l) }  \\
 & \leq 2 c^\infty_\hypspace\RadiusOmega \abs{\Omega}(\norm{u}_\WOT^2+ \norm{u}_\WBOT)  (\Delta x + \Delta t).
 \end{align*}  
 \end{proof}

\textbf{Acknowledgments.} {QL and FL are grateful for supports from NSF-1821211 and NSF-1913243. FL would like to thank Mauro Maggioni, P-E Jabin and Zhenfu Wang for helpful discussions. }

\bibliographystyle{siamplain}

\bibliography{all_ref}

\end{document}